\newcommand{\@assignment}[0]{Assignment}
\newcommand{\assignment}[1]{\renewcommand{\@assignment}{#1}}
\newcommand{\@supervisor}[0]{}
\newcommand{\supervisor}[1]{\renewcommand{\@supervisor}{#1}}
\newcommand{\@yearofstudy}[0]{}
\newcommand{\yearofstudy}[1]{\renewcommand{\@yearofstudy}{#1}}
\newtheorem{theorem}{Theorem}
\newtheorem{definition}{Definition}
\newcommand{\Csharp}{%
  {\settoheight{\dimen0}{C}C\kern-.05em \resizebox{!}{\dimen0}{\raisebox{\depth}{\#}}}}
\newcommand\given[1][]{\:#1\vert\:}
\author{Omar Tanner}
\title{Multi-Agent Car Parking using\\ Reinforcement Learning}
\begin{document}

\input{common/titlepage.tex}

\pagestyle{plain}

\begin{abstract}

As the industry of autonomous driving grows, so does the potential interaction of groups of autonomous cars. Combined with the advancement of Artificial Intelligence and simulation, such groups can be simulated, and safety-critical models can be learned controlling the cars within. This study applies reinforcement learning to the problem of multi-agent car parking, where groups of cars aim to efficiently park themselves, while remaining safe and rational.

Utilising robust tools and machine learning frameworks, we design and implement a flexible car parking environment in the form of a Markov decision process with independent learners, exploiting multi-agent communication. We implement a suite of tools to perform experiments at scale, obtaining models parking up to $7$ cars with over a $98.1\%$ success rate, significantly beating existing single-agent models. We also obtain several results relating to competitive and collaborative behaviours exhibited by the cars in our environment, with varying densities and levels of communication. Notably, we discover a form of collaboration that cannot arise without competition, and a 'leaky' form of collaboration whereby agents collaborate without sufficient state.

Such work has numerous potential applications in the autonomous driving and fleet management industries, and provides several useful techniques and benchmarks for the application of reinforcement learning to multi-agent car parking.

\textit{\textbf{Keywords:} Reinforcement Learning, Multi-Agent, Deep Learning, Self-Driving, Car Parking, Simulation, Markov Decision Process}

\end{abstract}

\tableofcontents

\chapter*{Acknowledgements}
\label{ch:ack}
\addcontentsline{toc}{chapter}{Acknowledgements}

I would like to thank my supervisor Dr. Paolo Turrini for his close guidance throughout the project, kindly sparing his time to steer the project and provide his wisdom. I would also like to thank Dr. Miguel Alonso Jr. for providing mentorship on the use of reinforcement learning and ML-Agents, advising areas of the implementation. Due to the computational requirements for the project, I extend my thanks to the University of Warwick's Department of Computer Science for providing sufficient compute resources. Finally, I would like to acknowledge Michael Gale for providing the LaTeX template and providing additional high-level advice on writing the report.
\chapter{Introduction} 

\label{ch:introduction}

Self-driving cars are an active area of research and industrial application, with autonomous car parking being an important component of their operation. During parking scenarios, cars may potentially be in very close proximity to other cars, making the contexts in which self-parking systems operate very dense and dangerous. In addition, human drivers exhibit both collaborative and competitive behaviours during car parking, in the interests of minimising congestion and optimising their own parking efficiency, respectively. Thus, self-parking systems are critical to get right, in the interests of safety, congestion and efficiency.

With the advancement of self-driving cars, in the future there may arise groups of mutually autonomous cars, which could potentially communicate with each-other in real-time through the means of, for example, a shared server. Such communication could greatly ease existing estimation techniques (mostly powered by computer vision) for the position, velocity, and other local information of other cars, by communicating it directly. By modelling each autonomous car in a car park as an agent, we have a natural multi-agent system, on which deep reinforcement learning (RL) techniques can be applied, since deep RL has been found to be able to solve multi-agent tasks with real-world complexity. With such a multi-agent abstraction and assumed communication abilities, this project studies how multi-agent reinforcement-learning techniques can be applied to park groups of mutually autonomous cars in ways that are safe, efficient, and logical from the perspective of human drivers. 

Such an approach comes with its own difficulties. One difficulty is that, from the perspective a single agent, the environment (car park) is non-stationary, due to the existence of the other agents. In addition, the environment is highly unforgiving, as a single crash could be fatal. There is an inherent complexity in the behaviour exhibited by the agents, since a balance between competition and collaboration must be met, which may require extensive foresight -- for example, when an agent concedes a mutually desired space to another agent that is far away but slightly closer. Even beyond successfully parking cars without collisions, there is much more to be desired from high-performing models, which must also reduce congestion and optimise the efficiency and 'human-ness' of their behaviour, in order to be trusted by human drivers.

Overall, abstracting groups of self-parking cars as a multi-agent system is innovative and complex, and self-parking systems are critical to get right in the interests of safety, potentially risking human life. Therefore, our project is worthwhile to study.

\section{Related work}

There has been a lot of recent work in the literature on single-agent and multi-agent reinforcement learning (MARL), with MARL being applied to complex problems relating to traffic control and even car parking. Much of this research uses bespoke or existing industry-standard simulations in which RL is applied. However, we could not find studies that model the cars in a car park as a multi-agent system.

\subsection{Single-Agent RL}

For the single-agent case, recent work by \citet{rl-collision-avoidance} (2021) applied RL to the problem of navigating a robot to a goal while avoiding dynamic obstacles. Their problem is similar to the single-agent agent version of our problem, where the goal in our case is a parking space and their robot is a car in our case. Their approach trains an RL algorithm inside simulations using the Stage\footnote{\url{http://playerstage.sourceforge.net}} and Gazebo \cite{gazebo} simulators, and uses lidar sensors around the robot as information of nearby obstacles to avoid them. They harness the PPO \cite{ppo} algorithm to successfully learn a model that controls the robot to avoid the obstacles in real-time. PPO is a highly performant modern deep RL algorithm that uses policy-gradient and actor-critic methods to learn high-performing models in record time and with minimal hyper-parameter tuning. Currently, PPO is one of the most commonly used RL algorithms in practise due to its scalability, data-efficiency and robustness. \citet{rl-collision-avoidance} transferred their model into the real-world and found that their robot avoided real obstacles without fault.

Another recent work by \citet{ppo-self-parking} (2020) applies RL to essentially the single-agent version of our problem, where a single car parks itself in a static environment of moderate complexity. Their approach uses the Unity3D engine to simulate a car park, with a single moving car in the environment as the agent and other stationary (static) parked cars as obstacles. They too successfully apply PPO to learn models that park cars in their environment with more than a $95\%$ success rate, with varying configurations and parking zones.

A more primitive work can be seen by \citet{autoparking-qlearning} (2018), whom applies Q-Learning \cite{qlearning} to a single-agent car parking scenario. While successful, the parking space in their simulation remains fixed, and Q-Learning essentially over-fits to the specific configuration of their environment rather than using sensor data to dynamically avoid obstacles, making the work slightly unrealistic. However, the study formulates the motion of a car in a discrete manner appropriate for the application of Q-Learning, which may be useful to harness when modelling our environment in a discrete manner.

\subsection{Multi-Agent RL}

For the multi-agent case, a recent survey by \citet{marl-survey} (2021) provides an overview of the current developments in the field of MARL, concluding that the latest accomplishments address problems with real-world complexity. Their survey reviews recent advancements in multi-agent deep reinforcement learning (MADRL), showing that the use of deep learning methods has enabled the successful application of RL to problems of real-world complexity, as opposed to tabular problems that earlier approaches are limited to. They also discuss existing challenges in MARL, and methods in which they are addressed. Such problems and their respective solutions are considered in our application of MARL, as we tackle the problem of non-stationary using experience replay, enable communication among the agents via broadcasting, and aid coordination and scalability via independent learners. Such techniques are further discussed in Background.

Also mentioned by \citet{marl-survey} are policy-gradient and actor-critic RL algorithms, including PPO. In a recent study by \citet{ppo-in-ma-games} (2021), multi-agent PPO (MAPPO) - a multi-agent variant of PPO - is investigated, and found to achieve surprisingly strong performance in three popular multi-agent environments. Their study concludes that 'properly configured MAPPO is a competitive baseline for MARL tasks', motivating the use of PPO for MARL tasks.

MARL has also been applied to related problems. \citet{marl-for-traffic-control} (2019) apply MADRL to the problem of adaptive traffic signal control in complex urban traffic networks, describing it as a 'promising data driven approach'. They harness the microscopic traffic simulator (SUMO) \cite{sumo} to simulate urban traffic, and train both traditional Q-Learning based algorithms and their own actor-critic deep RL method in their environments. While agents in their problem control traffic signals at road intersections - different to our agents - their study highlights many of the limitations with traditional Q-Learning algorithms when applied in multi-agent scenarios. In addition, their study highlights the advantages of using deep neural networks (DNNs) with actor-critic methods for MARL tasks.

\subsection{Implementation Methods}

RL algorithms are often trained within simulations, due to their ability to create vast quantities of training data, which is required by RL due to its significant data-inefficiency at present. In the literature, it's common to use either bespoke or existing industry-standard simulations, and interface with them to apply RL algorithms within.

Bespoke simulations are often created using the Unity3D engine\footnote{\url{https://unity.com}}, which is a general-purpose video-game engine that's also harnessed for 2D and 3D simulation work due to its generality and ease of use. In conjunction with the Unity Machine Learning Agents Toolkit (ML-Agents) \cite{ml-agents}, the Unity3D engine enables easy implementation of RL simulation environments, and eases the training of various RL algorithms. \citet{ppo-self-parking} used such a combination of technologies to implement their car parking RL simulation and successfully apply PPO. In addition, \citet{deep-rl-for-traffic-control} (2018) used a similar approach to apply RL to the problem of optimizing vehicle flow through road intersections, also harnessing the Unity3D engine to build a traffic simulator closely based on real-world traffic. They used Python to train their policy-gradient deep RL algorithms, and established an interface between Unity3D and Python using socket programming.

Despite the flexibility a bespoke simulation allows, pre-built simulations are commonly used in the literature for simulating urban traffic in realistic road networks. One such simulation suite is SUMO \cite{sumo}, harnessed by \citet{marl-for-traffic-control} in their application of MARL to large-scale traffic signal control, among many other works relating to traffic management on road networks. While useful for the study of high-level routing and traffic management, the tool isn't suited to every problem and has its limitations as identified by \citet{deep-rl-for-traffic-control}, whom instead built their own traffic simulator using the Unity3D engine to realistically validate their research idea. Since our study is future-facing and the intention of the SUMO suite is to simulate present-day scenarios, it may also not be suitable for our project. Since the author is familiar with the Unity3D engine and it enables the most flexibility, our study takes the route of implementing bespoke simulations using the Unity3D engine, harnessing ML-Agents to simplify the application of RL. Such an approach builds off of the success in \citet{ppo-self-parking}'s highly related work.

\section{Objectives}
\label{sec:objectives}

Our project intends to investigate how to effectively model cars in a car park as a multi-agent system, and apply RL and deep learning techniques to enable the learning of safe, efficient and logical models that autonomously park groups of mutually autonomous cars. To achieve this, the project investigates the following research questions (known as the project's objectives):

\begin{enumerate}
    \item \label{obj-1} How can a car park be realistically abstracted as an environment such that RL can be feasibly applied to learn high-performing models for the cars within?
    \item \label{obj-2} What are the constraints that different RL algorithms impose on the abstraction of the environment, and how to their obtained models differ?
    \item \label{obj-3} To what extent does communication between the cars in the environment vary our obtained models?
    \item \label{obj-4} How much competitive and collaborative behaviour is exhibited by the cars under the control of our models, and how do such behaviours affect their performance?
\end{enumerate}

Such objectives are pursued with the motivation to better understand the performance and limitations of RL-based self-parking systems, and the technologies assumed and required by the autonomous cars within. We also seek to identify and understand the behavioural patterns that emerge among groups of autonomous cars trained with MARL techniques, to assess the interoperability of RL-based self-parking systems with human drivers, and the social acceptability of their behaviour. With such improved understanding, RL-based self-parking systems may be better designed in the future, and their safety and efficiency may be subsequently improved.
\chapter{Background}
\label{ch:background}

\section{Environment}
\label{ch:Environment}

To use RL to learn models that control an agent in an environment, one must first model the environment as a Markov Decision Process (MDP) \cite{comprehensive-survey-marl}. A Markov Decision Process is a tuple $\langle \mathbb{S}, \mathbb{A}, \mathbb{P}, \mathbb{R} \rangle$, where $\mathbb{S}$ is the finite set of states, $\mathbb{A}$ is the finite set of actions, $\mathbb{P} : \mathbb{S} \times \mathbb{A} \times \mathbb{S} \to \mathbb{R}_{[0,1]}$ is the state transition probability function, and $\mathbb{R} : \mathbb{S} \times \mathbb{A} \times \mathbb{S} \to \mathbb{R}$ is the reward function. For episodes in the simulation of bounded length $\tau$ time-steps, an episode starts at time $t = 0$ with an agent in an initial state $s_0 \in \mathbb{S}$, and ends at time $t = \tau - 1$, with each integer time-step $t \in \{0,1,...,\tau - 1\}$. $s_0$ is commonly a probabilistic function of $\mathbb{S}$: $P_0 : \mathbb{S} \to \mathbb{R}_{[0,1]}$, where the environment is instantiated in state $s_0$ with probability $P_0(s_0)$.  The dynamics of an MDP are such that at time $t$ an agent is in a state $s \in \mathbb{S}$, and takes an action $a \in \mathbb{A}$ to result in a state $s' \in \mathbb{S}$ at time $t + 1$ with probability $\mathbb{P}(s,a,s')$, and receive a reward $r \in \mathbb{R}(s,a,s')$ from the environment, where a larger value of $r$ indicates a better state $\to$ action $\to$ resulting state transition. \cite{comprehensive-survey-marl} For fully deterministic environments, the MDP formulation may be simplified, with a state transition function $\mathbb{P} : \mathbb{S} \times \mathbb{A} \to \mathbb{S}$ (as $s'$ is always known from $s$ and $a$), and a reward function $\mathbb{R} : \mathbb{S} \times \mathbb{A} \to \mathbb{R}$.

For environments where an agent only has a limited or probabilistic view of its current state $s$ at time $t$, the environment can be modelled as a Partially Observable Markov Decision Process (POMDP) \cite{pomdps-and-robotics}. A POMDP is a tuple $\langle \mathbb{S}, \mathbb{A}, \mathbb{P}, \mathbb{R}, \mathbb{O}, \Omega \rangle$, where $\mathbb{S}$, $\mathbb{A}$, $\mathbb{P}$ and $\mathbb{R}$ are the same as in the MDP, $\mathbb{O}$ is the finite set of observations the agent can make of the environment, and $\Omega : \mathbb{S} \times {A} \times \mathbb{O} \to \mathbb{R}_{[0,1]}$ is the probability $P(o | s', a)$ that an agent receives an observation $o$ at time $t + 1$ after taking an action $a$ at time $t$ and resulting in state $s'$ at time $t + 1$. \cite{pomdps-and-robotics} For environments in which the observations and state transitions are fully deterministic (known as being fully deterministic), one may simplify $\Omega$ (as $s'$ is always known from $a$), thus $\Omega : \mathbb{S} \to \mathbb{O}$, which may be thought of as a reduction of the actual state of the agent $s \in \mathbb{S}$ to $o \in \mathbb{O}$, which may be useful to compress $s$ when it's more effective to use $o$ than $s$, or to limit the information an agent has of its current state when it's infeasible to know it accurately. Since a fully deterministic POMDP with is equivalent to an MDP $\langle \mathbb{O}, \mathbb{A}, \mathbb{P}, \mathbb{R} \rangle$, we refer to such POMDPs as MDPs for simplicity, and henceforth refer to them as MDPs in this study.

\section{Reinforcement Learning}

\subsection{Q-Learning}
With an MDP modelling the environment, RL algorithms can be applied to learn models controlling agents within it \cite{comprehensive-survey-marl}. An agent's behaviour is described by its policy $\pi$, specifying how the agent chooses its actions given the state, which may be either stochastic $\pi : \mathbb{S} \times \mathbb{A} \to \mathbb{R}_{[0,1]}$ or deterministic $\pi : \mathbb{S} \to \mathbb{A}$. A common goal of an RL algorithm is to maximise an agent's expected discounted future reward at each time-step $t$. To do so, it maintains an estimate of the expected discounted future reward after executing an action $a_t$ in state $s_t$ at time $t$, receiving a reward $r_{t+1}$ at time $t+1$, and following the current policy $\pi$ from time $t + 1$ onwards as a Q-Function:

\begin{equation}
Q^\pi(s_t,a_t) = \mathop{\mathbb{E}}_\mathbb{P}[\sum_{t'=t+1}^{\tau - 1}\gamma^{t' - t - 1}r_{t'}]
\end{equation}

where the discount factor $\gamma \in \mathbb{R}_{(0,1)}$ encodes increasing uncertainty of future rewards and bounds the sum. The optimal Q-Function is defined as $Q^*(s,a) = \max_{\pi}Q^{\pi}(s,a)$, and may be iteratively approximated via Q-Learning \cite{qlearning} (harnessing the Bellman equation): 

\begin{equation}
\label{eq:q-learning}
Q_{t+1}(s_t, a_t) \leftarrow (1 - \alpha)Q_t(s_t, u_t) + \alpha[r_{t+1} + \gamma \max_{a'}Q_t(s_{t+1}, a')]
\end{equation}

where $\alpha \in \mathbb{R}_{(0,1]}$ controls the amount in which the agent updates its estimate of the Q-Function from the feedback (reward) it receives from the environment.

$Q_t$ maintained by Q-Learning has been proven to converge to $Q^*$ when the agent tries all state-action pairs with non-zero probability, and explicitly stores the value of the Q-Function for all such pairs in a Q-Table \cite{qlearning}. In practise, to ensure the agent explores enough of the state-action space to guarantee Q-Learning to converge, an epsilon-greedy policy is used, where the agent selects a random action with probability $\epsilon \in \mathbb{R}_{(0, 1)}$, and the greedy action with probability $1 - \epsilon$. In Q-Learning, $\gamma$, $\alpha$ and $\epsilon$ are known as hyper-parameters. After Q-Learning has converged to $Q^*$, the optimal policy $\pi^*$ can be obtained via the greedy policy $\pi^*(s) = arg\max_a(Q^*(s,a))$, making Q-Learning a popular RL method to learn an optimal policy controlling an agent in an MDP. \cite{comprehensive-survey-marl}

\subsection{Information in MDPs}

As mentioned in \ref{ch:Environment}, fully deterministic POMDPs may reduce their source MDP via $\Omega$ to yield a new MDP with lower dimensionality ($|\mathbb{O}| < |\mathbb{S}|$). However, such a reduction may lose critical information with respect to the optimal policy $\pi^*$ in the original MDP. When such an optimal policy is preserved in the reduced MDP, it is said to have full information, which occurs when $\Omega$ enables sufficient distinction of $\mathbb{S}$ from $\mathbb{O}$ (Definition \ref{def:fi-pomdp}, Theorem \ref{thm:optimal-policy-fi-pomdp}).

\begin{definition}[Full Information Reduced MDP]\label{def:fi-pomdp}
Let $\langle\mathbb{S}, \mathbb{A}, \mathbb{P}, \mathbb{R}, \mathbb{O}, \Omega \rangle$ be a fully deterministic POMDP reducing the deterministic MDP $\langle\mathbb{S}, \mathbb{A}, \mathbb{P}, \mathbb{R} \rangle$, and policies $\pi : \mathbb{S} \to \mathbb{A}$ and $\pi_\Omega : \mathbb{O} \to \mathbb{A}$ denote policies in the MDP and POMDP respectively. If $\forall s \in \mathbb{S}, a \in \mathbb{A} : \pi^*(s,a) = \pi_{\Omega}^*(\Omega(s),a)$, then the POMDP, as well as the reduced MDP $\langle \mathbb{O}, \mathbb{A}, \mathbb{P}, \mathbb{R} \rangle$, are said to have full information.
\end{definition}

\begin{theorem}[Optimal Policy Learning in Full Information Reduced MDPs]\label{thm:optimal-policy-fi-pomdp}
Let $\langle\mathbb{S}, \mathbb{A}, \mathbb{P}, \mathbb{R}, \mathbb{O}, \Omega \rangle$ be a fully deterministic deterministic POMDP with full information extending the deterministic MDP $\langle\mathbb{S}, \mathbb{A}, \mathbb{P}, \mathbb{R} \rangle$, and policies $\pi : \mathbb{S} \to \mathbb{A}$ and $\pi_\Omega : \mathbb{O} \to \mathbb{A}$ denote policies in the MDP and POMDP respectively. Then, $\pi^*$ can always be obtained from $\pi_\Omega^*$.
\end{theorem}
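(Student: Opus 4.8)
The plan is to derive the result almost directly from the full-information condition of Definition \ref{def:fi-pomdp}, with the only genuine content being the observation that the reduction map $\Omega$ is itself a known, deterministic part of the POMDP specification and can therefore be applied at run time. First I would note that the reduced MDP $\langle \mathbb{O}, \mathbb{A}, \mathbb{P}, \mathbb{R} \rangle$ is a bona fide deterministic MDP, so an optimal policy $\pi_\Omega^*$ exists and, by the Q-Learning convergence result discussed in the Q-Learning subsection, is itself obtainable (e.g. as $\pi_\Omega^*(o) = \arg\max_a Q_\Omega^*(o,a)$ once Q-Learning has converged on $\mathbb{O}$). This establishes that $\pi_\Omega^*$ is a concrete object we can hold in hand and build $\pi^*$ from, rather than an abstract existence claim.

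Next I would invoke the hypothesis that the POMDP has full information. By Definition \ref{def:fi-pomdp} this means exactly that $\pi^*(s,a) = \pi_\Omega^*(\Omega(s),a)$ for every $s \in \mathbb{S}$ and $a \in \mathbb{A}$; equivalently, the optimal action in the source MDP depends on the state $s$ only through its observation $\Omega(s)$. The recovery step is then to define the candidate policy $\hat\pi(s) := \pi_\Omega^*(\Omega(s))$ by composing the learned reduced-MDP policy with $\Omega$, and to read off from the full-information equality that $\hat\pi = \pi^*$ pointwise. Since $\Omega : \mathbb{S} \to \mathbb{O}$ is deterministic and supplied as data of the problem, evaluating $\hat\pi$ at any state requires only computing $\Omega(s)$ and querying $\pi_\Omega^*$, so $\pi^*$ is always obtainable from $\pi_\Omega^*$ via the composition $\pi_\Omega^* \circ \Omega$.

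The main thing to get right here is not a hard inequality but a \emph{conceptual} point: \emph{obtaining} $\pi^*$ must be read as constructing it through the composition $\pi_\Omega^* \circ \Omega$, not as re-deriving it from scratch on the larger space $\mathbb{S}$, and the full-information hypothesis is precisely what guarantees that this composition lands on the true optimum of the original MDP rather than merely on a policy that is optimal within each observation class. I would therefore state the composition explicitly and flag exactly where full information is used: were it to fail, $\pi_\Omega^* \circ \Omega$ would still be a well-defined policy on $\mathbb{S}$ but could be sub-optimal, which is exactly the information-loss phenomenon noted before the definition. No further estimates are needed, so the argument closes as soon as the composition is written down.
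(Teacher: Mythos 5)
Your proposal is correct and takes essentially the same route as the paper, whose entire proof is the one-line composition $\pi^* = \pi_\Omega^* \circ \Omega$; your candidate $\hat\pi(s) := \pi_\Omega^*(\Omega(s))$ together with the full-information condition of Definition~\ref{def:fi-pomdp} is exactly this argument spelled out. The extra remarks on Q-Learning convergence and on where full information is needed are sound elaboration but add nothing beyond the paper's composition step.
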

\begin{proof}
$\pi^* = \pi_\Omega^* \circ \Omega$.
\end{proof}

Theorem \ref{thm:optimal-policy-fi-pomdp} implies that if the environment is a reduced MDP with full information, then an RL algorithm can be trained on the reduced MDP (receiving the observations as the state, rather than the definite underlying MDP state) to yield an optimal policy in the underlying MDP. This is useful because it enables the number of possible states of an MDP to be reduced by abstracting it as as a reduced MDP with fewer observations than states, simplifying the task of learning an optimal policy.

\subsection{Deep Reinforcement Learning}

Despite Q-Learning having attractive optimality guarantees, it is often intractable when applied in practise. Deep learning techniques have successfully been applied to RL, vastly improving its scalability. At present, PPO is a state of the art deep RL algorithm, with the ability to learn extremely high performing policies in MDPs. \cite{ppo}

\subsubsection{Deep Q-Learning}

Q-Learning is often impractical when applied in practise, due to it potentially taking too long to visit all of the state-action pairs, or being too expensive to store the whole Q-Table. Q-Learning's application cost is proportional to $|\mathbb{S} \times \mathbb{A}|$ for the MDP on which it is applied, quickly becoming infeasible for complex environments. Thus, deep learning is harnessed to approximate the Q-Function by paramaterising it over weights $\theta$: $Q(s,a;\theta) \approx Q^*(s,a)$, and using a function approximator such as a neural network to optimise a loss function $L(\theta)$ that conforms to the Bellman equation: a technique known as Deep Q-Learning (DQL) \cite{dqn}. 

The neural network approximating the Q-Function is such that the inputs are a set of real numbers $s \in \mathbb{R}^n$ (usually bounded between 0 and 1) corresponding to the state, and the outputs are the values of $Q(s,a) \forall a \in A$ corresponding to the Q-value of each action for the input state. Encoding the state and the actions in such a way resolves the scalability issues of Q-Learning. By way of example, suppose the state set $\mathbb{S}$ of an MDP is $\mathbb{S} = S_1 \times S_2$ where $S_1 = S_2 = \mathbb{N}_{[0,n_s)}$, and the action set is $\mathbb{A} = A_1 \times A_2$ where $|A_1| = |A_2| = \mathbb{N}_{[0,n_a)}$. The Q-Table in Q-Learning has $|\mathbb{S} \times \mathbb{A}| = n_s^2 * n_a^2$, which scales poorly and becomes prohibitively large with e.g. $n_s = 1000, n_a = 10 \implies |\mathbb{S} \times \mathbb{A}| = 100,000,000$. However, we can map the state $(s_1, s_2) \in \mathbb{S}$ to two real numbers $(s_1 / n_s, s_2 / n_s)$ and the action $(a_1, a_2) \in \mathbb{A}$ to two real numbers $(a_1 / n_a, a_2 / n_a)$, yielding a neural network approximating the Q-Table with two input neurons and two output neurons (with appropriate hidden units and layers in-between), which is far more tractable. Note how this mapping resolves vertical scalability issues with Q-Learning, as the neural network would have the same number of input and output neurons regardless of the value of $n_s$ and $n_a$, whereas the Q-Table increases in size proportional to $n_s$ and $n_a$, eventually becoming intractable.

\subsubsection{PPO}
In addition, there are policy-gradient, actor-critic methods such as PPO, which make the policy stochastic $\pi(a|s;\theta)$ by parameterising over weights $\theta$, which are optimised via stochastic gradient ascent methods on a loss function $L(\theta)$ \cite{ppo}. In addition to approximating the agent's policy, we simultaneously approximate the agent's value function, which is the value of $Q(s,a)$ in Q-Learning. `Actor-critic' refers to the combination of these approximators, where the actor is the approximator of the policy, and the critic is the approximator of the value function: both approximated using neural networks with a loss defined for each. The actor decides which action to take, while the critic tells the actor how good its action was and how it should adjust. \cite{rl-book}. \Cref{fig:actor-critic} shows a diagram of the actor-critic architecture \cite{rl-book}.

\begin{figure}[!h]
\centering
\includegraphics[width=0.5\textwidth]{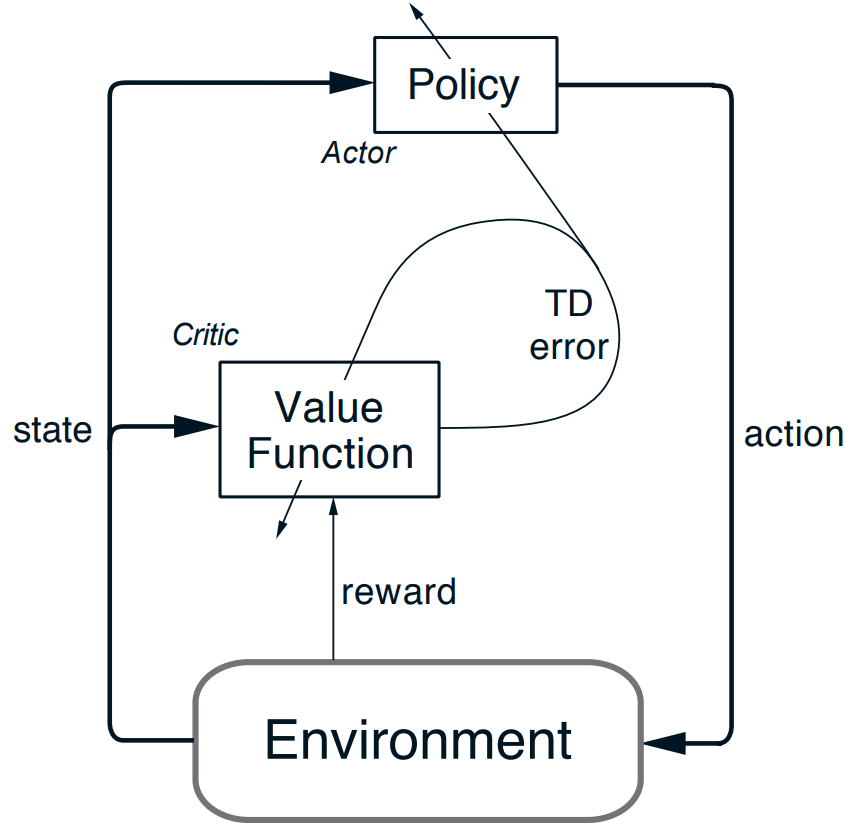}
\caption{The actor-critic architecture, depicted by \citet{rl-book}.}.
\label{fig:actor-critic}
\end{figure}

PPO's loss functions are optimised with respect to a learning rate hyperparameter $\alpha$, and the weights are updated via $\theta \leftarrow \theta - \alpha \triangledown_\theta L(\theta)$, where the loss function is evaluated in an off-policy fashion over a random batch of trajectories/experiences from the experience buffer $\mathbb{B}$, containing tuples of the form $(s_t,a_t,r_t,s_{t+1})$ \cite{off-policy-rl}. Such a process is repeated a number of times determined by the `number of epochs' hyperparameter. The `buffer size' and `batch size' hyperparameters refer to $|\mathbb{B}|$ and the number of random samples from $\mathbb{B}$ during an epoch, respectively. There are also hyperparameters defining the neural network architectures for the actor and critic approximators: specifically the `layers' and `hidden units' hyperparameters referring to the number of layers and hidden units in each layer of the neural networks respectively. In PPO, trajectories are only added to the buffer once they exceed a minimum size, known as the `time horizon'. \cite{ppo} \cite{ml-agents}

While PPO has many more hyperparameters, we have briefly discussed the important ones with respect to our study, and henceforth we utilise it mostly as a black-box tool that seeks an optimal policy $\pi^*$ in our MDP. For more information, the reader is referred to \citet{ppo} (2017), building upon earlier work by \citet{trpo} (2015). 

There are many recommended ranges and baselines for PPO's hyperparameters \cite{ml-agents}, which can be used to tune it via grid-searching techniques. Grid-searching is a basic technique used to optimise a set of $n$~parameters by sampling a subset~$P^{\subset}_i$ of values for each parameter~$i$, i.e. $P^{\subset}_i \subset P_i$ where $P_i$ is all of the possible values for the parameter~$i$, and trying every combination of the parameters~$P^{\subset}_1 \times ... \times P^{\subset}_n$, picking the combination which performs the best.

\section{Multi-Agent Reinforcement Learning}
\label{sec:marl-background}

An extension to the MDP model for environments in which $n$ agents act is the stochastic game, which is a tuple $\langle \mathbb{S}, \mathbb{A}_1, ..., \mathbb{A}_n, \mathbb{P}, \mathbb{R}_1, ...,\mathbb{R}_n \rangle$ where $\mathbb{S}$ is the discrete set of environment states, $\mathbb{A}_i$ are the actions available to the agents yielding the joint action set $\mathbb{A} = \mathbb{A}_1 \times ... \times \mathbb{A}_n$ which is a combined action for all agents, $\mathbb{P} : \mathbb{S} \times \mathbb{A} \times \mathbb{S} \to \mathbb{R}_{[0,1]}$ is the state transition probability function and $\mathbb{R}_i: \mathbb{S} \times \mathbb{A} \times \mathbb{S} \to \mathbb{R}$ are the reward functions for the agents. \cite{comprehensive-survey-marl}

One approach of multi-agent reinforcement learning (MARL) - learning an optimal policy for multiple agents - is to use Q-Learning on a stochastic game, however such an approach suffers from the curse of dimensionality from the exponential growth of the state-action space with respect to the number of agents. In addition, the environment is non-stationary, as an agent's best policy changes as the other agents' policies change. \cite{comprehensive-survey-marl}

In symmetric games, where $A_i$ and $R_i$ are the same for all agents, one can harness independent learners, where each agent learns its own policy independently, and models the other agents as part of the environment dynamics by extending $\mathbb{S} \leftarrow \mathbb{S} \times \mathbb{S}_\perp$ with information of the other agents $\mathbb{S}_\perp$. Q-Learning (and other single-agent RL algorithms such as PPO) could be applied to each agent in the environment independently \cite{marl-for-traffic-control}, potentially sharing the same Q-Table (or neural network for DQL) as the agents are symmetric. Such an approach reduces the exponential blow-up in the state-action space in the symmetric game representation, however the choice of a good $\mathbb{S}_\perp$ may be difficult, and the approach still suffers from non-stationarity.

In practise, PPO extended to multiple agents (MAPPO) has been found to be a `competitive baseline for MARL tasks' \cite{ppo-in-ma-games}, abstracting the environment as a DEC-POMDP with symmetric agents: a scalable multi-agent extension of the POMDP model. However, MAPPO can be outperformed by the far simpler independent PPO (IPPO): PPO with multiple independent learners, which is much easier to model and implement than a DEC-POMDP, making it a simple and attractive approach for MARL tasks \cite{ippo}.

\section{Motion Models}
\label{sec:motion-models}

A common and simple model for the motion of a robot with a translational and rotational velocity is the velocity motion model \cite{probablistic-robotics}. Commonly it is used as a simple model of the the motion of robots, however it can also be used to model the motion of cars for basic simulation purposes. For more advanced models, physical forces and Ackermann-type steering mechanisms \cite{ackermann-steering} can be simulated.

The velocity motion model assumes the robot's position and orientation in the world at time $t$ is specified by $(x_t,y_t,\theta_t)$ where $(x_t,y_t)$ corresponds to its global two-dimensional Cartesian position and $\theta_t$ corresponds to its global rotation in radians. The robot moves via a translational velocity $v_t$ and a rotational velocity $\omega_t$ at time $t$, where $v_t = \frac{\Delta d}{\Delta t}$ and $\omega_t = \frac{\Delta \theta}{\Delta t}$. Assuming the world is fully deterministic with discrete time and quantifying $\Delta t$ as unit time, $\Delta t = 1 \implies v_t = \Delta d$, $\omega_t = \Delta \theta$. Thus, the robot's motion is determined by its translational velocity $v_t$ corresponding to the distance the robot moves forwards (positive) or backwards (negative) during a timestep, and its change in rotation $\Delta \theta_t$ which corresponds the amount the robot's rotation changes during a timestep. Using trigonometry, after a timestep we compute the robot's new position and orientation $(x_{t+1},y_{t+1},\theta_{t+1})$ as:

\begin{equation}
\begin{aligned}
&\theta_{t+1} = \theta_t + \Delta \theta_t \pmod{2\pi} \\
&x_{t+1} = x_t + v_t \sin(\theta_{t+1}) \\
&y_{t+1} = y_t + v_t \cos(\theta_{t+1})
\end{aligned}
\end{equation}

\citet{probablistic-robotics} provide formal derivations of the velocity motion model and more advanced models.

\section{Implementation Methods}
In practice, MDPs are often simulated using a variety of software tools and frameworks. One such tool is the Unity3D engine: a general-purpose video-game engine. Another tool is the Gazebo simulator \cite{gazebo} which is often used in the field of robotics due to its highly realistic simulation of physics. For very simple MDPs, it can suffice to not use a simulation engine at all. RL algorithms and tools maintaining their operation are typically implemented using Python.

\subsection{Unity}

In this study, we harness the Unity3D engine. The Unity3D engine provides abstractions for real-time 2D and 3D modelling and a highly functional editor, enabling the creation of complex and flexible simulation environments via \Csharp\space programming without the concern of the technicalities of low-level graphics programming. In conjunction with the engine, Unity ML-Agents \cite{ml-agents} provides tools for modelling and running MDPs as real-time 2D or 3D simulations for arbitrary numbers of agents, and provides implementations of several key RL algorithms that can be easily applied to the environment, such as single-agent PPO and multi-agent independent PPO. ML-Agents also integrates with tools to aid the analysis of the application of RL algorithms, such as TensorBoard \cite{tensorflow2015-whitepaper}, which provides visualization tools to analyse the application of machine learning algorithms.

In addition to ML-Agents' attractive out-of-the-box functionality, it is also highly extensible. Harnessing its Python Low Level API, one can interface with the environment in Unity from Python, implementing their own RL algorithms in Python or integrating existing implementations. A bi-lateral communication channel between Unity and Python can be established via a Custom Side Channel, enabling the communication of messages between the two. Or, the environment can be instantiated with a set of Environment Parameters, enabling the re-use of a single generic environment rather than re-building it for different configurations. ML-Agents also has functionality to report custom metrics from the environment to TensorBoard, aiding analysis.

\subsection{Python}

RL algorithms and automation suites are often implemented using Python, due to its ease of use and rich collection of scientific computing and machine learning libraries and frameworks. NumPy \cite{numpy} is a popular numerical computation package, and PyTorch \cite{pytorch} is a popular machine learning framework, both frequently used in modern RL implementations.

\chapter{Design}
\label{ch:design}

In this chapter, we describe the overall design of our solution to the problem defined in \Cref{ch:introduction}, building on the work described in \Cref{ch:background}. We begin by describing the high-level methodology used to model our MDP and apply RL within. Then, we provide a formal theoretical construction of our MDP for both single and multiple agents, with varying degrees of dimensionality and control available to the agents. Finally, we introduce the concept of global information, and define our framework for measuring and enforcing collaboration among the agents in our environment.

\section{Methodology}

Inspired by lean practises, we follow a careful methodology of iterative improvement. We iteratively add complexity into our MDP, beginning with a very simple one, and repeating the following three phases continuously:

\begin{enumerate}
    \item Model the environment as an MDP.
    \item Apply RL to our MDP to learn policies (models) controlling the agents within.
    \item Evaluate the models obtained from step 2.
    \item With the results gained from step 3:
    \begin{itemize}
        \item If the obtained models perform sufficiently well, add complexity to the MDP by returning to step 1.
        \item If they did not perform well enough and the MDP was the cause, modify the MDP by returning to step 1. If RL was the cause, re-apply RL in a modified fashion by returning to step 2.
    \end{itemize}
\end{enumerate}

This process ensures that our MDP doesn't become too complicated too quickly, and enables us to evaluate the outcomes of specific changes. When the MDP's state is large, the curse of dimensionality may make the application of RL intractable. In addition, with the presence of many rewards, correlated rewards may arise, making them difficult to optimise. Thus, it's important that one takes a lean approach and the MDP only contains what is necessary, what is achieved by following our process. 

As our MDP grows in complexity, more powerful RL algorithms are required to learn high-performing policies within. We begin by applying Q-Learning to our simple MDP, due to its simplicity and attractive convergence and optimality guarantees \cite{comprehensive-survey-marl}. After the state becomes too large and Q-Learning becomes intractable, we harness deep learning techniques and use deep RL algorithms such as PPO, due to its robustness \cite{ppo}. We apply RL to multiple agents by harnessing independent learners due to their simplicity, applying IPPO as it `performs competitively on a range of state-of-the-art benchmark tasks' \cite{ippo}. While other MARL algorithms such as MAPPO also perform well on MARL tasks \cite{ppo-in-ma-games}, they require much more complex modelling of the environment than independent learners, and don't extend single-agent models as easily as independent learners. Thus, following our lean approach, we harness independent learners until they no longer are feasible.

The astute reader may question the move from Q-Learning to PPO, and believe it's more natural to move to DQL instead of PPO. However, both are applied to similar models of the MDP, and PPO performs better than DQL in practice for following reasons:

\begin{itemize}
    \item Q-Learning based methods (including DQL) fail on many simple problems and is poorly understood \cite{ppo}.
    \item Policy-gradient methods (including PPO) are preferred over Q-Learning based methods in stochastic environments, as Q-Learning based methods become theoretically intractable \cite{deep-rl-for-traffic-control}.
    \item PPO is robust and requires minimal hyper-parameter tuning \cite{ppo}.
\end{itemize}

\Cref{fig:high-level-method} shows a diagram of our high-level methodology.

\begin{figure}[!h]
\centering
\includegraphics[width=\textwidth]{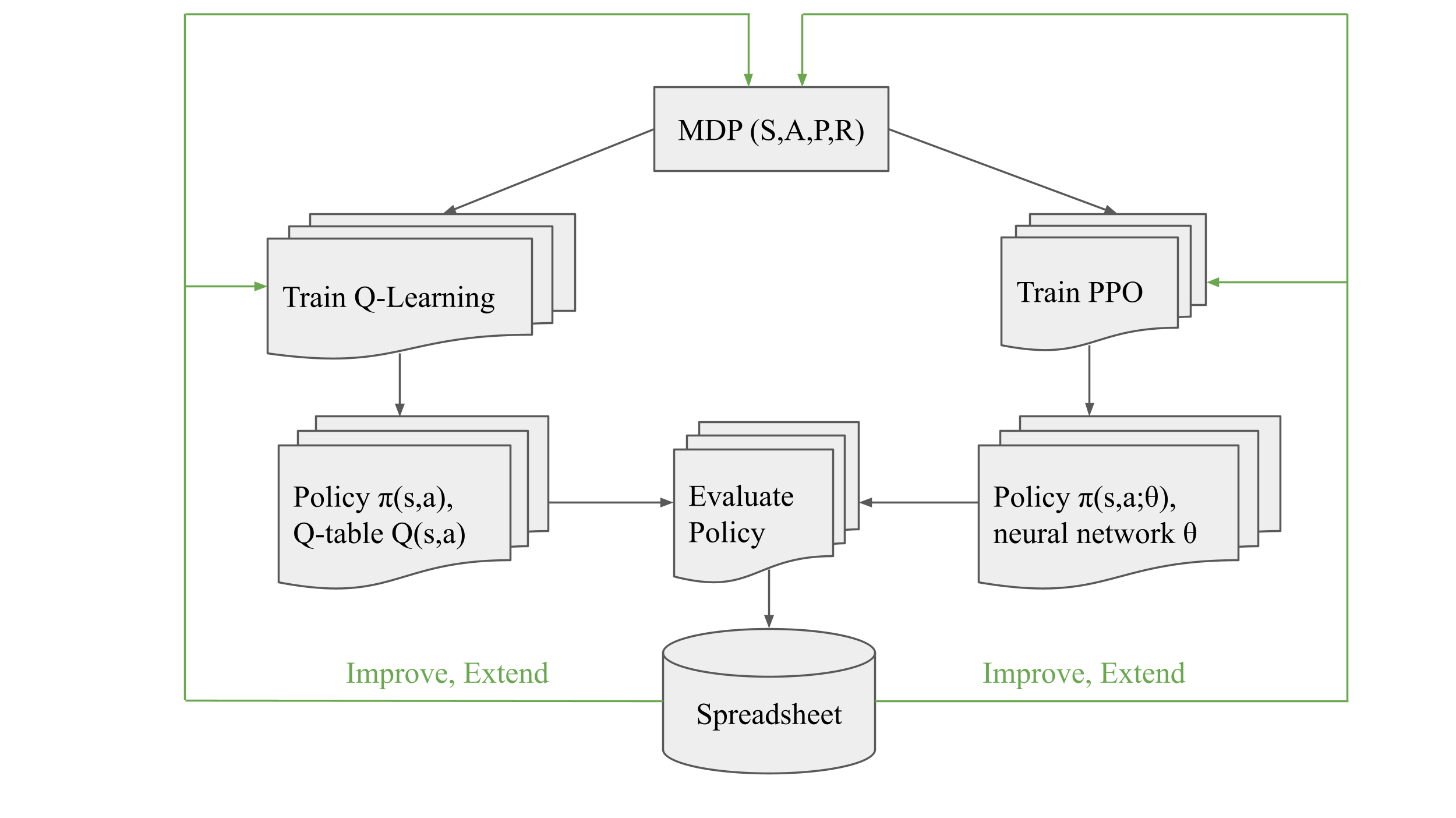}
\caption{Our high-level methodology.}
\label{fig:high-level-method}
\end{figure}

\section{MDP}

This section describes the theoretical construction of our MDP as it evolved throughout the project.

The MDP began discrete with a single agent, suitable for the application of Q-Learning. After Q-Learning became intractable in our MDP, it was extended to be suited to the application of deep RL. After deep RL was successfully applied to our MDP with single agents, it was extended with multiple independent agents.

While the number of agents in the environment was a key factor of its complexity, another factor was whether the agents had fixed or dynamic goals. An agent's goal in our environment is to park in its goal parking space, which may be either fixed and unique (known henceforth as `fixed'), or non-fixed and chosen by the agent at will (known henceforth as `dynamic'). Since dynamic goals offer more flexibility to the agent, the MDP modelling such behaviour is more complex, as the agent requires additional actions to change its goal, among other additions.

In what follows in this section, we begin with the high-level design of the MDP (\Cref{mdp-design}), and discuss our approach to encode the position of an object nearby an agent that exploits spatial symmetry (\Cref{sec:spatial-symmetry}). Then, we detail the theoretical construction of our MDP for:

\begin{enumerate}
    \item Single agents with fixed goals (\Cref{sec:single-agent-mdp});
    \item Multiple agents with fixed goals (\Cref{continuous-mdp-fixedgoals});
    \item Multiple agents with dynamic goals (\Cref{continuous-mdp-dynamicgoals}).
\end{enumerate}

With the MDP defined, we thus discuss collaborative behaviours in our MDP. In \Cref{sec:collaboration}, we introduce the various contexts in which collaboration can arise, and our methods of enforcing and measuring it.

To aid notation in what follows, let $\mathbb{Z}_n$ denote the set of integers with absolute value less than or equal to $n \in \mathbb{N}$, and $\mathbb{Z}_{(n_-,n_+)}$ denote the set of integers in the bounds $[-n_-,n_+]$ for $n_-, n_+ \in \mathbb{N}$. Also, let $\mathbb{N}_n$ denote the set of natural numbers less than or equal to $n$, and let the notation $S|n$ for some set $S$ denote the set of elements in $S$ that are divisible by $n \in \mathbb{N}$.

\subsection{High-Level Design}
\label{mdp-design}

\subsubsection{Schematics}

Firstly, we devise schematics for the car park environment (\Cref{fig:env-schematics}).

\begin{figure}[!h]
\begin{subfigure}[b]{0.2\textwidth}
     \centering
     \includegraphics[width=\textwidth]{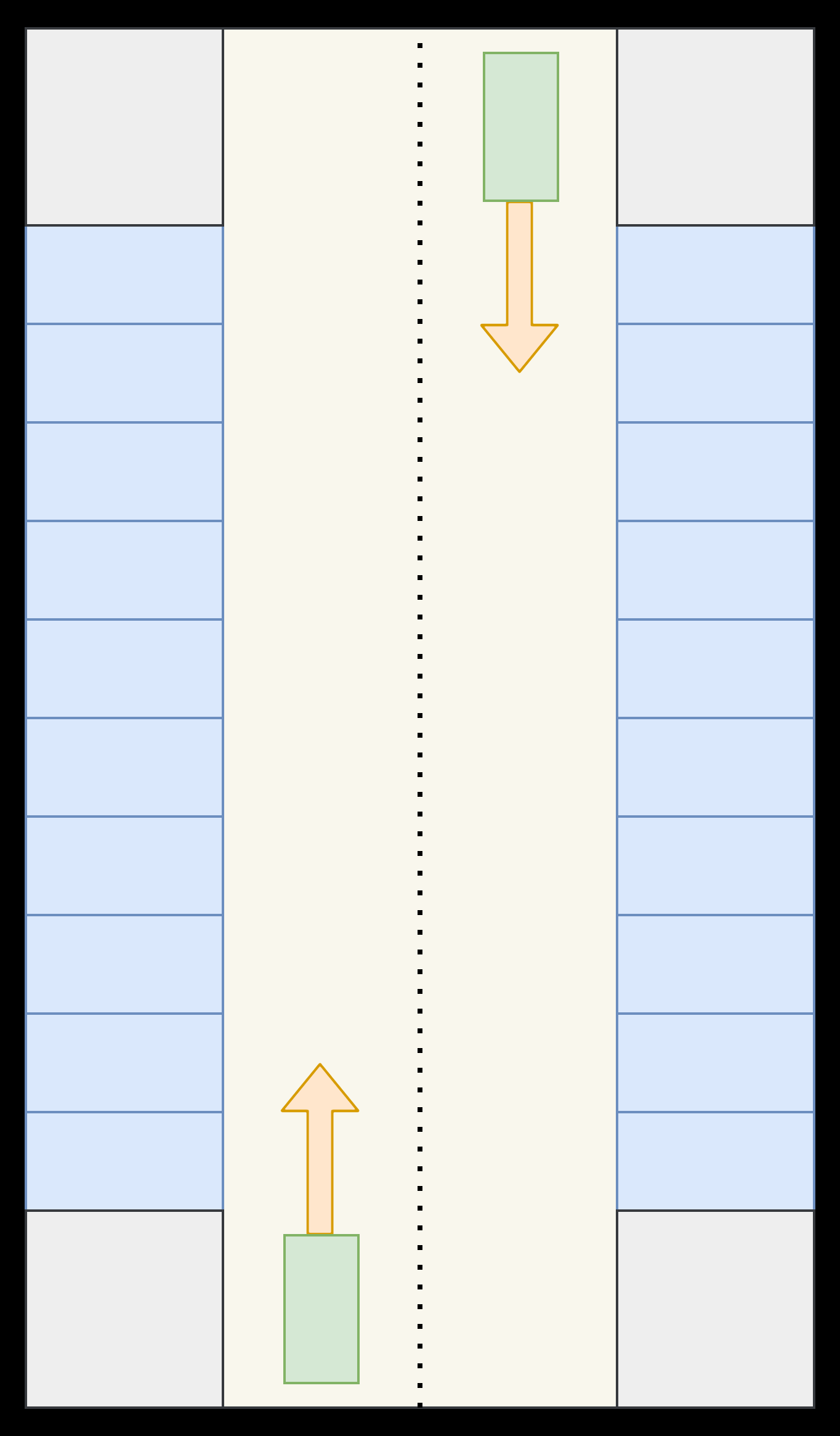}
     \caption{Normal parking environment.}
     \label{fig:normal-parking-env}
\end{subfigure}
\hfill
\begin{subfigure}[b]{0.2\textwidth}
     \centering
     \includegraphics[width=\textwidth]{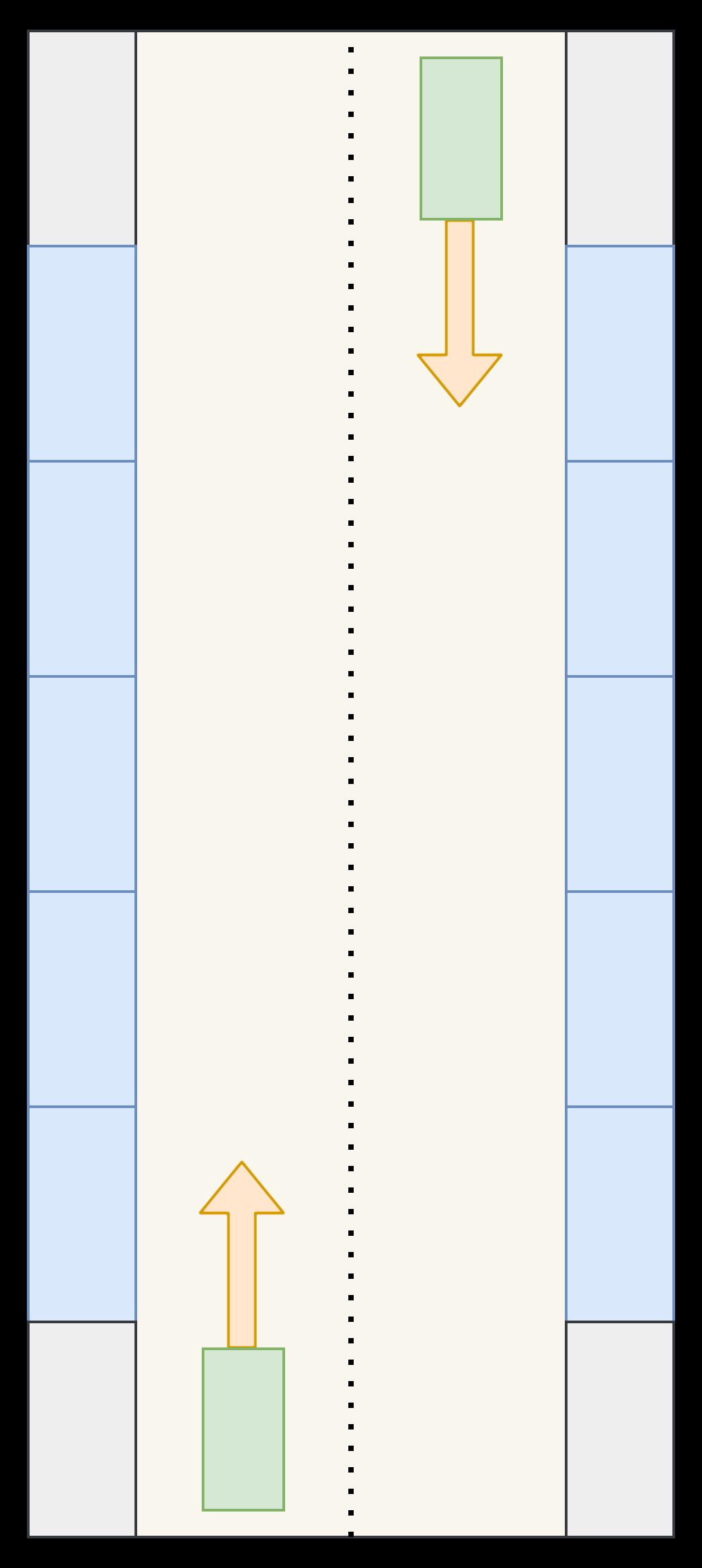}
     \caption{Parallel parking environment.}
     \label{fig:parrallel-parking-env}
\end{subfigure}
\hfill
\begin{subfigure}[b]{0.4\textwidth}
     \centering
     \includegraphics[width=\textwidth]{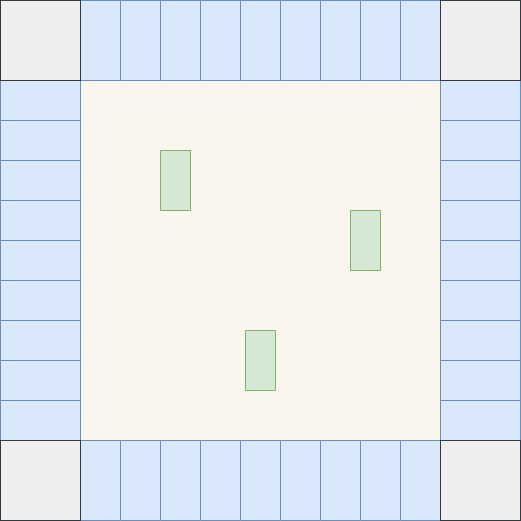}
     \caption{`Free for all' parking environment.}
     \label{fig:ffa-parking-env}
\end{subfigure}
\caption{Environment schematics.}
\label{fig:env-schematics}
\end{figure}

In the schematics, cars are shown by the green rectangles, and must conform to the road directions as shown by the orange arrows. The cars must park in the blue parking spaces, and avoid the grey walls. Schematic (a) represents a normal parking scenario with parking spaces on either side of the road, and Schematic (b) is similar to Schematic (a) but represents a parallel parking scenario. Schematic (c) is significantly different from the others in that it represents a `free-for-all' scenario in which all of the cars can park in any space around it with complete absence of the road rules, and aims to focus on the challenging aspects of multi-agent car parking such as collision avoidance and planning, rather than making the cars conform to predefined road rules. Since Schematic (c) focuses on the more difficult aspects of multi-agent car parking, where the agents have more freedom, it serves as the basis for our environment.

\subsubsection{Parked Cars}
Our environment may also contain stationary cars parked in the blue parking spaces, serving as obstacles. The number of parked cars in the environment vary its complexity, since an environment with lots parked cars has tighter parking spaces, as there is a higher chance that a parked car is neighbouring the parking space that an agent is trying to park in.

\subsubsection{Initial States}

In our environment, each agent has an independent episode with bounded length $\tau$ time-steps, since we harness the approach of independent learners with independent episodes. The initial state $s_0$ at time-step $0$ of an agent's episode is such it's spawned at a random position on the square road separating the parking spaces as shown in \Cref{fig:ffa-parking-env}. Their spawned location is such that it is beyond some threshold distance to any other obstacle, so the agent is spawned at a safe location. Upon spawning, if the agents have fixed goals then we assign the agent a random parking space as its goal, and if the agents dynamic goals we do not assign the agent a parking space and let it pick its own (by spawning it in the `exploring' state, as explained in \Cref{continuous-mdp-dynamicgoals}).

\subsubsection{Terminal States}

An agent's episode ends when it reaches a terminal state, when:
\begin{itemize}
    \item The agent crashes with any obstacle in the environment;
    \item The agent successfully parks;
    \item The time-step of the agent's episode reaches $\tau$.
\end{itemize}

If the agent successfully parked in its episode, then after the agent's episode ends and it re-spawns at a random location, the furthest parked car from all of the agents is moved into the agent's parking space, such that the parking space becomes occupied (since the agent parked in it). We pick the furthest parked car from any other agent so the disappearance of the parked car (since its location is moved) is least likely to be noticed by the other agents, since we assume they have a limited view of the cars in the environment. Moving the parked cars in this manner keeps the number of parked cars fixed in our environment.

\subsubsection{Evolution}

Over time, our environment evolves without `deadlocks'. Here, we refer to a `deadlock' as the situation whereby the environment converging to some particular configuration over time, rather than remaining randomly distributed. One such deadlock that we consider are the positions of the parked cars converging to particular areas/clusters, since we only move parked cars that are the furthest away from the other agents. However, since the positions of the agents in our environment are randomly distributed (due to the fact we spawn them randomly at time-step $0$), our environment is free from such a deadlock, the basis of \Cref{th:no-deadlock}.

\begin{theorem}[Deadlock Free Evolution]
\label{th:no-deadlock}
The parked cars in our environment do not form unbreakable `clusters', which are collections of parking spaces that are all always occupied by parked cars, as long as a free parking space outside the cluster exists.
\end{theorem}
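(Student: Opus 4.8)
The plan is to argue by contradiction: suppose there is an unbreakable cluster $C$ — a set of parking spaces occupied by parked cars at all times — while at least one free parking space lies outside $C$. First I would pin down the only mechanism by which a space can change occupancy. By the evolution rule, parked cars never move spontaneously; a car is relocated only when an agent successfully parks in some free space $Y$, at which point the parked car currently furthest from all agents is moved out of its space $Z$ into $Y$. This is a swap: $Y$ becomes occupied and $Z$ becomes free, so the number of parked cars is preserved, consistent with the design. Consequently, a space belonging to $C$ is freed \emph{exactly} when, at some parking event, the furthest parked car happens to sit in $C$. To contradict unbreakability it therefore suffices to show this event occurs at least once over the run.

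Next I would establish that parking events recur indefinitely. Since a free space outside $C$ exists and agents are continually re-spawned on the road and run episodes that park with positive probability, the environment generates arbitrarily many parking events over an unbounded run. At each such event the agents occupy positions determined by their random spawns, and because the spawn distribution has full support over the admissible road region and is independent across re-spawns, the identity of the furthest parked car is itself a random variable rather than a fixed deterministic choice. The heart of the argument is then to show that some space $s \in C$ is the furthest from all agents with probability bounded below by a constant $p > 0$: intuitively, conditioning on all agents spawning on the side of the road opposite $s$ forces the car in $s$ to maximise its clearance from the agent set, and such a configuration carries positive probability.

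The step I expect to be the main obstacle is precisely this geometric realizability claim. Not every occupied space can be made furthest for an arbitrary arrangement of cars — only spaces extremal with respect to the farthest-point structure relative to the agents' region can ever win, so a cluster buried strictly in the interior could in principle evade selection forever and the statement could fail. I would resolve this by appealing to the layout of the free-for-all environment (\Cref{fig:ffa-parking-env}), in which the parking spaces surround the central road on which the agents live; relative to that road every parking space is peripheral, so for each $s \in C$ there is a positive-probability agent configuration making the car in $s$ the furthest. Granting this, the conclusion follows from an infinite-trials (second Borel--Cantelli) argument: with unboundedly many parking events, each freeing a space of $C$ with probability at least $p$, almost surely some space of $C$ is eventually vacated, contradicting the assumption that $C$ is always occupied. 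Hence no unbreakable cluster can persist while a free space outside it remains, which is the claim of \Cref{th:no-deadlock}.
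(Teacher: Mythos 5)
Your proposal takes essentially the same route as the paper's (explicitly informal) proof in \Cref{appendix:proof-no-deadlock}: the random distribution of agent positions gives every parked car --- in particular one occupying a cluster space --- a positive chance of being the furthest from all agents at a parking event, and since parking events recur and the relocated car is moved into a space that was free (hence outside the cluster), a cluster space is eventually vacated. You merely rigorise that sketch, framing it as a contradiction closed by a second Borel--Cantelli argument and explicitly justifying the geometric realizability of the ``furthest car lies in the cluster'' event via the peripheral layout of the spaces in \Cref{fig:ffa-parking-env}, a point the paper simply asserts without argument.
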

\begin{proof}
See \Cref{appendix:proof-no-deadlock}.
\end{proof}

\Cref{th:no-deadlock} gives us confidence in the implementation of our MDP.

\subsection{Local Object Pose Encoding}
\label{sec:local-object-pose-encoding}

In the state of our MDP, it's necessary that we encode the poses of objects nearby an agent, so they can avoid them. Assume here that the objects being avoided always have the same hitbox, and assume that the positions of the cars in our environment lie on a two-dimensional Cartesian plane. We refer to a car's `global rotation' as the direction it is facing relative to the y axis. \Cref{fig:pose-localisation} shows two cars in green and grey respectively, facing in the direction of the arrows passing through their centers. In order for the green car to avoid the grey car, it must be given sufficient information to determine its pose (position and rotation) with respect to itself.

\begin{figure}[!h]
\centering
\includegraphics[width=0.5\textwidth]{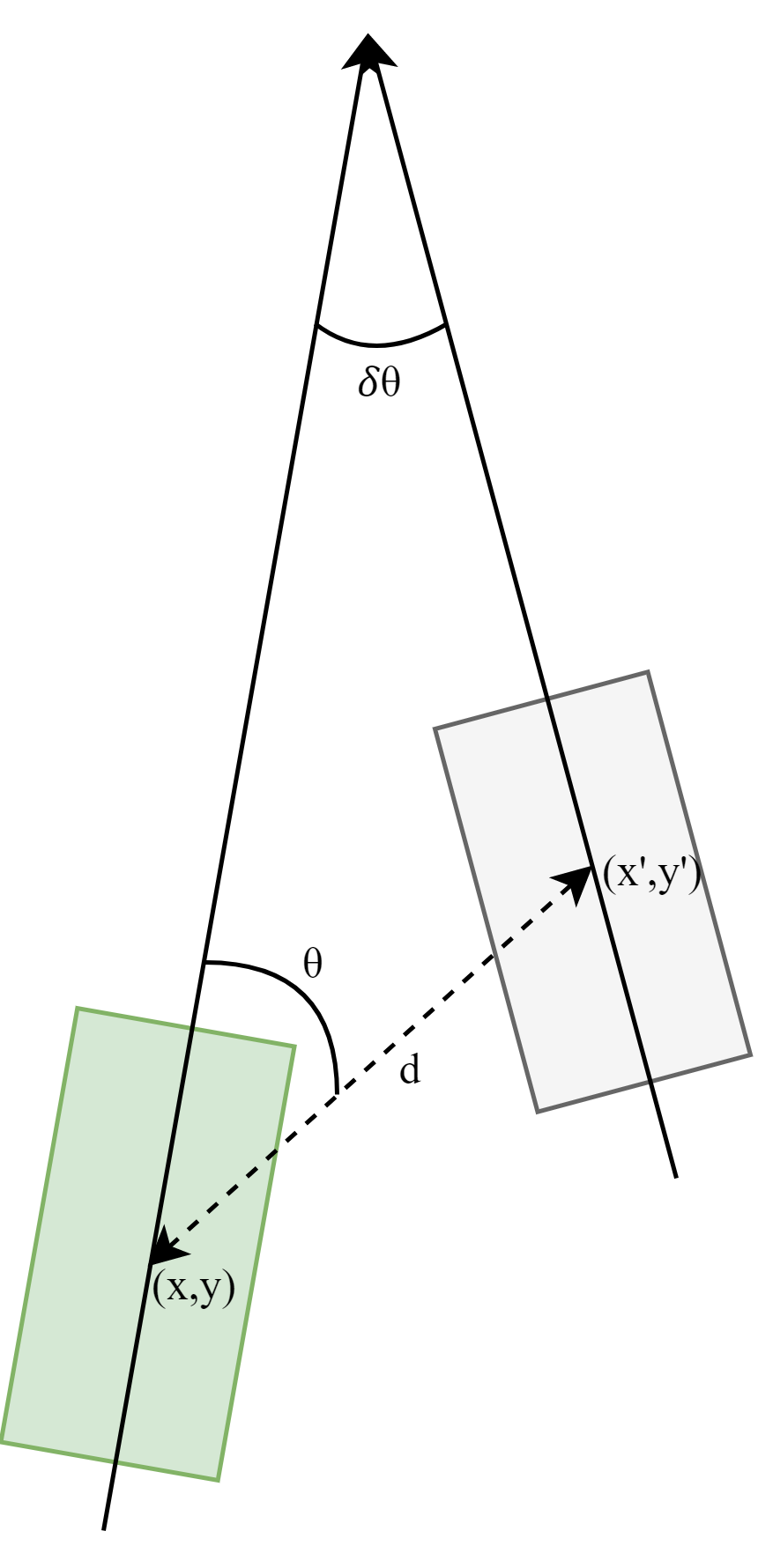}
\caption{Pose information for nearby objects.}
\label{fig:pose-localisation}
\end{figure}

\subsubsection{Naive Approach}

A naive approach to achieve this is to encode the two-dimensional Cartesian coordinates $(x,y)$ and $(x',y')$ into the state of the MDP, as well as the global rotation of both of the cars. However, such an approach isn't invariant to the global position and rotation of the cars, since if the cars are at a different position only differing by a constant $a$ and $b$ on either axis, their new positions of the green and grey car are $(x+a,y+b)$ and $(x'+a,y'+b)$ respectively, different to their original position thus a different state in the MDP. The same occurs after a constant rotation of both of the cars. This is undesirable because with respect to each other (i.e. locally), the cars are in the same position after such transformations, due to spatial symmetry. Ideally the state would be the same after such transformations.

\subsubsection{Localised Approach}
\label{sec:localised-pose}

One way to encode the position and rotation of the grey car with respect to the green car while remaining invariant to their global positions and rotations is to use the green car as a reference, using polar coordinates with respect to the green car's global position and rotation. Specifically, in our diagram, $(d,\theta)$ are the polar coordinates of the point $(x',y')$ of the center of the grey car from the center of the green car at the point $(x,y)$, with $\theta$ taken relative to the green car's global rotation, and $-\delta \theta$ is the difference in global rotation of the two cars (we negate $\delta \theta$ here as it's a counter-clockwise rotation, and we assume positive rotations are clockwise). By encoding $d$, $\theta$ and $-\delta \theta$ into the state of the MDP, the green car has sufficient information to determine the position and pose of the grey car from itself and thus avoid it. This is because the green car has sufficient information determine the position of the grey car with respect to itself, since $(x',y') = (x + d \sin(\theta),y + d \cos(\theta))$. The dimensions of the grey car's hitbox can be obtained since the two cars' hitboxes differ in orientation by $-\delta \theta$ and the grey car's hitbox is centered at $(x',y')$ which we have already shown is obtainable from $(d, \theta)$. Since we assume the objects' hitboxes have the same dimensions, we have sufficient information to use trigonometry to determine the dimensions of the grey car's hitbox. 

\subsubsection{Hitbox Assumption}

Our assumption that the nearby objects have the same hitboxes seems strong, however cars tend to have similar dimensions, and for safety we can set the hitbox to be an upper bound of the size of a car. If we need to encode the poses of another kind of objects with different hitboxes to cars (but the same among themselves), we can use the same encoding but with separate elements of the state, reserving specific elements of the state for different object types.

\subsubsection{Spatial Symmetry}
\label{sec:spatial-symmetry}

Our encoding also exploits spatial symmetry. Specifically, if both cars' positions change only by a constant $a$ and $b$ on either axis, $d$ and $\theta$ are preserved, and if both of the cars rotate by the same constant $c$, $\delta \theta$ is preserved. This is shown in \Cref{appendix:local-object-pose-encoding}. This spatial symmetry vastly improves the generality of our MDP, since we re-use states that are spatially symmetric. Since our MDP has lower dimensionality and the same policy is used for symmetric states, RL algorithms train faster on our MDP.

\subsection{Single-Agent MDP with Fixed Goals}
\label{sec:single-agent-mdp}

With an idea of the environment in place, we next model it a deterministic MDP, for single agents with fixed, unique parking spaces. Let $\langle \mathbb{S}, \mathbb{A}, \mathbb{P}, \mathbb{R} \rangle$ denote the states, actions, transitions and rewards in the MDP respectively. Since the application cost of Q-Learning is proportional to $|\mathbb{S} \times \mathbb{A}|$, we discretise $\mathbb{S}$ and $\mathbb{A}$ as much as possible, while retaining sufficient complexity for the environment to be non-trivial.

\subsubsection{Positions}

Firstly, the positions of the cars are made to lie on a discrete 2-dimensional grid with tunable granularity. To determine a lower bound for the size of the grid, we first discretise the size of the car and parking spaces (\Cref{fig:parking-space-and-car-dimensions}), from which the dimensions of the entire grid are obtained as $74$x$74$ (since its width and height are composed of an equal number of adjacent parking spaces on each side), yielding a total of $5476$ total unique $(x,y)$ positions. Importantly, the width and height of a parking space is at least 2 more than that of a car, so we can fully distinguish between a car being in a parking space and being on its border, and the car and parking space dimensions were designed such that they are in realistic proportion to one-another, yet remaining relatively concise in terms of the number of points. Our grid can be further divided into smaller squares, by dividing each square into four smaller squares, and so on. Thus, we have a granularity factor $G_p \in \mathbb{N}$, which represents the number of times we divided a square into 4 smaller squares. Thus, $\frac{1}{2^{G_p}}$ represents the width and height of a single square, yielding a divided grid with dimensions $(74 * 2^{G_p})$x$(74 * 2^{G_p})$.

\begin{figure}[!h]
\centering
\includegraphics[width=0.3\textwidth]{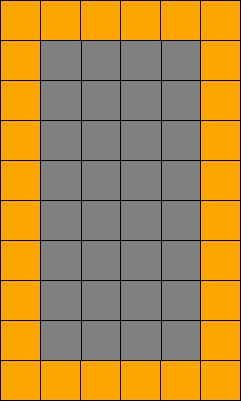}
\caption{Discretised parking space and car dimensions.}
\label{fig:parking-space-and-car-dimensions}
\end{figure}

In our MDP, the center of an agent lies on this discrete grid, and the dynamics round the agent's position to a point on this grid at each time-step.

\subsubsection{Rotations}

Similar to the positions, the rotations of the cars are discretised with a tunable granularity. A car has a global rotation $\theta \in \mathbb{N}_{G_{\theta} - 1}$, corresponding to an angle ${\frac{360}{G_{\theta}} * \theta}^{\circ}$, where $G_{\theta}$ is the rotation granularity. We require that $G_{\theta}$ divides 360, so there are only $G_{\theta}$ possible values of $\theta$ modulo $360^\circ$. For example, $G_\theta = 4$ corresponds to 4 possible angles ${0,90,180,270}^{\circ}$, and $G_\theta = 8$ corresponds to 8 possible angles in increments of $45^{\circ}$.

\subsubsection{Velocities}

Each moving car in our environment has a discrete velocity, which determines distance the car moves at each time-step. A car has a velocity $v \in \mathbb{Z}_{({-v_{max}}^-,{v_{max}}^+)}$, which causes it to move forward $G_v * v$ points at each time-step, where a positive $v$ causes forwards movement and a negative $v$ causes backwards movement (reversing) respectively. Here, $G_v \in \mathbb{N}$ is the granularity of the velocity, existing to scale the velocity in proportion to increases in granularity of the positions.

At each time-step, the dynamics of the MDP applies the velocity motion model as described in \Cref{sec:motion-models} to update the $(x,y)$ position of the car to $(x',y')$, given by:
\begin{equation}
\begin{aligned}
    &d = G_v * v\\
    &x' = x + d\sin(\theta)\\
    &y' = y + d\cos(\theta)
\end{aligned}
\end{equation}



\subsubsection{State}

With the positions, rotations and velocities of the agents in our environment defined, we thus describe the state $\mathbb{S}$ of our MDP. An agent in our MDP is a car with a fixed goal parking space, and its position in $\mathbb{S}$ is localised with respect to the position of its parking space. We localise the pose of the agent's parking space in the same way as cars in \Cref{sec:localised-pose}, and encode in $\mathbb{S}$ the agent's polar coordinates to its parking space $(d_p, \theta_p)$ and the difference in rotation between the parking space and the agent $\delta \theta_p$. We use this approach to exploit spatial symmetry, as explained in \Cref{sec:spatial-symmetry}. In addition to its parking space, $\mathbb{S}$ also contains the localised pose of the $n_{track}$ closest cars to the agent, which we refer to as the `tracked' cars. In the single-agent case, the agent only tracks parked cars.

To discretise the localised pose of an objects, we round their distances to the nearest multiple of an empirically determined granularity constant, and divide the rounded value by that granularity to get them in sequential natural number bounds. For their rotations, we round them to the nearest multiple of $G_\theta$, and divide the rounded value by $G_\theta$ to obtain a valid rotation in $\mathbb{N}_{G_\theta - 1}$.

$\mathbb{S}$ also contains the velocity $v$ of the agent, controlled via the actions, as follows.

\subsubsection{Actions}

An agent has two actions: acceleration $a \in \mathbb{Z}_{a_{max}}$ and angular velocity $\omega \in \mathbb{Z}_{\omega_{max}}$, which correspond to thrust and steering respectively. This is because an arbitrary acceleration $a$ is defined as $a = \frac{\Delta v}{\Delta t}$ changing the velocity by $\Delta v$ over time $\Delta t$, and an arbitrary angular velocity $\omega$ is defined as $\omega = \frac{\Delta \theta}{\Delta t}$, changing the rotation by $\Delta \theta$ over time $\Delta t$. Since we have that $\Delta t = 1$ over a time-step, the acceleration and angular velocity simply correspond to additive changes in velocity and global rotation respectively. Thus, with acceleration $a$, an agent's velocity is updated as:
\begin{equation}
v \leftarrow \left\{\begin{array}{ll}
    v + a & v + a \in \mathbb{Z}_{({-v_{max}}^-,{v_{max}}^+)}\\
    {v_{max}}^+ & v + a \notin \mathbb{Z}_{({-v_{max}}^-,{v_{max}}^+)} \land a > 0 \\
    {-v_{max}}^- & v + a \notin \mathbb{Z}_{({-v_{max}}^-,{v_{max}}^+)} \land a < 0
    \end{array}
    \right.
\end{equation}
which additively increases $v$ and ensures the result stays within the domain of velocities $\mathbb{Z}_{({-v_{max}}^-,{v_{max}}^+)}$ by `clamping' the result to the closest side of the domain. Similarly, with angular velocity $\omega$, an agent's global rotation $\theta$ is updated as: 
\begin{equation}
    \theta \leftarrow \theta + \omega \pmod{G_\theta}
\end{equation}
which additively increases $\theta$ and ensures the result always stays within the domain of possible rotations $\mathbb{N}_{G_\theta - 1}$ via the modulo operator.

\subsubsection{Rewards}
\label{sec:mdp-rewards}

The rewards in our MDP encourage the agent to park in its parking space while not crashing with other cars or obstacles. We also reward the agent such that it drives in a `smooth' manner. In what follows, we refer to `sparse' rewards as rewards that occur infrequently per episode (e.g. once per episode), and `dense' rewards as rewards that occur frequently per episode (e.g. at every time-step). The rewards are as follows:

\begin{itemize}
    \item \textbf{Parking}: the agent receives a large positive reward when it successfully parks in its parking space, which is determined by whether the agent is within some threshold distance to its parking space. However, at the time of parking, this reward is reduced based upon the magnitude of the agent's velocity and the agent's difference in rotation to the parking space, since it's more optimal for the agent to park with zero velocity and to be rotated parallel to its parking space. Specifically, let the reward for successfully parking be $p$, the agent's velocity be $v$, and difference on rotation between the agent and its parking space be $\delta \theta_p$. Then, the agent is rewarded $p - r_v|v| - r_\theta|\delta \theta_p|$ when it reaches its parking space, where $r_p > 0$ and $r_\theta > 0$ quantify the punishment for the agent's velocity and rotation upon parking respectively. 
    \item \textbf{Crashing}: the agent receives a large negative reward (punishment) when it crashes with an obstacle. In our environment, obstacles can be other cars (moving or parked), and walls. Each obstacle has a hitbox, and when the agent's hitbox intersects with the hitbox of an obstacle, it has collided with that obstacle.
    \item \textbf{Dense Time}: the agent receives a dense small constant punishment at each time-step, such that an optimal policy in our MDP reaches the parking space in the fewest number of steps, since an optimal policy maximises the accumulated reward per episode.
    \item \textbf{Dense Movement Towards Goal} the agent receives a dense positive or negative reward at each time-step, based upon whether it moved towards or away from its parking space respectively. This reward turns the originally sparse reward for reaching the parking space into a dense reward, providing the agent more information and thus speeding up the training process, a commonly used technique \cite{rl-collision-avoidance} \cite{ppo-self-parking}. In practise we discovered an important constraint on the size of this reward in relation to the dense time reward, as identified by sub-optimal paths to the parking space being learned. We require that the magnitude of this reward is less than the magnitude of the dense time reward, so the agent still seeks to take the least amount of steps. Specifically, let the magnitude of the dense time punishment be $t$, and the magnitude of the dense movement towards goal punishment be $m$. We must have that $m < t$, otherwise when the agent moves towards the goal we would have that $m + t > 0$, resulting in the optimal policy taking the longest path towards the goal, since it would optimise the accumulation of $m + t$. Such a sub-optimal path may be a zig-zag that's always closer to the goal after every time-step.
    \item \textbf{Smoothness}: the agent receives a dense punishment at each time-step with respect to the magnitude of its angular velocity $\omega$, such that the agent drives smoothly. Smaller angular velocities result in the agent turning less, yielding in smoother motion. In addition, we can incorporate the agent's velocity into this punishment, since larger turns are more dangerous at higher velocities. Thus, we may also punish the agent proportional to $|\omega| |v|$, making the agent reduce its velocity while turning.
\end{itemize}

\subsubsection{Rings}

In our MDP, the agent currently has no way to sense nearby objects that aren't cars, and thus avoid them. Thus, inspired by lidar, we extend the state with `rings', which provide distance of nearby obstacles to the agent. Specifically, we attach $n_r$ rings around the agent with different diameters, and the $i$'th ring reports a count $r_i$ of the number of obstacles inside it. We bound each ring to be able to report a maximum of $n_o$ obstacles inside it, thus the state is extended with $(r_1,...,r_{n_r}) \in {\mathbb{N}_{n_o}}^{n_r}$. \Cref{fig:rings} shows a car with 9 equidistant rings around it.

\begin{figure}[!h]
\centering
\includegraphics[width=0.2\textwidth]{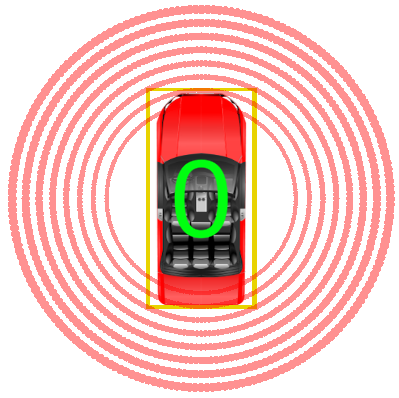}
\caption{Agent with 9 equidistant rings around it.}
\label{fig:rings}
\end{figure}

Such rings are utilised to sense nearby walls, and with appropriate values of $n_o$ and $n_r$, they're a cheaper way of encoding the poses of nearby cars. However, they do not provide full information of the pose of nearby obstacles, since the agent cannot know the angle of the obstacle within its ring, as the rings only provide distance approximations. In addition, it may be non-trivial to determine appropriate values of $n_o$, $n_r$ and the diameters of the rings such that they provide sufficient information.

One approach to tackle the problem of pose estimation with the rings is to encode historical ring states into the state of the MDP, a similar approach used by \citet{rl-collision-avoidance}. The idea behind such historical states is that when the agent moves, an obstacle may move through its rings, which is captured in the historical states, thus the agent may be able to approximate the position of the obstacle. Although promising, the dimensionality of the MDP drastically increases exponentially with such an approach, since if we encode $n_h$ historical ring states into the MDP, the rings extend the state of the MDP with $({{\mathbb{N}_{n_o}}^{n_r}})^{n_h}$.


\subsection{Multi-Agent MDP with Fixed Goals}
\label{continuous-mdp-fixedgoals}

With the MDP defined for single agents with fixed goals, we extend it to model our environment with the presence of multiple independent agents, which are moving cars as in \Cref{sec:single-agent-mdp}. As explained in \Cref{sec:marl-background}, independent learners do this by extending the state $\mathbb{S}$ of a single agent with information of the other agents $\mathbb{S}_\perp$.

\subsubsection{Nearby Agents}

$\mathbb{S}$ currently contains the localised pose of $n_{track}$ nearby parked cars. However, since our environment now contains multiple agents, $\mathbb{S}_\perp$ may contain the localised pose of nearby agents, since they are also cars. Now, $\mathbb{S}\times\mathbb{S}_\perp$ contains the localised pose of $n_{track}$ cars, which may be either other agents or parked cars, thus $\mathbb{S}\times\mathbb{S}_\perp$ may contain a mix of parked cars and other agents ($n_{track}$ in total).

\subsubsection{Shared Goals}

In addition to the agent having information of the poses of other nearby agents, it may also have information of their goals.

For each tracked car that's an agent, we may also encode into $\mathbb{S}_\perp$ the localised pose of that agent's goal parking space. Here, we do not re-localise the pose of the other agent's parking space with respect to the agent's reference frame, because the agent has access to the pose of the other agent localised to its reference frame, thus it can determine the other agent's position and rotation with respect to itself. Then, in the same way, it can determine the position of the other agent's parking space with respect to the other agent, and it can compose the two positions to yield the position of the other agent's parking space with respect to the agent's reference frame. It also has sufficient information to determine the pose of the other agent's parking space with respect to itself. This is shown formally as follows.

\begin{figure}[!h]
\centering
\includegraphics[width=0.6\textwidth]{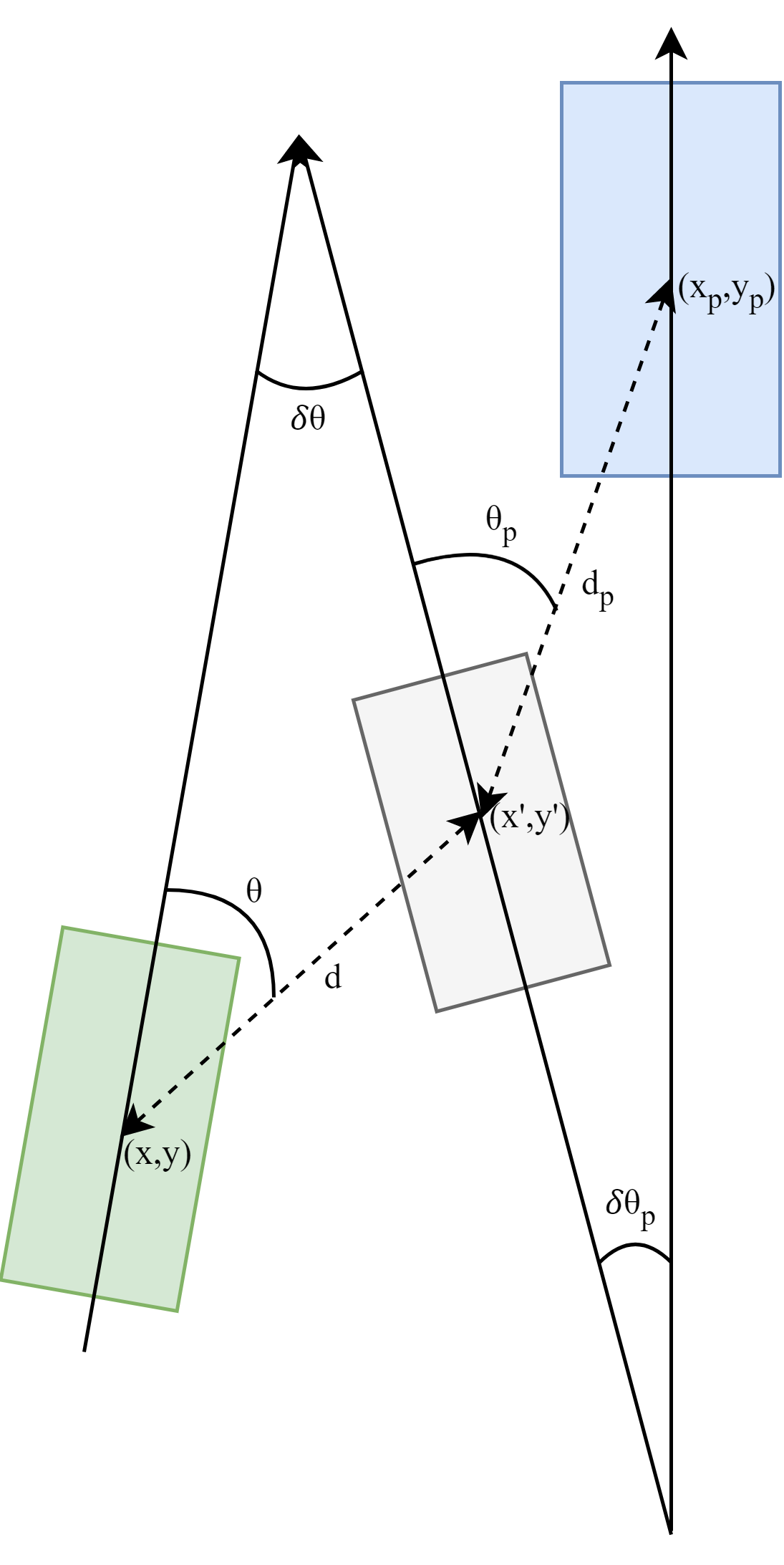}
\caption{Pose and shared goal information for a nearby agent.}
\label{fig:pose-localisation-shared-goal}
\end{figure}

\Cref{fig:pose-localisation-shared-goal} shows an agent in green with a nearby agent in grey, and the grey agent has a goal parking space in blue. As in \Cref{sec:localised-pose}, the grey agent localises the pose of its parking space as ($d_p, \theta_p,  \delta \theta_p)$, and the green agent localises the pose of the grey agent as $(d, \theta, -\delta \theta)$. With shared goals, $\mathbb{S}_\perp$ for the green agent contains ($d_p, \theta_p,  \delta \theta_p)$, which can be used to determine $(x_p, y_p)$ since as shown in \Cref{sec:localised-pose} it can compute $(x',y') = (x + d\sin(\theta), y + d\cos(\theta))$, then $(x_p, y_p)$ can be computed from $(x',y')$ as $(x_p,y_p) = (x' + d_p\sin(\theta_p),y' + d_p\cos(\theta_p))$, thus by substitution:
\begin{equation}
\begin{aligned}
    &x_p = x + d\sin(\theta) + d_p\sin(\theta_p)\\
    &y_p = y + d\cos(\theta) + d_p\cos(\theta_p)
\end{aligned}
\end{equation}
Thus, $(x_p, y_p)$ can be computed from $(x,y)$ and the information available in the state. In addition, the green agent can obtain the change in rotation of the parking space with respect itself as $-\delta \theta + \delta \theta_p$. Thus, the green agent has full information of the pose of the other agent's parking space when sharing goals.

Sharing goals in this manner may enable better path planning, since if an agent knows another agent's goal, it can try to avoid the path that the other agent is likely to take to their goal, reducing the chance of a crash.

\subsubsection{Shared Velocities}

Similar to the agents sharing their goal parking spaces, they may also share their velocities. Since parked cars don't have goals, when sharing goals we do not encode the goals of the parked cars being tracked. However, parked cars do have velocities, namely $0$ velocity, thus when sharing velocities we encode into $\mathbb{S}$ and $\mathbb{S}_\perp$ the velocity of each tracked car in $\mathbb{S}$ and $\mathbb{S}_\perp$, for parked cars and other agents respectively.

Sharing velocities in this manner may also enable better path planning.

\subsubsection{Crash Spawning}

So far, agents in our MDP spawn at random positions at the start of their episodes. However, now that our environment has multiple agents, we can spawn the agents more intelligently such that they're more likely to crash. Doing so may increase the number of of dangerous scenarios exposed to the agents during training, which may increase the safety of the learned policies.

Firstly, let $a_1$ be the agent we are spawning, and $a_2$ be another random agent which we would like $a_1$ to crash with. Let $g_1$ be $a_1$'s new parking space (picked randomly among the free parking spaces), and $g_2$ be $a_2$'s current goal parking space. \Cref{fig:spawn-crash} shows a position where we can spawn $a_1$ such that it's likely to crash with $a_2$. This is because a good policy will direct the agents towards their goals, thus following paths similar to the lines $l_1$ and $l_2$ for $a_1$ and $a_2$ respectively. Since $l_1$ and $l_2$ collide at the point shown in red, known as the `crash point', and the agents' distance to the crash point are $d_1$ and $d_2$ respectively, the two agents are likely to collide when $d_1 = d_2$, since under the same policy (which is the case as we use independent learners) they'd have similar velocities and reach the crash point at the same time.

\begin{figure}[!h]
\centering
\includegraphics[width=0.5\textwidth]{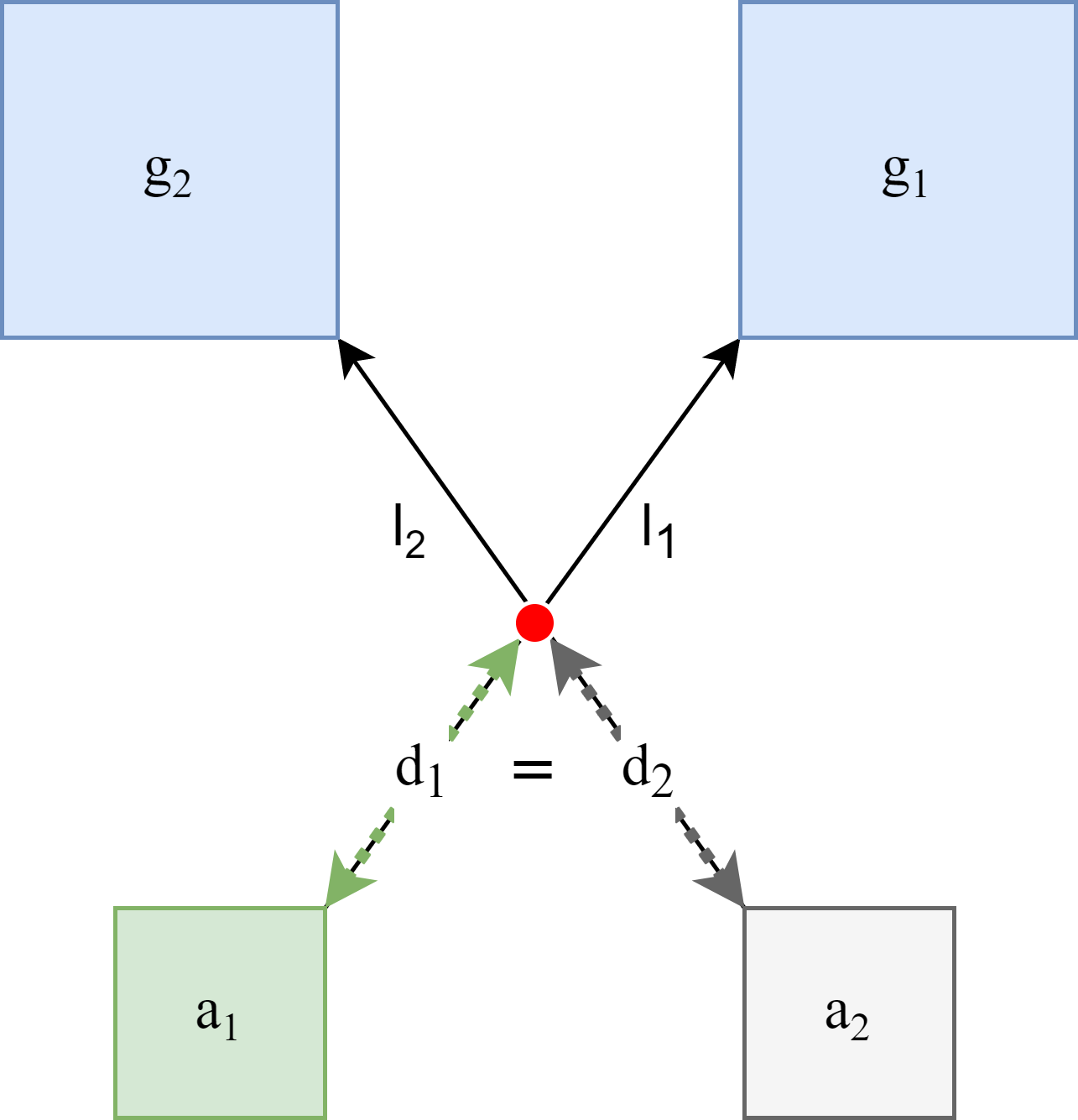}
\caption{Positioning agents such that they're likely to crash.}
\label{fig:spawn-crash}
\end{figure}

We obtain the position of $a_1$ from $a_2$, $g_2$, $d_2$ and $g_1$ in \Cref{fig:spawn-crash} via the following steps:

\begin{enumerate}
    \item Compute the line $l_2$ from the positions of $a_2$ and $g_2$.
    \item Pick the crash point as a random point along $l_2$ such that it is sufficiently far from $a_2$ and $g_2$.
    \item Compute the line $l_1$ from the positions of the crash point and $g_1$.
    \item Extend $l_1$ away from $g_1$ by a distance of $d_2$ from the crash point, to obtain the position of $a_1$.
\end{enumerate}

It should be noted that this method only works when the obtained position of $a_1$ is sufficiently far from the other agents in the environment, so we do not violate our existing random spawning method. Hence, it may take multiple tries of the method to succeed. In addition, we pick a \textit{random} point along $l_2$ and a \textit{random} other agent $a_2$ so the environment retains sufficient randomness in its configurations, a requirement for \Cref{th:no-deadlock} to hold. 

Since there may be many other ways in which the agents can crash, we only apply this method for a small proportion of the agents' initial states, so the configurations of our environment remain sufficiently general and the learned policies do not overfit to the subset of scenarios that we have identified \cite{overfitting}.


\subsection{Multi-Agent MDP with Dynamic Goals}
\label{continuous-mdp-dynamicgoals}

In the formerly described MDPs, the agents have unique fixed goals. However, fixed goals are limiting, for the following reasons:

\begin{itemize}
    \item There may be a better goal for the agent.
    \item The agent may be less likely to crash when pursing a different goal. For example, if the path towards its current goal intersects the path another agent will take towards their goal, the agents may come in close contact, and possibly crash if they do not resolve the conflict. With dynamic goals, one of the agents could change their goal to easily resolve the conflict.
    \item The agent may wish to compete for a parking space that other agents also have as their goal. With fixed goals, no two agents can have the same goal, removing the chance for competition.
    \item The agent may wish to collaborate with another agent by changing their goal if the other agent has the same goal but is closer. Since goal conflicts cannot occur with fixed goals, such collaborative behaviour cannot be exhibited.
\end{itemize}

Thus, we extend our MDP with dynamic goals to overcome such limitations.

\subsubsection{State}

Since the agent now has the ability to change its goal, it needs to know the pose of potential candidate parking spaces to change its goal to. Thus, we encode the localised pose of the $n_{space}$ nearest parking spaces to the agent in the state $\mathbb{S}$ of the MDP, which we refer to henceforth as the `tracked' parking spaces. We assign each tracked parking space a unique index $p_i \in \{1,...,n_{space}\}$ that remains fixed for the duration that the parking space is tracked, and encode the agent's current goal $g \in \mathbb{N}_{n_{space}}$ into $\mathbb{S}$. Here, $g = 0$ means the agent has no goal and is `exploring', and $g = i, i \neq 0$ means the agent's goal parking space is $p_i$.

When sharing goals, the goals of other agents $a'$ are now encoded as $g' \in \mathbb{N}_{n_{space} + 1}$ in agent $a$'s state, where $g' = 0$ means $a'$ is exploring, $g' \in \{1,...,n_{space}\}$ means $a'$ has goal parking space $p_{g'}$, and $g' = n_{space} + 1$ means $a'$ has a goal parking space that is not tracked by $a$.

We reserve $g = 0$ as an exploration state to further increase the generality of our MDP, enabling the agent to not pursue any of its tracked goals if it deems all of them as bad. We may also wish to constrain the agent's field of view such that they're unable to track parking spaces that are too far away, forcing them to explore if they cannot track any parking spaces.

\subsubsection{Actions}

To let the agent change its goal at each time-step, we add the $\delta g \in \mathbb{N}_{n_{space}}$ action to our MDP, which updates the agent's goal to their newly chosen goal (or to explore) via $g \leftarrow \delta g$. Thus, if $\delta g = 0$, the agent is set to explore, and if $\delta g \neq 0$, the agent's goal is set to the parking space $p_{\delta g}$.

\subsubsection{Losing Goals}

Since the dynamics of our MDP updates the position of an agent based upon its velocity and rotation, the parking space that an agent has as its goal may become un-tracked. We refer to such an event as a `lost goal', potentially occurring if:
\begin{itemize}
    \item The parking space is no longer part of the set of the $n_{track}$ nearest parking spaces to the agent;
    \item Another agent parks in the parking space, making it occupied and thus unavailable to the agent.
\end{itemize}

When an agent loses its goal, the dynamics of the MDP force the agent to explore by setting $g = 0$. We force the agent to explore here rather than setting the agent's goal to another parking space because one of the main purposes of dynamic goals are to let the agent choose its own parking spaces.

\subsubsection{Rewards}

Several additional rewards are added to our MDP with dynamic goals.

Firstly, we correct the dense movement towards goal reward (as described in \Cref{sec:single-agent-mdp}) such that it's only applied when the agent has a parking space as its goal, and not when the agent is exploring. Specifically, when $g = i \neq 0$, we reward the agent for moving towards its goal parking space $p_i$, and punish it for moving away from it. When exploring, no such reward or punishment is applied, letting the agent roam freely.

To help the agent progress towards a parking space, we reward the agent based upon its goal transitions $\delta g$. Specifically, let the agent's current goal be $g$ and its new goal be $g' = \delta g$. Let the notation $g \rightarrow g'$ denote a goal transition from $g$ to $g'$, and let $e$ denote exploration (when $g$ or $g' = 0$) and $p$ denote having a goal parking space (when $g$ or $g' \neq 0$). We classify the $g \rightarrow g'$ goal transitions into five categories:

\begin{itemize}
    \item \textbf{Stop Explore}: when the goal transition is $e \rightarrow p$, and the agent transitions from exploring to having a parking space as its goal.
    \item \textbf{Stop Goal}: when the goal transition is $p \rightarrow e$, and the agent transitions from having a parking space as its goal to exploring.
    \item \textbf{Change Goal}: when the goal transition is $p \rightarrow p'$, and the agent transitions from having a parking space $p$ as its goal to a different parking space $p'$ as its goal.
    \item \textbf{Continue Explore}: when the goal transition is $e \rightarrow e$, and the agent continues to explore.
    \item \textbf{Continue Goal}: when the goal transition is $p \rightarrow p$, and the agent continues to have the same parking space as its goal.
\end{itemize}

Harnessing such categories, we thus define the function $r_{\delta g}(g \rightarrow g') = r \in \mathbb{R}$, which outputs the reward $r$ applied to the agent after a goal transition $g \rightarrow g'$, where $g \rightarrow g'$ is one of our five categories. For example, $r_{\delta g}(e \rightarrow e) = -0.1$ corresponds to punishing the agent with a reward of $-0.1$ when it decides to continue to explore.

\subsubsection{Goal Transition Rewards}

We place several constraints on the goal transition rewards to aid the agent's progression.

Firstly, observe that the transition $p \rightarrow p'$ is equivalent to the transition $p \rightarrow e$ followed by $e \rightarrow p'$, thus we enforce:
\begin{equation}
\label{eq:rew-change-goal-equiv}
r_{\delta g}(p \rightarrow p') = r_{\delta g}(p \rightarrow e) + r_{\delta g}(e \rightarrow p') = r_{\delta g}(p \rightarrow e) + r_{\delta g}(e \rightarrow p)
\end{equation} This constraint ensures both ways of changing goal yield equal reward, so the agent does not prefer or avoid one way or another. Note that since both methods are now equivalent, we could remove $p \rightarrow p'$ transitions from our MDP. However doing so would require making the domain of the $\delta g$ action vary in size based upon the current state, complicating our model, thus we keep $p \rightarrow p'$ transitions.

To help the agent progress towards a parking space, we enforce $r_{\delta g}(e \rightarrow e) < 0$ so the agent avoids exploration, and enforce $r_{\delta g}(p \rightarrow p') < 0$ so the agent sticks to a goal parking space. Note that the rewards here are \textit{pessimistic} in that they're all punishments, and we could harness additional optimistic rewards to avoid exploration by enforcing $r_{\delta g}(e \rightarrow p) > 0$ and $r_{\delta g}(p \rightarrow p) > 0$. However, to avoid the problem of unconstrained positive rewards potentially yielding undesirable optimal policies due to exploitation of the rewards, as seen with the dense movement towards goal reward, we simplify our rewards and enforce $r_{\delta g}(e \rightarrow p) = r_{\delta g}(p \rightarrow p) = 0$, simplifying constraint \ref{eq:rew-change-goal-equiv} to:
\begin{equation}
\label{eq:rew-change-goal-equiv-simplified}
r_{\delta g}(p \rightarrow p') = r_{\delta g}(p \rightarrow e)
\end{equation} and introducing no additional constraints.

Thus, summarising our rewards, we have variable  $r_{\delta g}(e \rightarrow e) < 0$ and $r_{\delta g}(p \rightarrow p') < 0$, fixed $r_{\delta g}(e \rightarrow p) = r_{\delta g}(p \rightarrow p) = 0$, and variable $r_{\delta g}(p \rightarrow e)$ computed by the constraint $r_{\delta g}(p \rightarrow e) = r_{\delta g}(p \rightarrow p')$. While such rewards satisfy all of our constraints, they are heavily pessimistic, which may limit the behaviour of the agents. However, since two of the rewards are variable, the author believes they are sufficiently flexible for an initial design.

\subsection{Collaboration by Giving Way}
\label{sec:collaboration}

Since our agents can have dynamic goals, as explained in \Cref{continuous-mdp-dynamicgoals}, they may change their goals to exhibit collaborative behaviours, by giving way to the other agents. Specifically, if another agent is closer to their goal parking space, they may sacrifice their goal for the other agent because the other agent is more likely to park in it before them, saving their time and potentially relieving congestion.

Hence, in this section, we extend our MDP to enforce such collaborative behaviours, and introduce metrics that measure the extent to which the agents exhibit such collaborative behaviours.

\subsubsection{Give Way Contexts}

There are several different situations in which agents may give way to other agents, which we group as `give-way contexts'.

We define a give-way context with respect to an agent as the subset of the full state $S^*$ in the MDP in which the agent should give way to another agent. Here, we refer to the `full state' as the state of the MDP with full information, which the agent may not have.

One of the simplest contexts is when another tracked agent has the same goal parking space as the agent and is closer to it than the agent, thus the agent should concede the parking space to the other agent and pursue a different goal. Such a scenario is shown in \Cref{fig:give-way-local-same-goal}, where two agents $a_1$ and $a_2$ are shown in green and grey squares respectively, with distances $d_1$ and $d_2$ to the same goal parking space shown in the blue square, respectively. The blue parallel lines between them indicate they're tracking each-other. Since $d_1 < d_2$ in the diagram, $a_1$ should give way to $a_2$, since $a_2$ will park in the space before $a_1$, voiding $a_1$'s effort to park in it. Since $a_1$ can track $a_2$ and they have the same goal parking space, this scenario is part of the $(L,S)$ context, where $L$ means the agent to give way to is local (i.e. tracked), and $S$ means the agent to give way to has the same goal parking space. Formally, we define $(L,S)$ as the give-way context in there exists another agent $a_2$ such that $a_2$ is tracked, $g' = g \neq 0$ where $g$ and $g'$ are the goals of $a_1$ and $a_2$ respectively, and $a_2$ is closer to its parking space than $a_1$.

\begin{figure}[!h]
\centering
\includegraphics[width=0.4\textwidth]{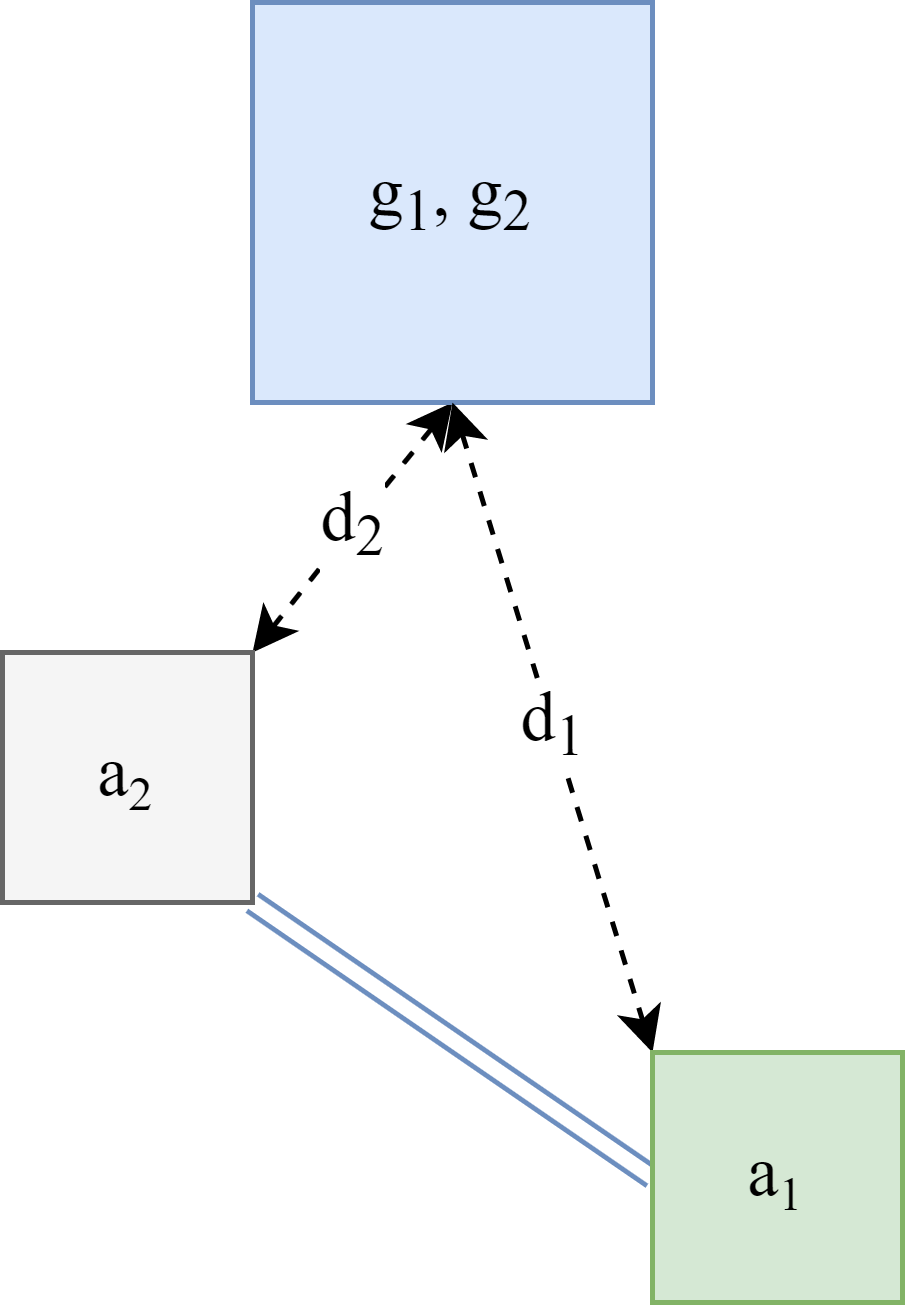}
\caption{Scenario in the $(L,S)$ give-way context, in which the agent should give way.}
\label{fig:give-way-local-same-goal}
\end{figure}

We have more contexts than just the $(L,S)$ context. Another is the $(L,A)$ context, where an agent should give way to another agent irrespective of the other agent's goal. Thus, formally, the $(L,A)$ context is all situations in which there exists another tracked agent $a_2$, where $a_2$ is closer to $a_1$'s goal than $a_1$, a weakened condition of the $(L,S)$ context thus $(L,S) \subseteq (L,A)$. \Cref{fig:give-way-local-any-goal} shows a scenario in the $(L,A)$ context in which $a_2$ is exploring but closer to $a_1$'s goal $g_1$. If the two agents travel along the paths shown in black arrows, the will eventually collide at the point in red unless $a_1$ gives way to $a_2$ and stops pursuing $g_1$. However, scenarios under the $(L,A)$ context may not always lead to crashes, as shown in \Cref{fig:give-way-local-any-goal-bad} where both agents are pursuing different goals but the paths they take to their goals do not intersect. Thus, it may not always be necessary for an agent to give way in all of the states of a give-way context (known as conforming to the context).



\begin{figure}[!h]
\begin{subfigure}[b]{0.4\textwidth}
    \centering
    \includegraphics[width=\textwidth]{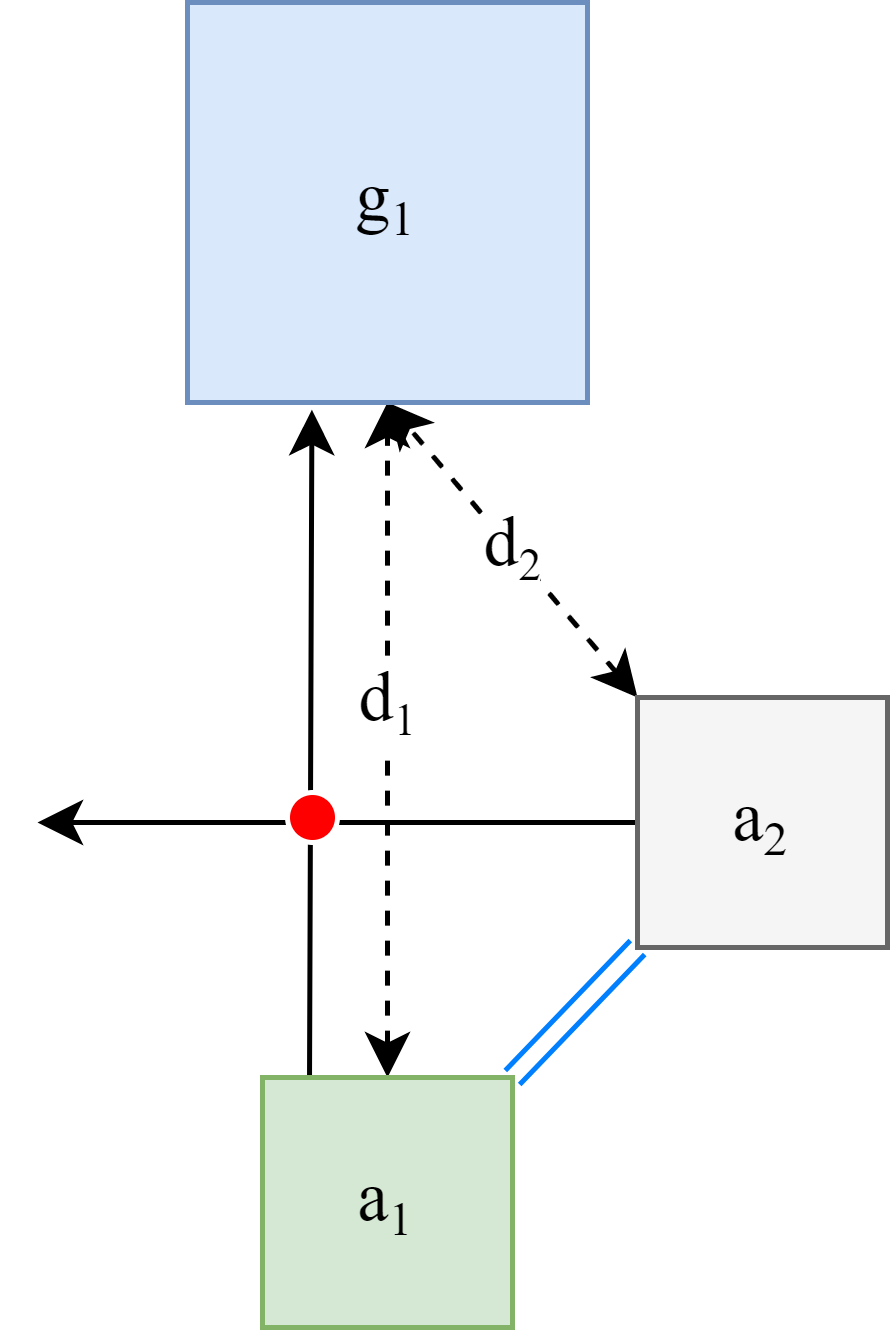}
    \caption{Scenario in which the agent should give way.}
    \label{fig:give-way-local-any-goal}
\end{subfigure}
\hfill
\begin{subfigure}[b]{0.4\textwidth}
    \centering
    \includegraphics[width=\textwidth]{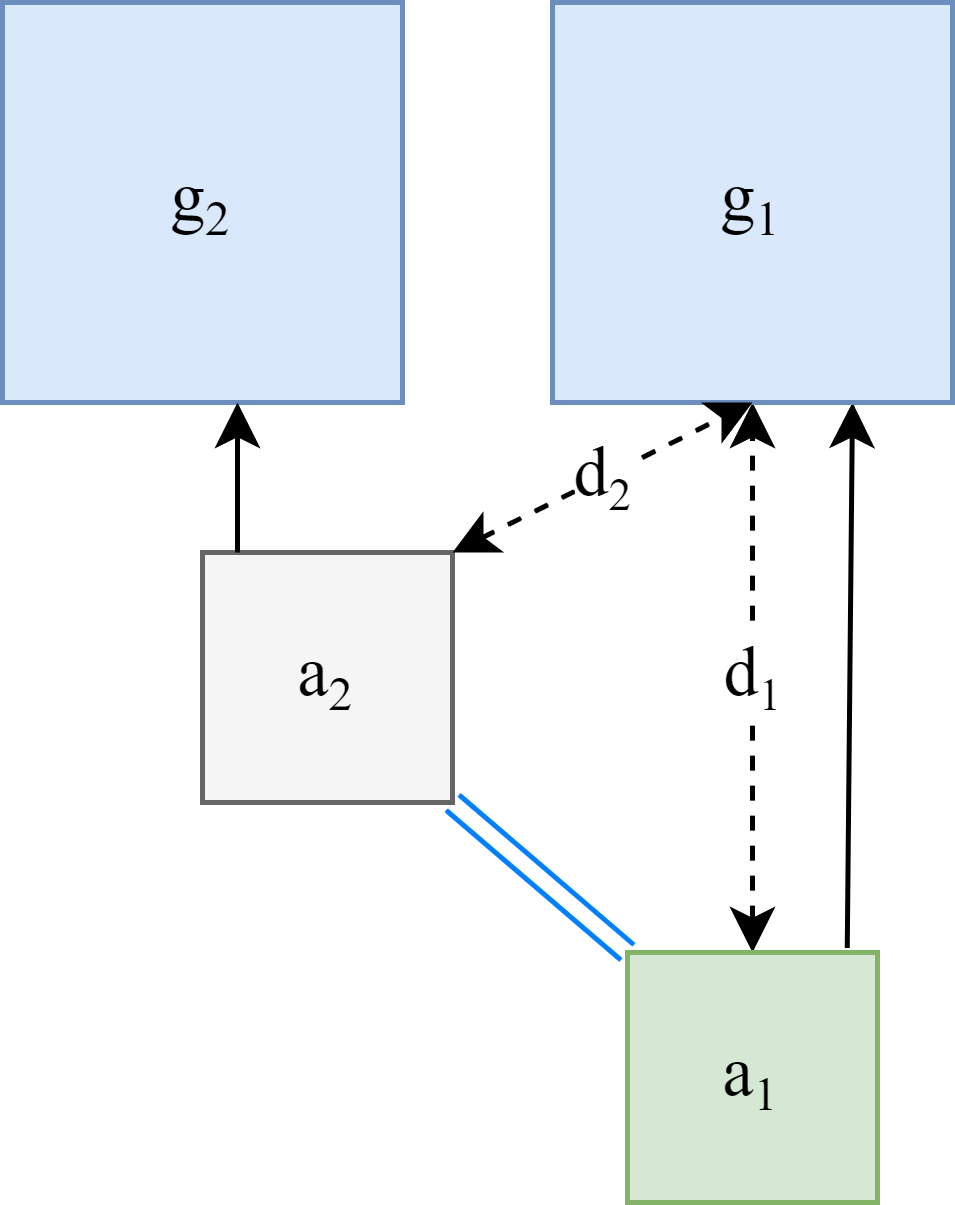}
    \caption{Scenario in which the agent should not give way.}
    \label{fig:give-way-local-any-goal-bad}
\end{subfigure}
\caption{Scenarios in the $(L,A)$ give-way context.}
\end{figure}

\subsubsection{Global Contexts and Information}

Further, we may relax the constraint in our give-way contexts that the other agent needs to be tracked (known as `any' other agent, or a `global' agent). Thus, we also have `global' give-way contexts $(G,S)$ and $(G,A)$, which respectively correspond to the $(L,S)$ and $(L,A)$ contexts with the condition that the other agent needs to be tracked relaxed. Formally, a situation is in the $(G,S)$ if another agent has the same goal parking space as the agent and is closer to it than the agent, and a situation is in the $(G,A)$ context if another agent is closer to the agent's parking space than it. Since the global contexts have weakened constraints, we have that $(L,S) \subseteq (G,S)$ and $(L,A) \subseteq (G,A)$, and we also have that $(G,S) \subseteq (G,A)$ for the same reason that $(L,S) \subseteq (L,A)$.

To distinguish the situations in the context $(G,S)$ from $(L,S)$, and the situations in the context $(G,A)$ from $(L,A)$, we further introduce two additional contexts $(G^+,S) = (G,S) \cap (L,S)$ and $(G^+,A) = (G,A) \cap (L,A)$, which are the $(G,S)$ and $(G,A)$ contexts constrained to only non-local situations, i.e. situations in which the other agent isn't tracked.

In non-local contexts, the agent does not have sufficient information to know that it should give way to the other agent, since the other agent is not tracked. \Cref{fig:give-way-global} shows a scenario where $n_{track} = 1$ and agents $a_1$ and $a_2$ are tracking different grey agents attached via the blue parallel lines. In this scenario, $a_1$ is not tracking $a_2$ since $n_{track} = 1$ (as shown by the red parallel lines), but $d_2 < d_1$, thus the scenario is part of any global or non-local contexts because $a_1$ should give way to $a_2$. 

In order for $a_1$ to know that it should give way to $a_2$, it needs to know that there is another agent closer to its goal than it, which it currently cannot see. However, notice that both $a_1$ and $a_2$ have the same parking space, thus they can use it as a `middle-man' to communicate their respective distances to each-other, from which they can determine whether they're the closest or not. Following this idea, for each tracked parking space in an agent's state, we encode into the state the `global information' of the minimum distance any agent has to the parking space, as well as the minimum distance only agents with the parking space as its goal have to the parking space. Thus, for each tracked parking space $p$, we encode:
\begin{equation}
    (\min_{a \in \Lambda}(d(a,p)), \min_{a \in \Lambda_p}(d(a,p)))
\end{equation}
into the state of the MDP, where $\Lambda$ is the set of agents, $\Lambda_p$ is the set of agents with goal parking space $p$, and $d(a,p)$ is the L2 distance between agent $a$ and parking space $p$. We encode the minimum over $\Lambda_p$ in addition to $\Lambda$ to enable the agent to distinguish between situations part of $(G^+,S)$ but not part of $(G^+,A)$, since only agents with the same goal parking space are considered in $(G^+,S)$.

Now, with global information, $a_1$ has sufficient information to know that $a_2$ is closer to its parking space, because the global information encodes $(d_2, d_2)$, thus $a_1$ has access to $d_2$ as well as its own distance $d_1$, being able to compute $d_2 < d_1$.

\begin{figure}[!h]
\centering
\includegraphics[width=0.4\textwidth]{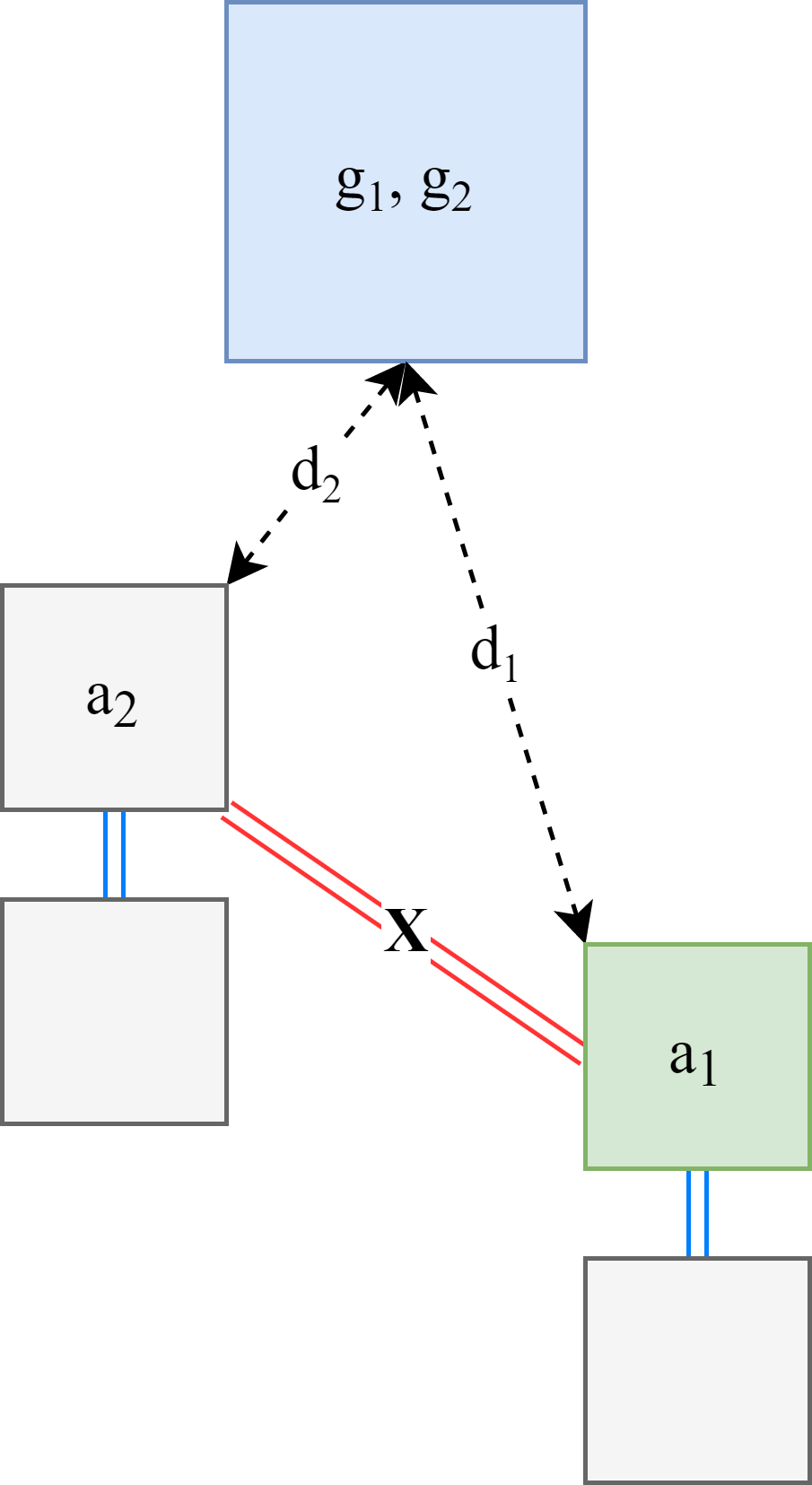}
\caption{Scenario in the $(G,S)$ give-way context, in which the agent should give way.}
\label{fig:give-way-global}
\end{figure}

\subsubsection{Context Hierarchy}
\label{sec:context-heirarchy}

One may notice significant overlap between the contexts, with many contexts being subsets of others. Naturally, this introduces a hierarchy of contexts, where we define a context $C_1$ to be `stronger' than another context $C_2$, denoted by $C_1 \succ C_2$, iff $C_2 \subseteq C_1$. This is because there are more situations that come under $C_1$ than $C_2$ (as $C_2 \subseteq C_1 \implies |C_2| \leq |C_1|$), which constrains the agents' behaviour more if the contexts are conformed to.

\Cref{fig:give-way-context-subsets} shows a graph of the subset conditions for each of our give-way contexts, visualising the context hierarchy. In the graph, there exists an edge $a \leftarrow b$ in the graph iff $a \subseteq b$. By subset transitivity, all contexts reachable from some context $a$ in the graph are a subset of $a$, and are thus weaker than $a$. Thus, as can be seen from the graph, every context is a subset of the $(G,A)$ context, thus $(G,A)$ is the strongest context.

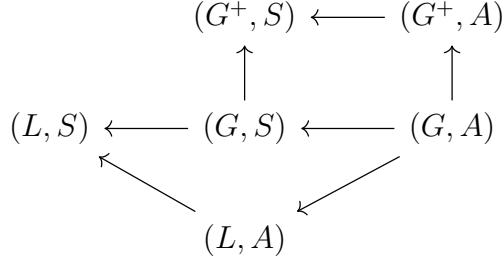
\begin{figure}[!h]
\centering
\begin{tikzcd}
        & {(G^+,S)}                   & {(G^+,A)} \arrow[l]                    \\
{(L,S)} & {(G,S)} \arrow[l] \arrow[u] & {(G,A)} \arrow[u] \arrow[l] \arrow[ld] \\
        & {(L,A)} \arrow[lu]          &                                       
\end{tikzcd}
\caption{Give-way context subsets.}
\label{fig:give-way-context-subsets}
\end{figure}

From \Cref{fig:give-way-context-subsets}, we can see that:
\begin{equation}
\label{eq:give-way-context-strength-gs}
(G,A) \succ (G,S) \succ (L,S)
\end{equation}
and
\begin{equation}
\label{eq:give-way-context-strength-la}
(G,A) \succ (L,A) \succ (L,S)
\end{equation}
However, it's not clear whether $(G,S) \succ (L,A)$ or $(L,A) \succ (G,S)$. Our study further investigates this non-trivial question.

\subsubsection{Give Way Schemes}
\label{sec:give-way-schemes}

Now we have defined the contexts in which an agent can give way to another agent, we seek to enforce giving way in such contexts in our MDP. In what follows, let $C$ be a give-way context with respect to an agent, and define $\Delta_C(s^*)$ iff the current full state $s^* \in \mathbb{S}^*$ is an element of $C$, thus $\Delta_C(s^*)$ is true iff the agent should give way in the MDP's current full state according to the context $C$.

We encourage the agent to change its goal when its current goal parking space is `bad' by first defining a function $G_{\downarrow}(s^*)$ iff the agent's current parking space is `bad' with respect to the full state of the MDP. Then, we punish the agent for continuing to pursue its parking space $p$ on the goal transition $p \rightarrow p$ when $G_{\downarrow}(s^*)$ is true, thus $G_{\downarrow}(s^*) \implies r_{\delta g}(p \rightarrow p) < 0$.

Thus, to enforce giving way in full states determined by the context $C$ (known henceforth as `enforcing' a give-way context), we enforce:
\begin{equation}
\label{eq:give-way-context-enforce}
G_{\downarrow} \equiv \Delta_C
\end{equation} defining $G_{\downarrow}$ in terms of $C$.

Using $\Cref{eq:give-way-context-enforce}$ to define our bad goal function $G_{\downarrow}$, our choice of $C$ arises several different `giving way schemes' that we can enforce. We also vary whether or not the agent has access to global information. Thus, we denote a giving way scheme by the tuple $(a,b,c)$, where $(b,c)$ is a give-way context and $a \in \{G,L\}$ determines whether the agent has global or local information respectively (here, local means the absence of global information). For example, $(G,L,S)$ corresponds to the agent having global information and enforcing the $(L,S)$ give-way context, and $(L,L,A)$ corresponds to the agent having no global information and enforcing the $(L,A)$ give-way context.

\subsubsection{Context Conformity}
\label{sec:context-conformance}

While we now have the ability to \textit{encourage} the agents to give way in certain contexts, the agents may still decide to not give way in certain contexts, or the agents may give way in contexts we did not enforce them to give way in. Thus, it's useful to measure the extent to which the agents conform to different contexts, giving us insights on the effectiveness of the various give way schemes and our methods of enforcing them.

For a context $C$ with respect an an agent, we define the agent's conformity to the context $C$ as:
\begin{equation}
\label{eq:context-conformance}
\mathds{P}(\delta g \neq g \given \Delta_C(s^*))
\end{equation}

where $\delta g\neq g$ means the agent changed its goal and $\Delta_C(s^*)$ is true if the agent should give way to another agent at the current time-step according to the context $C$. Thus, intuitively, we measure the ratio of the number of times the agent changed its goal when the give-way context $C$ said it should. If we take the mean of the conformity of the context $C$ across all of the agents, we obtain a measure of the overall conformity of the context $C$.

Particularly interesting conformities to measure may be the conformity to the give-way contexts $(G^+,S)$ and $(G^+,A)$ under the $(L,L,S)$ and $(L,L,A)$ give way schemes, since the agents do not have access to global information in such schemes, making their conformity to the non-local contexts non-trivial if it were to occur. In addition, conformity may also be used as an alternate measure of strength of different give-way contexts. If the agents show worse conformity to one context over another when both are enforced (with a punishment of the same magnitude), then one is more reluctant to be conformed to than the other, making it `stronger'. Overall, measuring conformities in this way enables us to quantify and compare the collaborative behaviours exhibited by the agents' learned policies.


\chapter{Implementation}
\label{ch:implementation}

In this chapter, we describe the implementation of our environment, following our methodology described in \Cref{ch:design}. In what follows, we refer to a `world position' as a position on the grid with a granularity of 1 (which we consider its default size), and an `MDP position' as a position on the grid with its actual granularity (specified by the environment). 

\section{MDP}

We implement our environment, as designed in \Cref{mdp-design} in the form of an MDP, using the Unity3D Engine and the Unity ML-Agents framework \cite{ml-agents}. 

In ML-Agents, each agent is an instance of the \texttt{Agent} class, which abstracts the implementation of the MDP for a single independent agent. Each \texttt{Agent} must implement the \texttt{CollectObservations} and  \texttt{OnActionRecieved} methods, which collect the state of the MDP and apply the actions received by the agent at each time-step, respectively. In addition, the dynamics of the MDP are applied in an agent's \texttt{FixedUpdate} method, and ML-Agents automatically sequences the \texttt{CollectObservations}, \texttt{OnActionRecieved} and \texttt{FixedUpdate} methods to simulate the MDP for multiple independent agents simultaneously, as depicted in \Cref{fig:mlagents-agent-lifecycle} for a single agent with respect to an MDP $\langle \mathbb{S}, \mathbb{A}, \mathbb{P}, \mathbb{R} \rangle$ (with initial state $s_0$, and actions sampled from the external policy neural network $\pi(a|s;\theta)$). In ML-Agents, an agent's state is encoded as a list of numbers $(s_1,...,s_{n_s}) \in \mathbb{S}_1 \times ... \times \mathbb{S}_{n_s}$, where $\mathbb{S}_i$ is the domain of the $i$'th element of the state (with $n_s$ in total), which may be either discrete integers for Q-Learning or real values bounded between $-1$ and $1$ for PPO. Similarly, an agent's actions are encoded as a list of natural numbers $(a_1,...,a_{n_a}) \in \mathbb{A}_1 \times ... \times \mathbb{A}_{n_a}$, where $\mathbb{A}_i$ is the domain of the $i$'th action (with $n_a$ in total), which must be equal to $\mathbb{N}_j$ for some $j$. We use the same encodings for the design of our MDP in \Cref{mdp-design}, making the implementation naturally follow from our design.

\begin{figure}[!h]
\centering
\includegraphics[width=.5\textwidth]{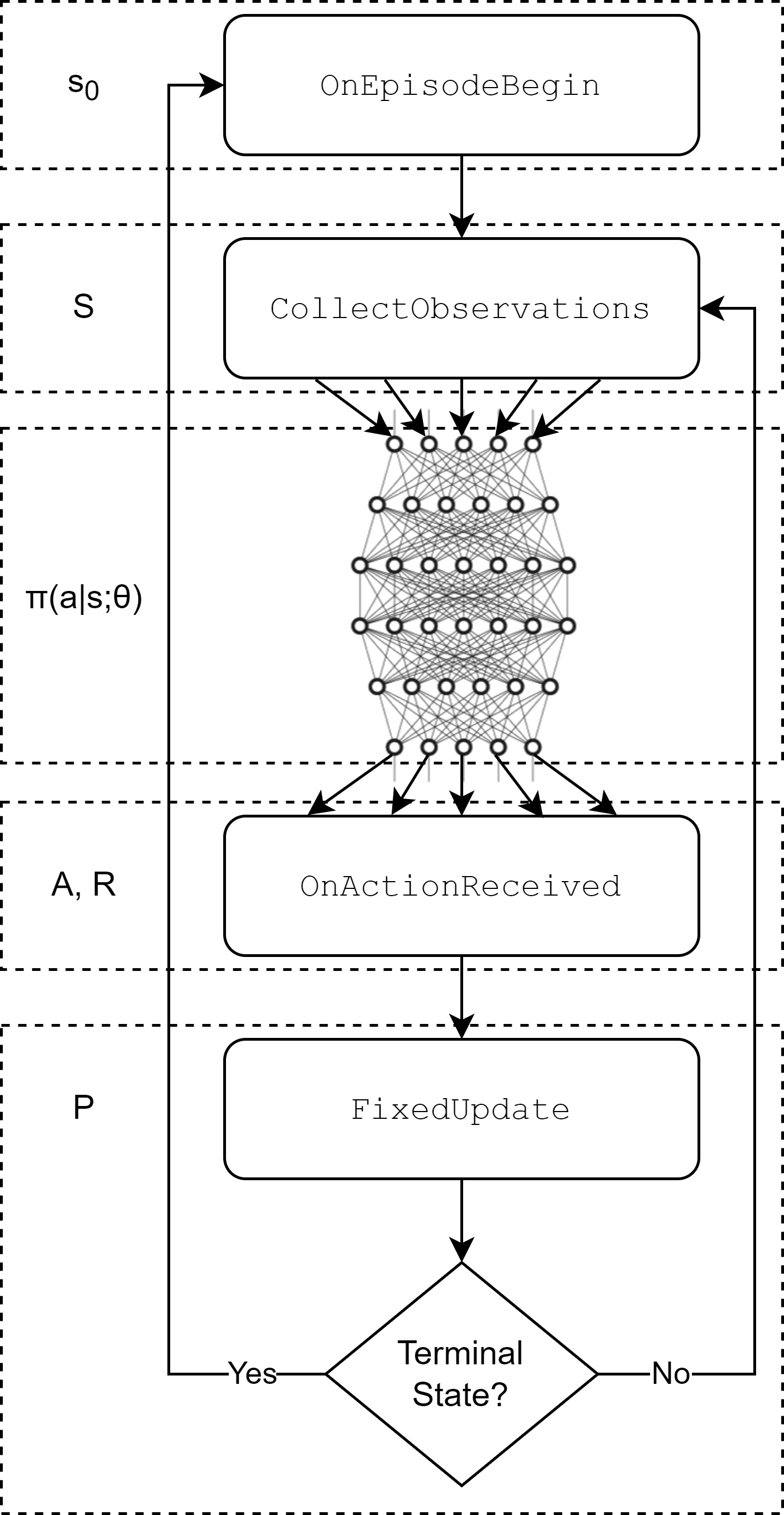}
\caption{An agent's lifecycle in ML-Agents.}
\label{fig:mlagents-agent-lifecycle}
\end{figure}

We implement our environment in a \Csharp \space Unity project, in which there is a single implementation of the \texttt{Agent} class named  \texttt{CarAgent}, implementing the MDP for each symmetric car agent in our environment. The objects in the environment are arranged in a Unity scene conforming to the scales of the objects described in \Cref{mdp-design}, with the coordinates in the Unity scene on the same scale as our world positions. \Cref{fig:env-impl} shows a screenshot of the implementation of our environment within our Unity scene, looking very similar to our originally devised schematic in \Cref{fig:ffa-parking-env}. In the implementation our environment, each agent is labelled with their index shown in green, and their goal parking space is labelled with their index. If an agent doesn't have a parking space, their ID if followed by \texttt{e}, indicating they are exploring. Cars in our environment have yellow borders if they are agents, and red borders if they are parked cars. One can also specify an additional `visualise tracked objects' parameter, which attaches a blue line to each of the tracked parking spaces for an agent, and a green line to each of their tracked cars (in \Cref{fig:env-impl}, $n_{track} = 1 = n_{space} = 1$).

\begin{figure}[!h]
\centering
\includegraphics[width=.6\textwidth]{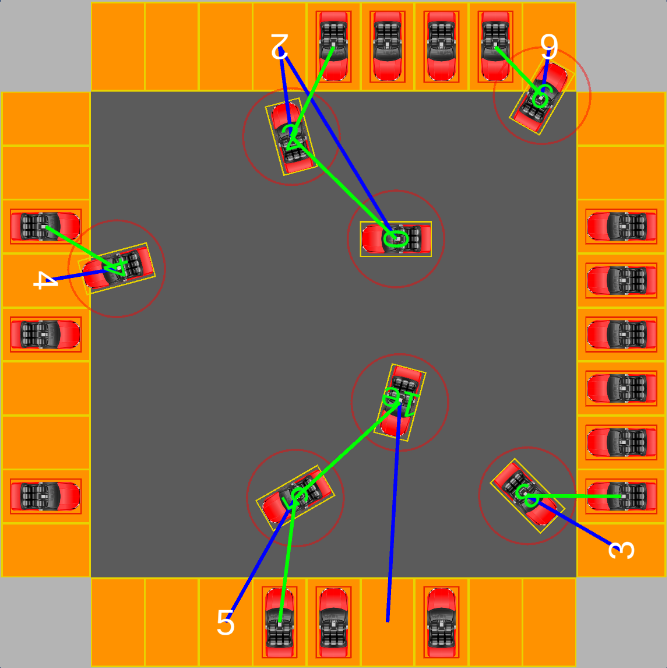}
\caption{Environment implementation in Unity3D.}
\label{fig:env-impl}
\end{figure}

After implementing the MDP in Unity, it is packaged and used for training by building the \texttt{MARL-env} scene in the Unity project as a standalone Windows executable for local runs with graphics, or a standalone Linux executable for use on a Linux server without graphics (with the `Server Build' setting enabled to disable rendering).

\subsection{Configuration}
\label{sec:env-config}
Our MDP's implementation is highly configurable, where individual elements of the state and the actions can be toggled to be included or excluded from the MDP. In addition, aspects of the environment as a whole (such as the number of agents it contains) can be configured. Such flexibility enables our environment to be re-used with different configurations without requiring modifications of the code. 

Our environment has a set of parameters defining its configuration, contained in the \texttt{EnvironmentConfig} class. A single instance of  \texttt{EnvironmentConfig} defines our environment at run-time, and this instance is managed within the \\\texttt{EnvironmentManager} class which manages the running of the environment.

At run-time, the contents of the \texttt{EnvironmentConfig} is received as a JSON serialized string, and the instance is created by serializing the JSON string. The  \texttt{EnvironmentConfig} may be sent to the environment via two external interfaces: an ML-Agents Side Channel or ML-Agents Environment Parameters, or alternatively the instance within the \texttt{EnvironmentManager} may be used directly when testing or running models within the Unity3D editor. The source of the \\\texttt{EnvironmentConfig} is determined by the by the \texttt{initMode} field of the \\\texttt{EnvironmentManager} instance (with respective values \texttt{SIDE\_CHANNEL}, \texttt{ENV\_PARAMS} or \texttt{MANUAL}), and the appropriate value is to be set before building the Unity project.

In ML-Agents, a Side Channel is a bidirectional channel whereby Python can communicate with Unity and vice-versa, and Environment Parameters are a set of global key-value constants communicated to and retrieved by Unity at run-time. Since strings can be sent in a Side Channel, we are able to serialize the  \texttt{EnvironmentConfig} to a JSON string and send it to Unity via the side channel. However, Environment Parameters can only be floats, thus when using Environment Parameters we send Unity the values of the fields of the  \texttt{EnvironmentConfig} as individual Environment Parameters, and construct the instance at run-time by injecting the field values into the class. Since Environment Parameters can only be floats, we also convert the types of the received Environment Parameters to their respective types via casting and additional transformations, since the members of  \texttt{EnvironmentConfig} may also be booleans, integers, floats and arrays.

Once the  \texttt{EnvironmentConfig} is received by the environment, the \\\texttt{EnvironmentManager} instantiates the environment based upon the configuration defined in the \texttt{EnvironmentConfig}. Specifically, it spawns the agents and parked cars based upon the number of agents and parked cars specified, and initialises the agents based upon the given parameters in the  \texttt{EnvironmentConfig}. Since the size of the state and action spaces may vary depending on the contents of the  \texttt{EnvironmentConfig}, the dimensions of the state and action spaces are computed at run-time based upon the respective ranges of the enabled elements of the state and action spaces.

\Cref{tab:env-params} and \Cref{tab:env-params-2} lists the parameters of our environment, listing the available fields of the \texttt{EnvironmentConfig} class and their respective types, default values and descriptions. Note that the descriptions use the constants defined in \Cref{ch:design}, and the values of some of the dense rewards are given as `sums'. Here, a `sum reward' is the value of the accumulation of the dense reward across every time-step in an episode, and is defined as the value $r$ in the reward  $r/\tau$ given to the agent at every time-step, where $\tau$ is the number of time-steps per episode. Sum rewards are useful for scaling purposes, since their values in the \texttt{EnvironmentConfig} are very small otherwise.

\subsection{Metrics}
\label{sec:mdp-metrics}

In addition to the implementation of our environment being highly flexible, it also records various metrics pertaining to the behaviour of the agents, useful for evaluating the agents' learned behaviour within the environment. We harness the  \texttt{StatsRecorder} instance in ML-Agents to record the metrics, which automatically aggregates them as an averages or most recent values, across fixed time-step intervals known as the summary frequency. For example, if recording a metric using the average aggregation method with a summary frequency of 10 steps, over 200 steps the  \texttt{StatsRecorder} will record the average over the first 10 steps ($1$ to $10$), then the average over the next 10 steps ($11$ to $20$), and so on until the last 10 steps ($191$ to $200$). ML-Agents outputs the recorded metrics to unique paths in TensorBoard \cite{tensorflow2015-whitepaper}, which contains various other metrics recorded by ML-Agents such as the agents' average cumulative reward over their episodes, their average episode length, and various RL-specific metrics. These metrics are then further aggregated in our automatic analysis script to obtain a set of metrics evaluating our learned models, which shall be explained further in \Cref{sec:ppo-analysis}. Column 3 in \Cref{tab:analysis-metrics} and \Cref{tab:analysis-metrics-2} lists the paths of metrics recorded into TensorBoard, and \Cref{sec:ppo-analysis} explains the additional aggregations performed on them as described by the other columns in the table.

\subsection{Optimisations}
\label{sec:mdp-optimisations}

Some optimisations are present in our MDP to improve the training time and the quality of the models learned within. As seen in \Cref{continuous-mdp-fixedgoals} where we improve the episode instantiation scheme in the multi-agent MDP by spawning the agents such that they're likely to crash, we further improve it by spawning the agents such that they're more likely to park. We do this to increase the chance an agent reaches its goal during training, in-tern reducing the time it takes for training to converge to a model that successfully parks the agents. This modification may seem to contradict the crash spawning modification, however we do not reduce the number of crash spawns - the proportion of episodes reserved for crash spawns remains the same.

Another attempted optimisation present in our MDP is increasing the size of the cars' hitboxes during training, specifically by multiplying their widths and heights by a constant factor. The idea behind this modification is that, if the agents can learn to park and avoid each-other with bigger hitboxes than their actual hitboxes, then when evaluating the model with the original smaller hitboxes, the agents would keep a threshold distance away from the cars, rather than potentially getting very close to them, increasing safety. For example, if the hitbox of a car is originally $1$ unit (arbitrary unit) but during training we increase it to $1.2$ units, and the agents learn to avoid cars during training, then they have learned to keep a threshold distance of at least $1.2 - 1 = 0.2$ units away from the other cars, safer than potentially no threshold distance otherwise.

\section{Q-Learning}

With the environment implemented, we apply RL algorithms to learn policies controlling the agents within, with our first RL algorithm being Q-Learning. Since ML-Agents does not have an implementation of Q-Learning, we implement Q-Learning ourselves, interfacing with the environment via the ML-Agents Python Low Level API. Our implementation is based upon exemplar implementations of Q-Learning with ML-Agents, as well as other open-source Q-Learning implementations. 

\subsection{Training}
\label{sec:q-learning-training}

The training portion of Q-Learning is implemented in a Python script named \texttt{iql\_train.py}, with arguments listed in \Cref{tab:iql-train-args}. Firstly, we delegate the connection to the environment and the communication of the  \texttt{EnvironmentConfig} to a Python script named \texttt{env.py}, and use the script as a dependency in \texttt{iql\_train.py}. In order to send Unity the  \texttt{EnvironmentConfig} and initialize the Q-Table with its appropriate dimensions (corresponding to the dimensions of the state space), we first serialize our  \texttt{EnvironmentConfig} in JSON form and send it to Unity via a Side Channel. Then, in Unity we compute the dimensions of the Q-Table based upon the received  \texttt{EnvironmentConfig}, and return a message to Python with the dimensions of the Q-Table, in the form \texttt{configACK:obs[$s_1, ..., s_{n_s}$]act[$a_1, ..., a_{n_a}$]} where $s_i$ are the sizes of the domain of the respective elements of the state, and $a_i$ are the sizes of the domain of the respective actions. Python recieves the \texttt{configACK} message from the Side Channel, and initializes the Q-Table as a NumPy matrix \cite{numpy} with dimensions
\begin{equation}
\label{eq:q-table-dims}
    (\prod_{i=1}^{n_s} s_i) \times (\prod_{i=1}^{n_a} a_i)
\end{equation} Here, an index $(s,a)$ in our Q-Table is such that $s$ is the index of the state in the set of possible states $\mathbb{S}$, and $a$ is the index of the action in the set of possible actions $\mathbb{A}$. Such indexes are obtained via hashing.

With the Q-Table constructed, we update the Q-Table at each time-step via the the Q-Learning Bellman equation ($\Cref{eq:q-learning}$), receiving the state and communicating the actions via interfaces provided by the ML-Agents Python Low Level API. After training finishes (after a chosen number of episodes in the environment), we store the learned model for later evaluation by de-serializing the Q-Table and storing it in an output file via Python's \texttt{pickle} module\footnote{\url{https://docs.python.org/3/library/pickle.html}}. 

In addition, during training we record metrics pertaining to the performance of the agent within our environment. We record the cumulative reward received by the agent in each episode, storing the results in an array where index~$i$ is the cumulative reward of the agent in episode~$i$. After training finishes, we also de-serialize this array of metrics and store in an output file using \texttt{pickle}, for later analysis.

It should be noted that since \texttt{\_numAgents} is a parameter of our \\\texttt{EnvironmentConfig}, our Q-Learning implementation supports multiple independent agents, actually being an implementation of Independent Q-Learning. However, we did not find a use for such multi-agent functionality, due to limitations of Q-Learning in the single-agent MDP, as explained in our results (\Cref{ch:results}).

\subsection{Batch Compute System}

Since training via \texttt{iql\_train.py} may take a long time, we delegate the running of the script to the department's Batch Compute System, enabling us to train Q-Learning on the department's compute cluster. The cluster uses the Slurm Workload Manager \cite{slurm} to manage the running of compute jobs, encapsulating the job's commands and required computational resources in a file with the \texttt{.slurm} file extension, from which jobs can be spawned via the \texttt{srun} command. We have various \texttt{.slurm} files which train Q-Learning with different parameters on the department's compute cluster. 

\subsection{Model Evaluation}

To evaluate the learned models from Q-Learning, we have two Python scripts \texttt{iql\_run.py} and \texttt{plot.py}, with arguments listed in \Cref{tab:iql-run-args} and \Cref{tab:plot-args}, respectively. \texttt{iql\_run.py} runs the agents in the environment with their learned policy, and \texttt{plot.py} plots the cumulative reward of the agent per episode during training.

To implement \texttt{iql\_run.py}, we first read the stored Q-Table from training using \texttt{pickle}, then harness \texttt{env.py} to connect and instantiate the environment as in \Cref{sec:q-learning-training}. Then, we retrieve the state and send actions to the enviornment via the ML-Agents Python Low Level API, where the actions are obtained from the Q-Table. After running \texttt{iql\_run.py}, the built Unity environment opens in a window, providing a graphical view of the environment with the agents controlled by the policy learned by Q-Leanring, enabling visual analysis of their learned behaviours.

\texttt{plot.py} is implemented by first reading the stored array of cumulative rewards per episode from training using \texttt{pickle}, and then plotting a graph of the agents' cumulative reward per episode on the y and x axes respectively, using the \texttt{matplotlib} Python library \cite{matplotlib}. Such a plot enables one to determine if Q-Learning had converged and whether the converged policy is optimal, by observing whether the cumulative reward reached a horizontal asymptote and the cumulative reward of that asymptote, respectively.

With the tools implemented to evaluate the learned policies from Q-Learning, they are used in conjunction with \texttt{iql\_train.py} to train and evaluate Q-Learning in our environment. \Cref{fig:q-learning-impl} shows a diagram of the work-flow, with the users' entry point depicted by the circle.

\begin{figure}[!h]
\centering
\includegraphics[width=\textwidth]{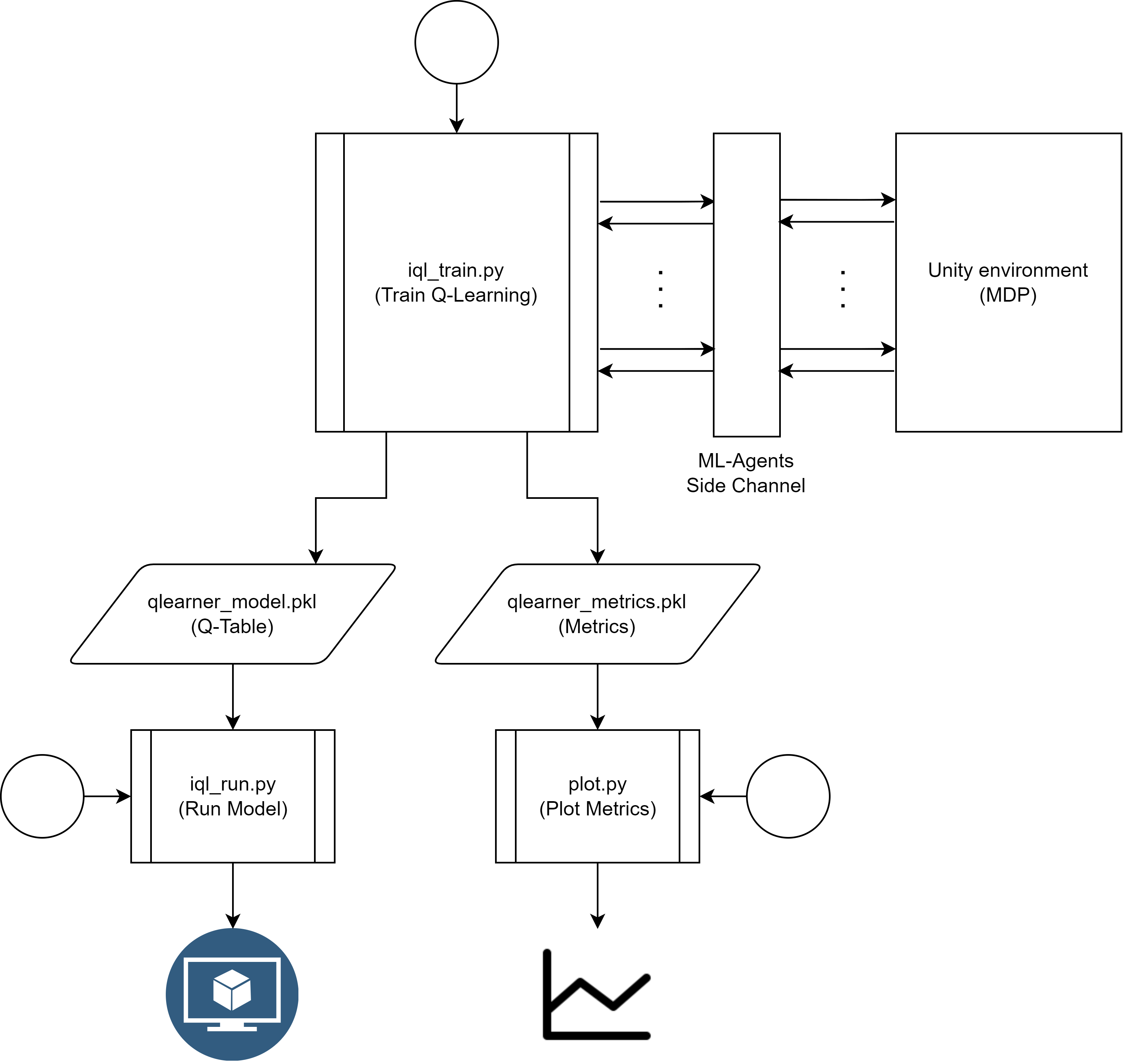}
\caption{Q-Learning implementation.}
\label{fig:q-learning-impl}
\end{figure}

\subsection{Environment Configurations}

In our Q-Learning experiments, the \texttt{EnvironmentConfig} uses the default values shown in \Cref{tab:env-params} and \Cref{tab:env-params-2}. We do not encode the agents' distance to their goal in the state of the MDP in Q-Learning experiments because it has a large range of values, making the dimensionality of the state space intractably large. Instead, we use the rings for nearby distance information. When using rings in our experiments we have \texttt{\_obsRings} true in our \texttt{EnvironmentConfig}, with non-empty \texttt{ringDiams}.

\section{PPO}

Contrary to Q-Learning, PPO has a robust existing implementation provided by ML-Agents, simplifying training. However, to overcome issues pertaining to manual analysis and tuning, we have a tool that aggregates the metrics recorded by Unity into a CSV file, and a tool that automatically spawns training jobs to grid-search parameters, respectively.

\subsection{Training}
\label{sec:ppo-train}

To train independent PPO on our environment, we use the \texttt{mlagents-learn} command in ML-Agents. In ML-Agents, the values of PPO's hyperparameters as well as the Environment Parameters are contained in a YAML file, which is passed to \texttt{mlagents-learn} to define PPO's training parameters and the Environment Parameters sent to Unity (which are subsequently injected into the  \texttt{EnvironmentConfig} as explained in \Cref{sec:env-config}). Thus, this YAML configuration file contains the metadata pertaining to an application of PPO in our environment. In addition to the path of the YAML file, \texttt{mlagents-learn} has three additional parameters: 
\begin{itemize}
    \item \texttt{-{}-env}: the path to the Unity build of the environment;
    \item \texttt{-{}-run-id}: an identifier for the specific run and the path which the results are output to;
    \item \texttt{-{}-base-port}: the port that ML-Agents uses during training.
\end{itemize}

Since we seek to automate the evaluation of our models, we perform a modification to ML-Agents' implementation of PPO, enabling it to also evaluate our models. Out of a total of $T$ time-steps in our environment used to train the agents,  we reserve the first $0.8T$ time-steps for training the model, and the last $0.2T$ time-steps for evaluating the learned model. In order to ensure that the model doesn't change during the period in which it's being evaluated, we set the learning rate equal to $0$ during the evaluation phase, and decay the learning rate as it was originally decayed over the first $0.8T$ time-steps instead of the full $T$ time-steps. The learned model (policy) doesn't change with a learning rate $\alpha$ of $0$ because ML-Agents' implementation of PPO uses the Adam optimiser \cite{adam} to optimise its loss function, which updates the parameters $\theta$ of its neural networks defining the policy via $\theta \leftarrow \theta - \alpha\Delta$ for some term $\Delta$, thus with $\alpha = 0$ the update becomes $\theta \leftarrow \theta$, leaving $\theta$ unchanged. Then, since we record metrics in our environment during training, we aggregate such metrics over the evaluation period, obtaining metrics evaluating the final learned model from PPO.

As with Q-Learning, we also delegate the training of PPO to the department's compute cluster via \texttt{.slurm} files which execute the \texttt{mlagents-learn} command within.

\subsection{Analysis}
\label{sec:ppo-analysis}

As explained in \Cref{sec:mdp-metrics}, various metrics pertaining to the behaviour of the agents are recorded to TensorBoard. However, TensorBoard only provides a graph view of each individual metric over each summary period, which is time-consuming to analyse, especially when comparing multiple models. Hence, we implement a Python script \texttt{analysis.py} which aggregates the recorded metrics in TensorBoard and outputs them to a CSV file, for faster analysis.

After training a model with PPO, its results are output to the \texttt{results/<id>} directory, where \texttt{<id>} is the \texttt{-{}-run-id} of the run. In the results directory, there is a \texttt{.tfevents} file containing the recorded metrics and metadata pertaining to the run, and a \texttt{.onnx} file containing the neural networks for the learned PPO model. To aggregate the metrics, we read the them from the \texttt{.tfevents} file using a TensorBoard \texttt{EventAccumulator}, and aggregate them over the evaluation period, which as explained in \Cref{sec:ppo-train} is the last $20\%$ of the total time-steps during training. We aggregate the metrics using two different functions:

\begin{itemize}
    \item \texttt{mean\_metric\_period} which takes the mean of the metric over the evaluation period (or, metric period). This function is used for metrics that are recorded as averages during training, since taking the mean of the means over the metric period yields the mean over the metric period.
    \item \texttt{last\_minus\_start\_metricperiod} which subtracts the value of the metric at the end of the metric period by the value at the start of the metric period. This function is used for metrics that are recorded as running sums during training, since subtracting the sum at the end of the metric period by the sum at the start yields the sum between the metric period.
    \item \texttt{context\_conformance} which is used to compute the conformity for different give-way contexts. We record the number of times an agent gave way in a context in a TensorBoard metric with path \texttt{Metrics/<context>\_PostiveCount} where \texttt{<context>} is a descriptor of the context, and the number of times they should have gave way in a TensorBoard metric with path \\\texttt{Metrics/<context>\_TotalCount}. Then, we use \\\texttt{last\_minus\_start\_metricperiod} on both to obtain the totals over the metric period, then divide the former by the latter to obtain the conformity of the agents to the context in the metric period.
    \item \texttt{per\_eps} which computes the total of the metric averaged per episode by computing the total over the metric period via \\\texttt{last\_minus\_start\_metricperiod} then dividing it by the number of episodes the agents were evaluated over.
\end{itemize}

We also compute additional metrics as functions of the aggregated metrics and the metadata of the PPO run. In addition, environment parameters of the run can be recorded as metrics, enabling comparison of the metrics between models with different values of an environment parameter.

The output of \texttt{analysis.py} is a CSV file where each row corresponds to the metrics for an individual model, enabling easy comparison of the metrics between different models. We extract the metrics for multiple models by recursively seeking all \texttt{.tfevents} files in a given subdirectory, and extracting the metrics for each model in-tern. \Cref{tab:analysis-metrics} and \Cref{tab:analysis-metrics-2} lists each metric output by \texttt{analysis.py} and their corresponding paths in TensorBoard and computation functions, and \Cref{tab:analysis-args} lists the arguments of the \texttt{analysis.py} script.

\subsection{Grid-Searching}

Since PPO and our environment have many parameters, it's time-consuming to manually grid-search them. This is because we can only train PPO with one combination of the parameters at once, thus with lots of combinations we must manually set the parameters and run PPO for each combination. Hence, we automate the process with a Python script \texttt{gridsearch\_ppo.py}, which grid-searches a set of parameters in the PPO YAML configuration file, via the following steps:

\begin{enumerate}
    \item Compute all of the possible combinations of the parameters we are grid-searching over.
    \item Create an individual PPO YAML configuration file for each combination of the parameters, injecting the values of each combination into a copy of a base YAML file.
    \item Create individual \texttt{.slurm} files for each combination of the parameters, wherein \texttt{mlagents-learn} is called with the respective YAML files from step 2.
    \item Run the jobs on the department's compute cluster to train PPO for every combination of the parameters, by calling \texttt{srun} with each of the \texttt{.slurm} files created in step 3.
    \item Automatically run \texttt{analysis.py} on all of the models after they have finished training, by running \texttt{analysis.py} after \texttt{mlagents-learn} in the \texttt{.slurm} files.
\end{enumerate}

To enable the output models to be compared against the values of the grid-searched parameters, step 5 passes into \texttt{analysis.py} each parameter that was grid-searched over, so they appear as columns in the output CSV.

Figure \Cref{fig:grid-search-impl} shows a diagram of the script's work-flow, and \Cref{tab:gridsearch-arguments} lists its arguments.

\begin{figure}[!h]
\centering
\includegraphics[width=\textwidth]{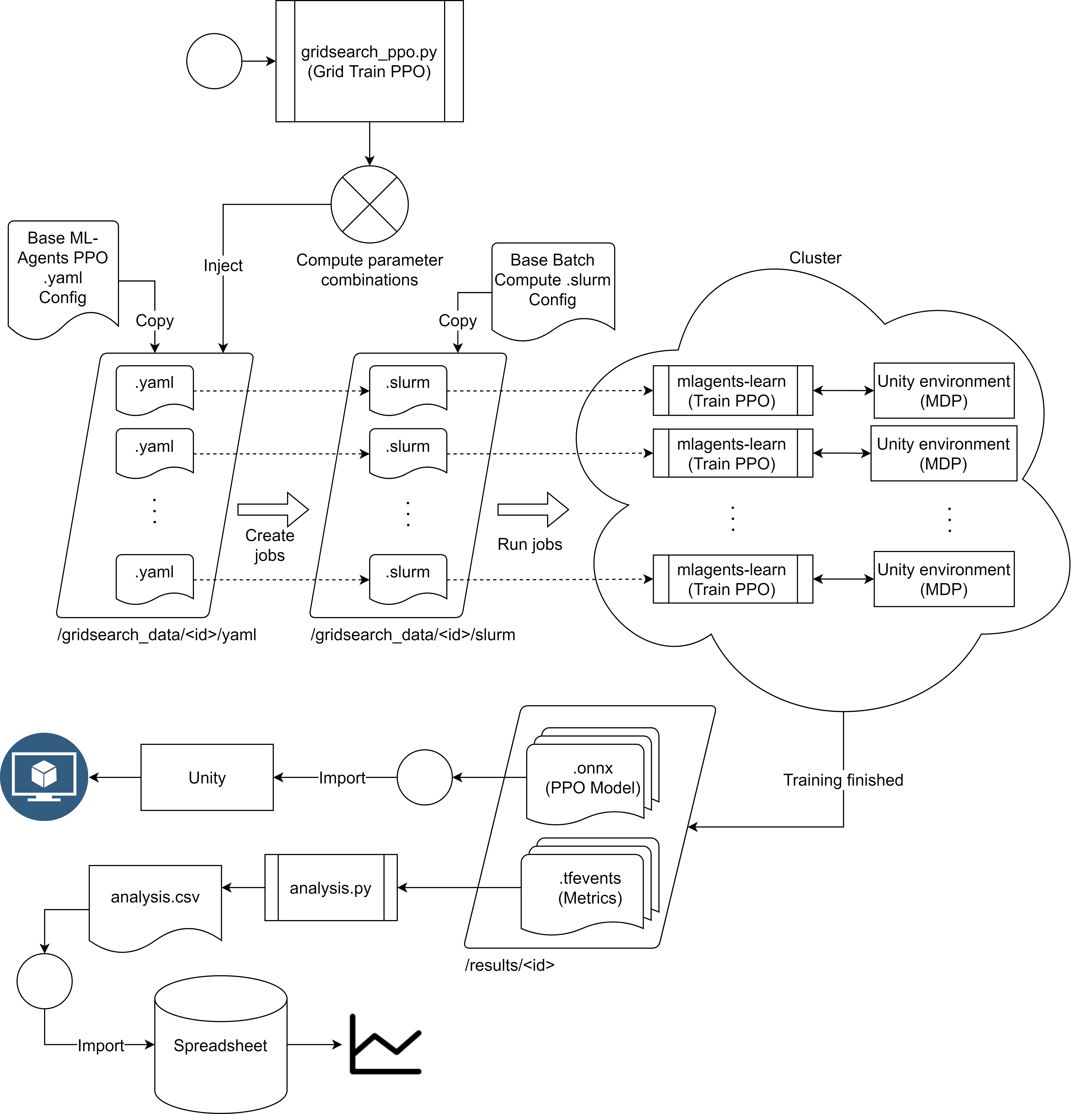}
\caption{PPO grid-searching implementation.}
\label{fig:grid-search-impl}
\end{figure}

\subsubsection{Analysis Mutex Lock}

It should be noted that our implementation of \texttt{gridsearch\_ppo.py} only runs \texttt{analysis.py} after all of the models have finished training, despite it being called after every call to \texttt{mlagents-learn} in step 5. We only run \texttt{analysis.py} once so the work isn't repeated, and to avoid potential race conditions corrupting the output CSV file when multiple runs of \texttt{analysis.py} occur at once. This is achieved by checking the number of models that had finished training at the start of \texttt{analysis.py}, and if not all of the models have finished training then we abort the script. Otherwise, we create a mutex lock file indiciating the analysis is being performed, and other calls of \texttt{analysis.py} abort their execution if that file exists, preventing race-conditions on the execution of \texttt{analysis.py}.

\subsubsection{Redundancy Tests}
We also harness \texttt{gridsearch\_ppo.py} to perform redundancy tests, whereby one trains their models multiple times and averages the results to increase their accuracy. Such a technique is paritcularly useful when runs often fail (known as flaking), with multiple runs having a higher chance of at least one succeeding. To perform a redundancy test with $n$ repetitions, the \texttt{try} parameter is added to the list of environment parameters in the base PPO YAML configuration file, and the call to \texttt{gridsearch\_ppo.py} contains the argument \\\texttt{environment\_parameters.try=i[$1,...,n$]} to grid-search over the \texttt{try} parameter with values $1$ to $n$, for each respective repitition.

\subsection{Environment Configurations}
\label{sec:env-configs}

In our PPO experiments with fixed goals, the \texttt{EnvironmentConfig} uses the values listed in \Cref{tab:ppo-fixed-goals-env-config}, with the default values listed in \Cref{tab:env-params} and \Cref{tab:env-params-2} for the remaining parameters. For PPO experiments with dynamic goals, we use the values listed in \Cref{tab:ppo-dynamic-goals-env-config}, with the remaining values being the same as the configuration for PPO with fixed goals. We arrived at such configurations in light of our results (\Cref{ch:results}), and the commonalities among their experiments.

Some significant differences in the \texttt{EnvironmentConfig} for PPO is that the granularity of the positions, velocities and rotations are larger than with Q-Learning, since PPO encodes the elements of the state as real numbers, resolving the scalability issues in which Q-Learning suffers. In addition, since we apply PPO in multi-agent scenarios with independent learners, the \texttt{EnvironmentConfig} contains multi-agent information with respect to the tracked cars, and with dynamic goals also the tracked parking spaces. Since $n_{track} > 0$ in our PPO \texttt{EnvironmentConfig}, we no longer use rings to detect nearby cars since we have full information of their pose, and we only use the rings to avoid walls. By default, our \texttt{EnvironmentConfig} for PPO has velocity and goal sharing enabled among the agents, but it does not contain global information.


\chapter{Results}
\label{ch:results}

In this chapter, we describe the results\footnote{The datasets for the graphs in our results can be found in the provided code under the \texttt{results} directory.} of implementing of our environment designed in \Cref{ch:design}, following the implementation specified in \Cref{ch:implementation}. We begin by applying Q-Learning on the single-agent environment described in \Cref{sec:single-agent-mdp}, verifying the construction of our simple environment. Then, after facing limitations with Q-Learning, we apply IPPO to our environment with multiple agents, both with fixed and dynamic goals. We analyse the performance of our models quantitatively using the metrics obtained from our analysis tools, as explained in \Cref{sec:mdp-metrics} and \Cref{sec:ppo-analysis}.

\section{Q-Learning}

Our experiments begin with training Q-Learning in our single-agent discrete environment, verifying the environment's basic construction. Then, we add obstacles to the environment to increase its complexity, and Q-Learning soon becomes intractable.

\subsection{Basic Environment}

Our basic environment uses the default environment parameters listed in \Cref{tab:env-params} and \Cref{tab:env-params-2}, with a single agent, no parked cars, no distance to goal information, and fixed goals. Hence, the agent's state consists of its velocity and its localised angle to it's goal parking space, and its actions consist of its acceleration and angular velocity. Thus, our Q-Table has $|\mathbb{S}\times\mathbb{A}| = 3 * 8 * 3 * 3 = 216$ values, which is tractable for Q-Learning.

After tuning Q-Learning, we found that the hyper-parameters $\alpha = 0.1$, $\gamma = 0.9$ and $\epsilon = 0.3$ yield good performance, and $3000$ training episodes were necessary for Q-Learning to converge to an optimal model. It was necessary to decay $\epsilon$ from $0.3$ to $0$ over the first $2000$ episodes, then fix $\epsilon = 0$ for the remaining $1000$ episodes (known as the `evaluation' zone), to reinforce the values in the Q-Table by following the learned policy without any randomness. Stochasticity from the $\epsilon$ parameter seemed to have a large effect on the convergence of our models, since the agents' goal state is near many terminal states (walls), amplifying the effect of even a small amount of stochasticity in the agent's policy. In addition, training for a large number of episodes was necessary since the underlying MDP has $74 * 74 = 5476$ possible $(x,y)$ positions, and each action only moves the agent one position at a time, making our environment's state distribution quite sparse.

\begin{figure}[!h]
\centering
\includegraphics[width=\textwidth]{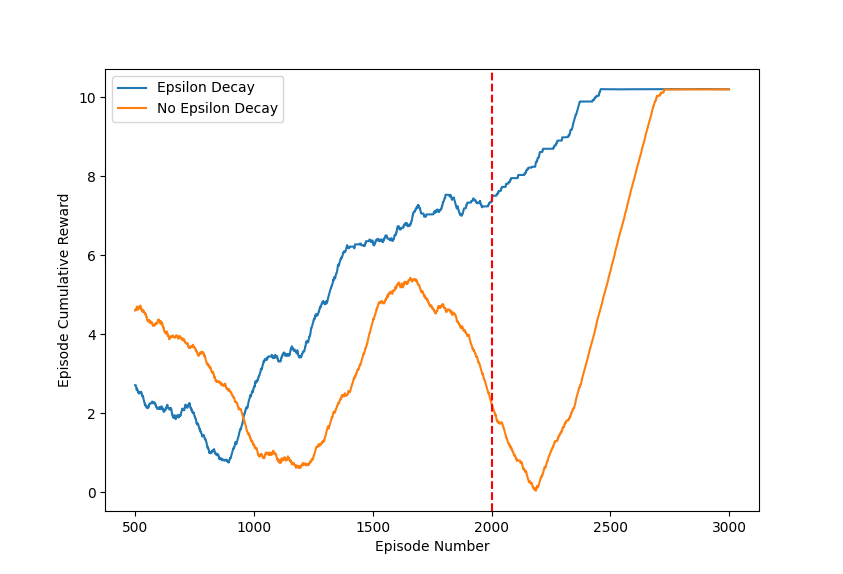}
\caption{Results of Q-Learning in the basic environment.}
\label{fig:q-learning-basic-results}
\end{figure}

\Cref{fig:q-learning-basic-results} shows the results of applying Q-Learning in our basic environment, in which we measure the agents' cumulative reward per episode, with a rolling mean of $500$ episodes for a clearer graph. The agent's cumulative reward must be close to $10$ if they learned to reach their goal, since the agent receives a reward of $10$ for reaching it. In \Cref{fig:q-learning-basic-results} the blue line corresponds to the training Q-Learning by decaying $\epsilon$ linearly to $0$ in the first $2000$ episodes, and the orange line corresponds to no such decay, and the red line indicates the start of the evaluation zone. Indeed, we see that both runs converged to an optimal policy successfully parking the agent in every episode, but the run with decaying $\epsilon$ converged faster.

\subsection{Environment With Parked Cars}

With Q-Learning shown to converge to optimal policies for our basic environment, we next add complexity to the environment, since the basic environment is unrealistic. We begin by introducing parked cars into the environment, and give the agent rings such that they can be detected and avoided. Now, our environment has $35$ parked cars, leaving only the agent's goal parking space uninhabited, putting the agent's parking space in the gap between two parked cars on either side which it must avoid.

\subsubsection{Environment Configurations}

Since the agent may have a varying number of rings with a varying number of historical ring states, and each ring may count a variable number of objects, we train multiple models with varying combinations of the ring parameters. Through observation, we found that 5 rings with diameters $14$, $11$, $10$, $6$, $7$ lie on the threshold of its nearby obstacles, providing the required distance information. We thus train 4 models counting $1$ to $3$ obstacles in each ring with no historical ring states, and another model counting $1$ obstacle in each ring with $1$ historical ring state to test the addition of historical ring states. In addition, we train a model with all possible ring diameters from $14$ to $7$ counting $1$ obstacle and no historical ring states, to fully test the information provided by the rings without a specific configuration of rings chosen. We use the same Q-Learning hyperparameters as in the basic environment.

\begin{figure}[!h]
\centering
\includegraphics[width=\textwidth]{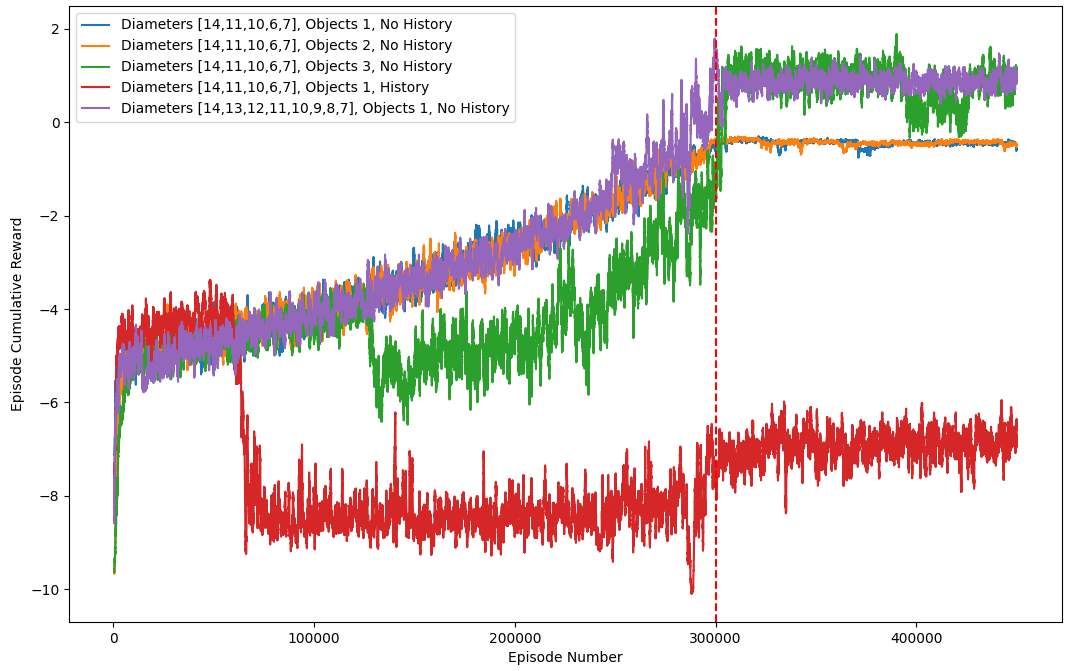}
\caption{Results of Q-Learning in the environment with parked cars and rings.}
\label{fig:q-learning-rings-results}
\end{figure}

\subsubsection{Results}

\Cref{fig:q-learning-rings-results} shows the agent's cumulative reward in our environment with parked cars when training Q-Learning with our varying ring configurations, with a rolling mean over $500$ episodes for a clearer graph. Now, it takes around $450,000$ episodes for Q-Learning to converge, which is much longer than the $3000$ episodes in our basic environment. Since the cumulative reward never converges to a value near $10$, and it instead converges to between $0$ and $2$ for most models, the agent is unable to learn to park successfully in their parking space, but they also learn not to crash with obstacles since the cumulative reward isn't near $-10$, which is the reward received for crashing (except when we include historical ring states shown by the red line). Hence, unfortunately it was found that the rings do not provide the agent enough information to avoid the obstacles, and the agent's learned behaviour is to halt near an obstacle (with a velocity of $0$) instead of crashing.

\begin{figure}[!h]
\centering
\includegraphics[width=0.8\textwidth]{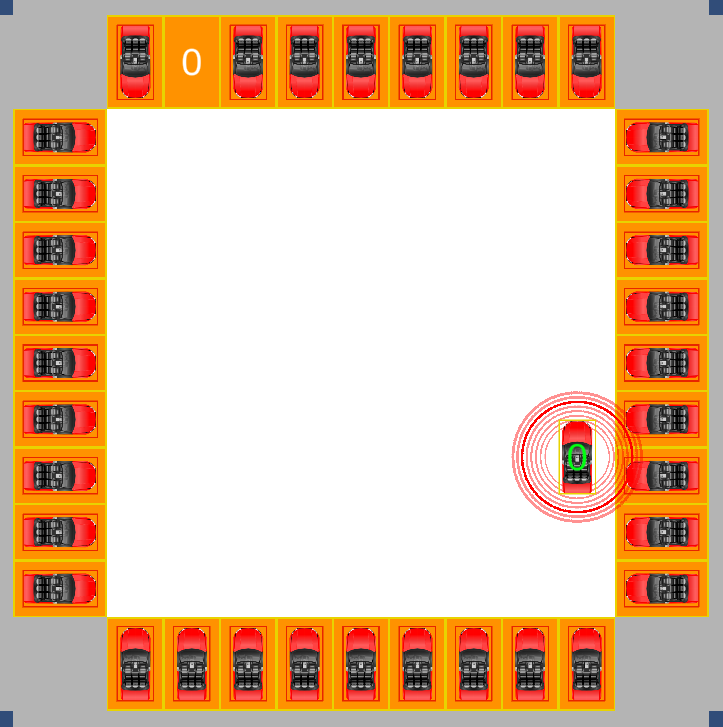}
\caption{A particular position of the agent where its learned model causes it to halt.}
\label{fig:q-learning-halt-pos}
\end{figure}

After manually inspecting the agent's learned policies, the agent was found to halt near obstacles, even when travelling in a direction away from the obstacles, as shown in \Cref{fig:q-learning-halt-pos}. Despite being sub-optimal, such a policy is optimal with respect to the information available in the MDP, since if the agent were rotated $90$ degrees towards the parked cars then its motion may cause it to crash, but the agent doesn't know its rotation with respect to the parked cars, so it makes a pessimistic assumption to avoid the $-10$ reward. Thus, the agent requires angular information in addition to distance information of its nearby obstacles, however encoding such information in the MDP significantly increases the size of the Q-Table, making the application of Q-Learning become intractable.

\subsubsection{Limitations}

We have identified a limitation with the information provided by the rings in the MDP, and require more information about the nearby cars to the agent in order to avoid them. Since the localised pose of a nearby car provides sufficient information for the agent to determine its pose, as explained in \Cref{sec:localised-pose}, we next encode such information in the state of our MDP. However, since such additional state significantly increases the dimensionality of the state-space and the Q-Table, we next apply PPO instead of Q-Learning, due to its superior scalability. Indeed, one may have further discretised the environment, however the additional effort of discretisation isn't necessary for PPO, and deep learning may discover such feature transformations for us \cite{dqn}. In addition, we're unable to train Q-Learning for longer, as the limit in duration of a job on the department's compute cluster was neared when when training for $450,000$ episodes, making training for longer difficult (as it would have to occur over multiple jobs, requiring orchestration).

\section{PPO with Fixed Goals}
\label{sec:ppo-fixed-goals-results}

Having faced scalability issues when applying Q-Learning in the environment with parked cars present, we use deep RL to overcome the issues, now approximating the policy using neural networks rather than tabulating every possible combination like Q-Learning. In this section, we describe the results of our various experiments applying IPPO in our environment with fixed goals and parked cars, obtaining a $98.1\%$ park rate with $7$ agents, and a maximal $99.3\%$ park rate with $2$ agents. In what follows, we refer to `X redundancy' as repeating an experiment X times and averaging the results, to improve their accuracy.

\subsection{Environment Configuration}

Since elements of the state in PPO are real numbers between $0$ and $1$, and they each correspond to an input to a neural network, state spaces with much higher granularity can be used. This is because an element of the state with domain $\mathbb{N}_n$ corresponds to one neuron, where the environment maps every $i \in \mathbb{N}_n$ to an input value $i / n$ to the neuron, but such an element of the state increases the size of the Q-Table by a factor of $n$ in Q-Learning. Note how the increase in neural network complexity remains one neuron irrespective of $n$, however the Q-Table's increase in size is proportional to $n$.

With this increase in scalability, we are able to increase the granularity of the positions, velocities and rotations in the environment, such that they're more realistic. In addition, we are able to track nearby cars, encoding their localised pose in the state. \Cref{tab:ppo-fixed-goals-env-config} lists the environment parameters used in our experiments of applying IPPO with fixed goals. Note that the granularity of the positions, rotations and velocities have increased to more realistic values, and the agents now track one nearby car (which includes other agents, making the configuration applicable to multi-agent settings). Since the agents now know the pose of its nearest car obstacle, we only use one ring to detect and avoid walls. In addition, we now have 16 parked cars in the environment, which is approximately $50\%$ of the maximum. By default, we share the agents' goals and velocities in multi-agent experiments.

\subsection{Setup}

After tuning PPO's hyperparameters using grid-searching on the suggested ranges provided by ML-Agents \cite{ml-agents} and harnessing the hyperparameters provided by \citet{ppo-self-parking} as a baseline, we obtain a set of stable hyperparameters for IPPO in our higher granularity environment, listed in \Cref{tab:ppo-hyperparams}. We also find that a total of 16 million steps are required to converge IPPO to global optima with 7 agents, and a smaller total of 10 million are required with a single agent. Training for such a number of time-steps takes approximately one day of compute time with two CPUs and 6 gigabytes of RAM on the department's CPU cluster. As explained in \Cref{sec:ppo-train}, we reserve $80\%$ of those episodes for training, and the last $20\%$ for evaluation. In our subsequent applications of IPPO, we measure the performance of our models using the various metrics recorded as explained in \Cref{sec:ppo-analysis}, since manual analysis of the agents' cumulative reward is less informative and time-consuming. Some key metrics used are the agents' park, crash and halt rates, which are the ratio of their episodes in which they successfully parked, crashed, or halted, respectively (here, the agent `halted' if they neither parked nor crashed).

\begin{table}[!h]
\centering
\begin{tabular}{|l|r|}
\hline
\textbf{Parameter Name}                                                   & \multicolumn{1}{l|}{\textbf{Parameter Value}} \\ \hline
Batch Size                                                                & 32                                            \\ \hline
Buffer Size                                                               & 4096                                          \\ \hline
Learning Rate                                                             & 0.0001                                        \\ \hline
Beta                                                                      & 0.002                                         \\ \hline
Epsilon                                                                   & 0.25                                          \\ \hline
Lambda                                                                    & 0.925                                         \\ \hline
\# Epoch                                                                   & 10                                            \\ \hline
\begin{tabular}[c]{@{}l@{}}\# Hidden Units\\ (Neural Network)\end{tabular} & 256                                           \\ \hline
\begin{tabular}[c]{@{}l@{}}\# Layers\\ (Neural Network)\end{tabular}       & 3                                             \\ \hline
Gamma                                                                     & 0.999                                         \\ \hline
Time Horizon                                                              & 128                                           \\ \hline
\end{tabular}
\caption{PPO hyperparameters.}
\label{tab:ppo-hyperparams}
\end{table}

One particular hyperparameter with a large effect on the quality our learned models is the discount factor, $\gamma$. When training with 7 agents, we find that their crash rate varies between $4.41\%$ and $2.04\%$ when $\gamma$ varies between $0.92$ and $0.9998$, and the crash rate decreases the larger the value of $\gamma$. \Cref{fig:vary-gamma} show how the agents' crash rate varies with $\gamma$, where the bottom graph shows the larger values of $\gamma$ in more detail. Since $\gamma = 0.999$ yields a low crash rate and is outside the region of marginal instability beyond $0.999$, we use $\gamma = 0.999$ in our other experiments. We tuned the the other hyperparameters in a similar way, seeking values that yield low crash rates.

\begin{figure}[!h]
\centering
\includegraphics[width=.8\textwidth]{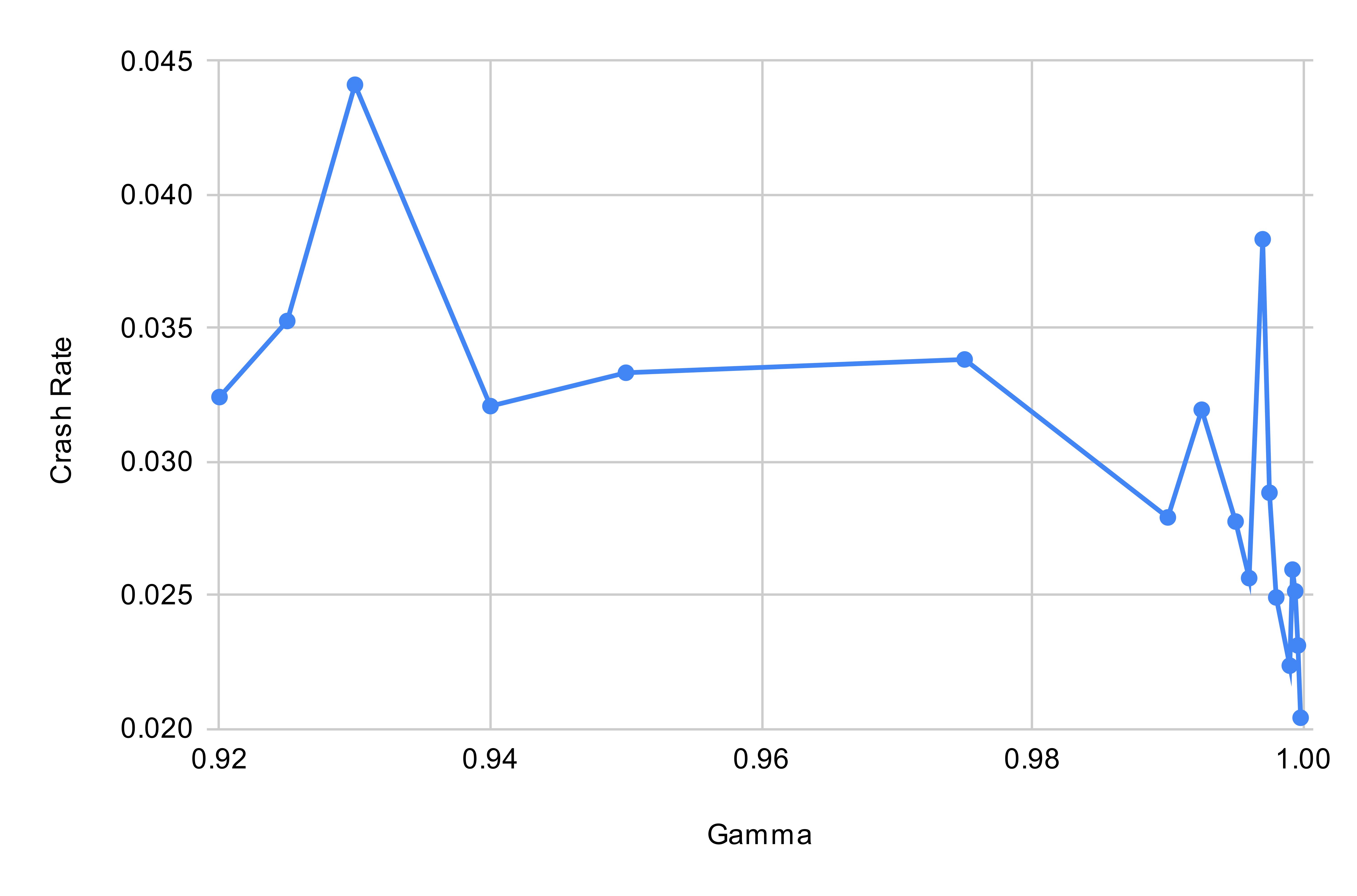}
\includegraphics[width=.8\textwidth]{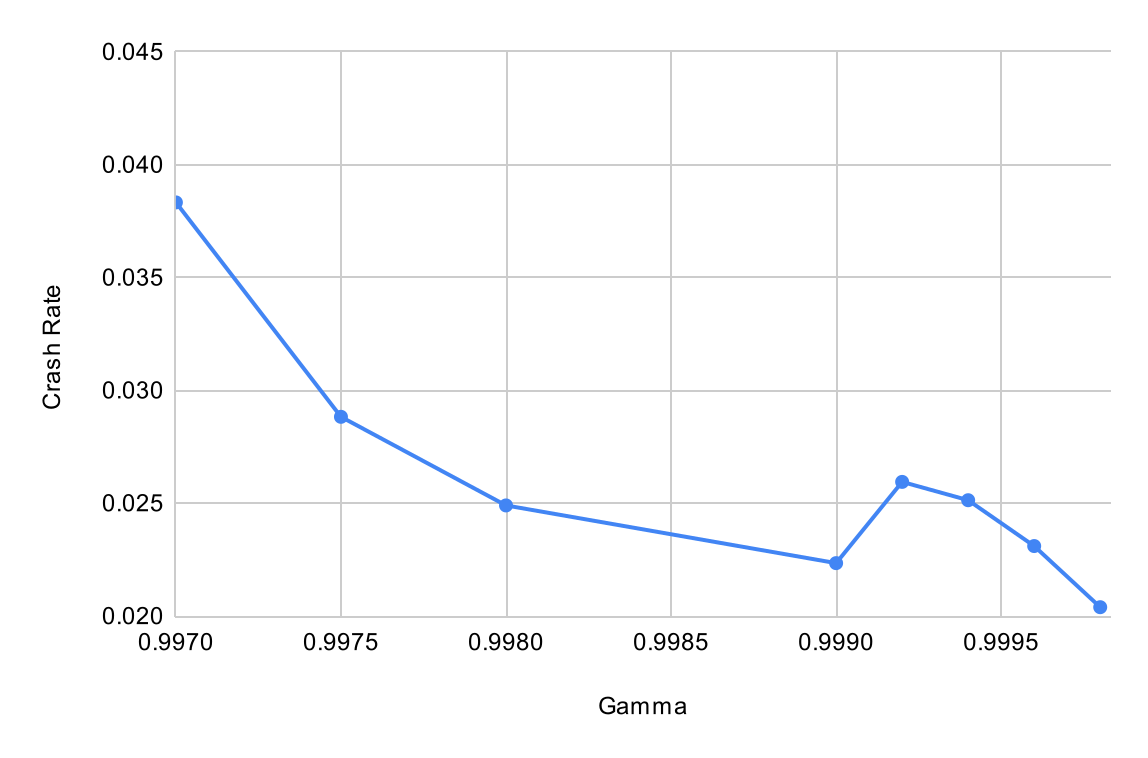}
\caption{Crash rate when varying the discount factor $\gamma$ with 7 agents.}
\label{fig:vary-gamma}
\end{figure}

\subsection{Density and Dimensionality}

Having obtained a stable set of hyperparameters for IPPO in our environment, we vary some of the environment's key parameters to identify trends relating to its density and dimensionality. In this section, we describe the results from our experiments of applying IPPO and varying the number of agents, the number of parked cars, and the number of tracked cars ($n_{track}$) in our environment, since such parameters are critical to the environment's density and dimensionality.

\subsubsection{Number of Agents}

When varying the number of agents from $1$ to $7$ and training for an equal number of $14$ million steps shared among the agents (with triple redundancy), we find that the agents' park rate decreases from $99.3\%$ with 2 agents to $97.4\%$ with $7$ agents, and a significantly lower $30.9\%$ with $1$ agent. \Cref{fig:vary-num-agents} shows a graph of the agents' park rate when varying the number of agents from $2$ to $7$ (with the single-agent outlier excluded). Thus, $2$ agents yields the optimal park rate in our environment, and increases beyond $2$ decrease the park rate by a small amount, however decreases below $2$ decrease the park rate by a prohibitively large amount.

\begin{figure}[!h]
\centering
\includegraphics[width=.8\textwidth]{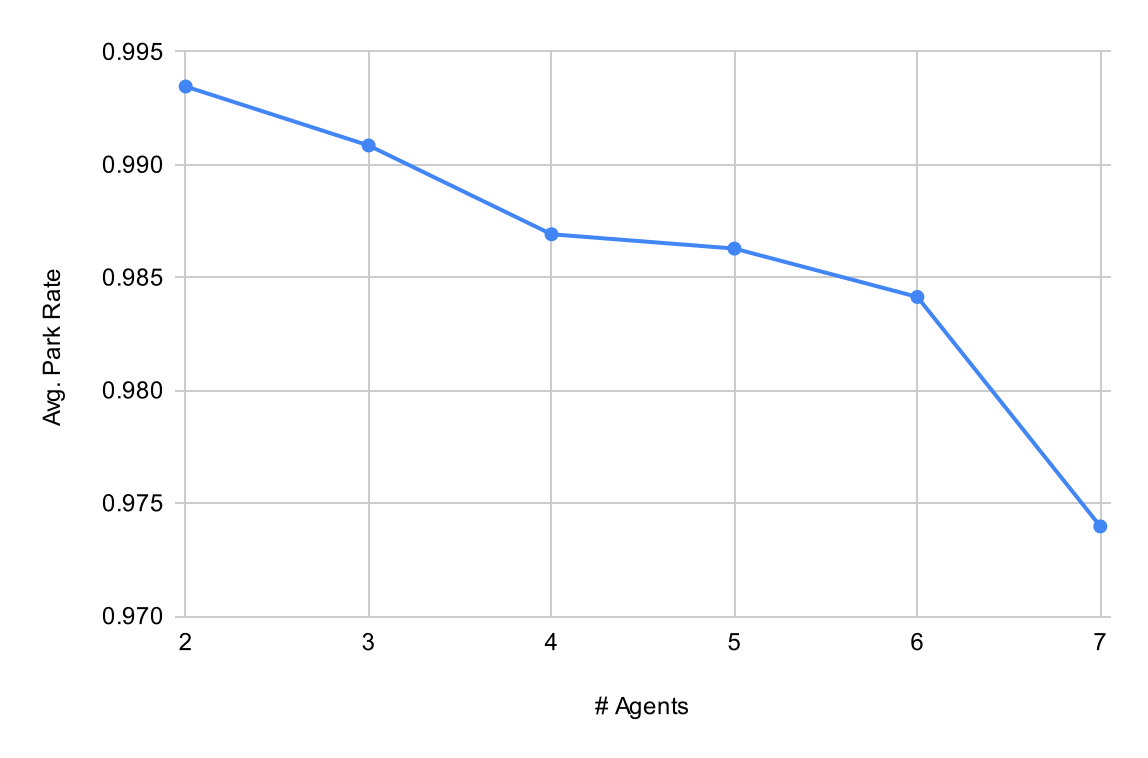}
\caption{Park rate when varying the number of agents.}
\label{fig:vary-num-agents}
\end{figure}

While the non-stationarity of our multi-agent setting makes the decreasing park rate beyond $2$ agents expected, the significant increase in park rate between $1$ and $2$ agents is an unexpected result. We believe such an increase is a result of an increase in the density of the environment with $2$ agents, since there are more states in which the agent has to avoid a car (since agents have to avoid other agents as well as parked cars). Hence, the `training data' (i.e. the situations that arise in the environment) is denser with more agents, giving the agents more situations to learn from. We arrive at such a conclusion by manually inspecting the cumulative reward graphs for the models, observing that the agents get stuck in halting local optima for a long time during training (in which the agents learn to halt rather than to park or crash), and the agents escape such local optima more quickly when there are more agents present in the environment.

Since our experiments reveal a relatively high parking rate with $7$ agents, our other multi-agent experiments use $7$ agents, due to the attractive density they provide to our training data (as shown in \Cref{fig:env-impl}).

\subsubsection{Number of Parked Cars}

Another parameter varying the density of our environment is the number of parked cars within. With more parked cars, the agents' parking spaces are increasingly likely to be surrounded by more parked cars, making it more difficult for them to successfully park without colliding. Hence, we desire a large number of parked cars in our environment, such that the agents learn to park in challenging scenarios.

To assess the effect of the number of parked cars on the agents' learned models, we vary the number of parked cars between $0$ (one) and $35$ (all) with $1$ agent, and between $0$ (none) and $29$ (all) with $7$ agents. We train for $6$ million steps with $1$ agent, and $16$ million steps among the $7$ agents, with triple redundancy. \Cref{fig:vary-num-parked-cars-1ag} and \Cref{fig:vary-num-parked-cars-7ag} show how the agents' park and halt rates vary with the number of parked cars in the environment, for the experiments with $1$ and $7$ agents respectively.

\begin{figure}[!h]
\centering
\includegraphics[width=.8\textwidth]{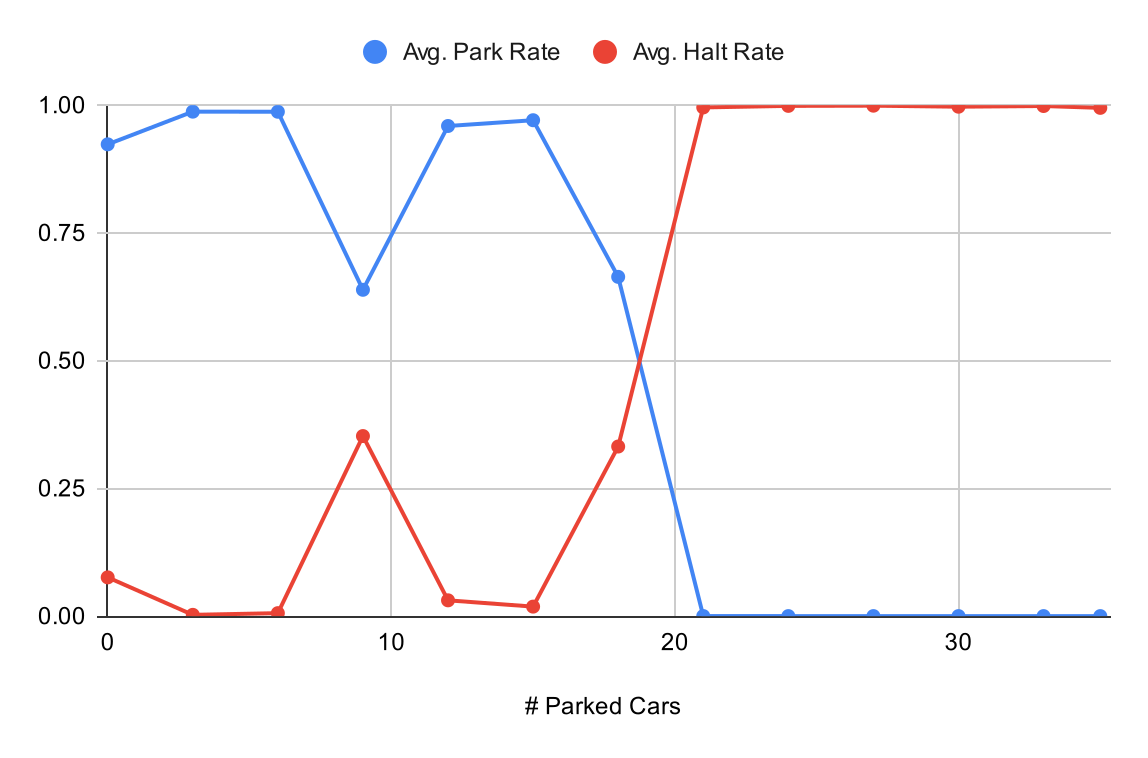}
\caption{Park rate when varying the number of parked cars, with $1$ agent.}
\label{fig:vary-num-parked-cars-1ag}
\end{figure}

\begin{figure}[!h]
\centering
\includegraphics[width=.8\textwidth]{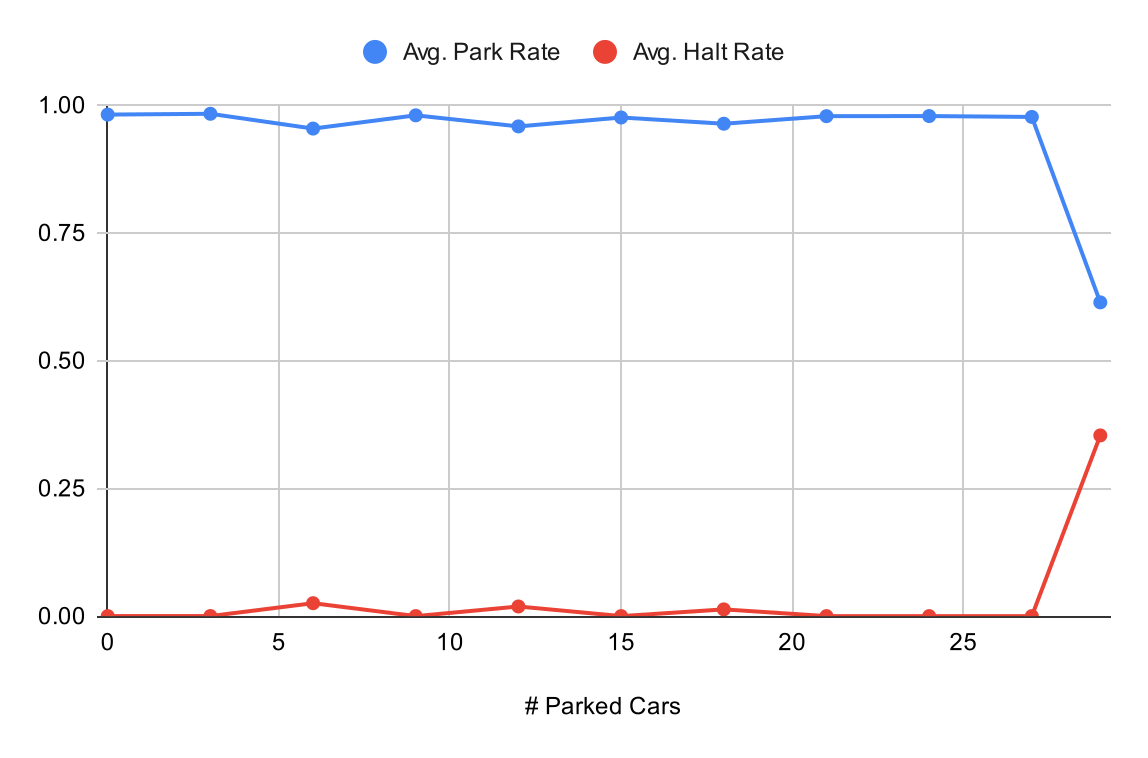}
\caption{Park rate when varying the number of parked cars, with $7$ agents.}
\label{fig:vary-num-parked-cars-7ag}
\end{figure}

It can be seen from \Cref{fig:vary-num-parked-cars-1ag} that with a single agent in the environment, they're unable to learn to successfully park when there are more than approximately 16 parked cars in the environment, since their park rate drops to $0\%$. Instead, they learn to halt, getting stuck in local optima, after manual analysis of their cumulative reward graphs. Intuitively, this happens because when the agent begins training it does not yet know how to reach its parking space, and when its parking space is more likely to be surrounded by parked cars, it's more likely to collide with a neighbouring parked car (as more exist), making it less likely to reach its parking space during training. Since the agent reaches its parking space less often, it's less able to reinforce the behaviour of reaching it. This result shows that there is a trade-off that must be made between the density of our environment and the convergence of our models, where too high a density results in poor convergence, and vice-versa.

On the contrary, \Cref{fig:vary-num-parked-cars-1ag} shows that when training with a large number of agents, the agents learn to park successfully even with a large number of parked cars. Hence, we have another result showing multiple agents aiding training. The reason we obtain higher parking rates here is again due to density, since the agents are able to learn to avoid cars also from each-other - not only with the parked cars in the single-agent case.

Thus, in light of our findings, we use $16$ parked cars in our environment for other experiments (populating approximately half of the parking spaces with parked cars). Such a value is large enough to arise sufficiently difficult parking scenarios, yet it's small enough to yield a high parking rate irrespective of the number of agents.

\subsubsection{Number of Tracked Cars}

Since the agents use the localised pose of its nearby cars to avoid them, the number of nearby cars they're tracking ($n_{track}$) may vary their park rate. Hence, we measure the agents' park rate with $n_{track}$ varying between $0$ and $4$, from no knowledge of nearby cars to full knowledge of $4$ nearby cars. \Cref{fig:vary-num-tracked-cars} shows the resulting park rate for different values of $n_{track}$, where we trained $7$ agents for $16$ million steps and averaged the results over $3$ repetitions.

\begin{figure}[!h]
\centering
\includegraphics[width=.8\textwidth]{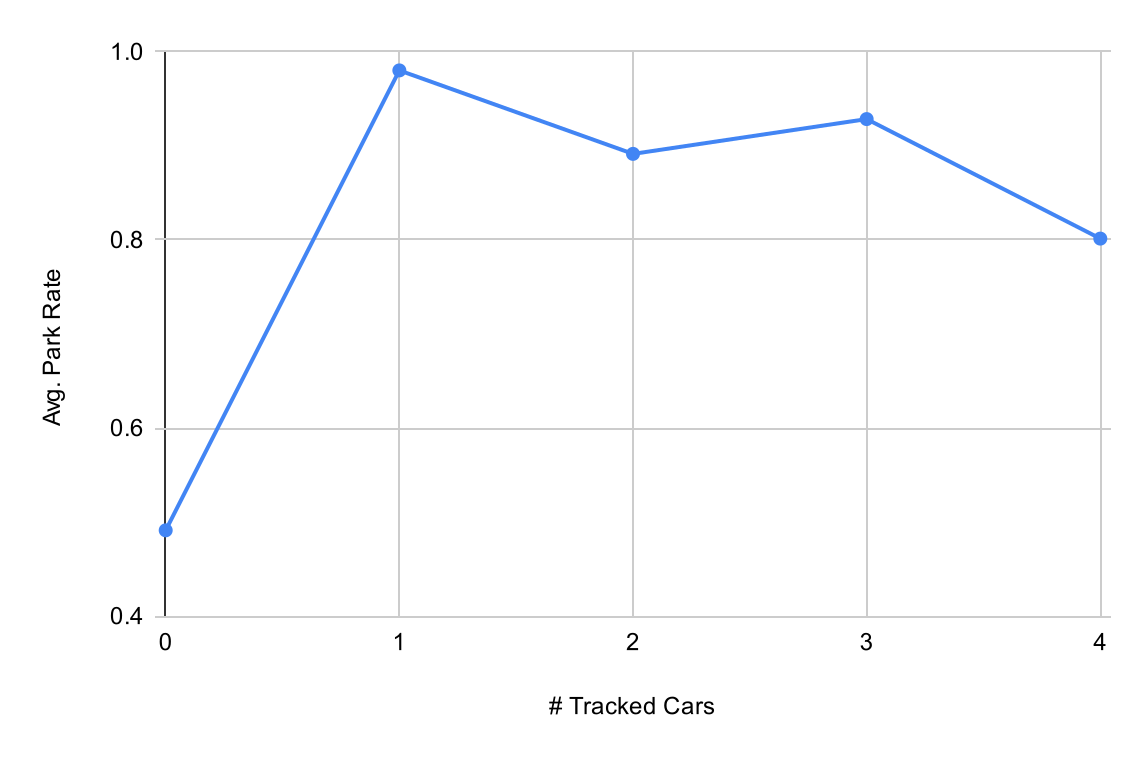}
\caption{Park rate when varying the number of tracked cars.}
\label{fig:vary-num-tracked-cars}
\end{figure}

It can be seen from the graph in \Cref{fig:vary-num-tracked-cars} that $n_{track} = 1$ is optimal, and that tracking more than $1$ agent reduces the park rate. Hence, the agents' optimal learned policy greedily avoids the closest car to it, and the inclusion of the information of other cars only hinders the agents' learned policy. Such a result is unexpected, because it's natural to assume that giving the agent more information will enable it to make better decisions. However it appears that the information of more than one nearby car is not utilised, and it increases the dimensionality of the state-space without much benefit. 

In addition, it can be observed from the graph that the park rate is much larger when tracking agents than not tracking them. Hence, there are a significant number of scenarios that arise in our environment where the agents require the localised pose of nearby cars to avoid them, which validates our environment's construction since the agents are unable to yield a high park rate by following a trivial policy within.

\subsubsection{Shared State}

In addition to tracking nearby agents, agents may share their goals and velocities with each other to enable better path planning, as explained in \Cref{continuous-mdp-fixedgoals}. Hence, we assess the agents' improvement in park rate when sharing their goals and velocities, to quantify the benefit of such shared state in our MDP.

We train 7 agents for $16$ million steps in our environment under four configurations:
\begin{itemize}
    \item Without shared goals or velocities;
    \item With shared goals but without shared velocities;
    \item With shared velocities but without shared goals;
    \item With both shared velocities and shared goals.
\end{itemize}
In each configuration, we measured the agents' average park rate, with a redundancy of 6 repititions. \Cref{fig:vary-shared-info} shows the agents' average park rate in each of the four configurations.

\begin{figure}[!h]
\centering
\includegraphics[width=.8\textwidth]{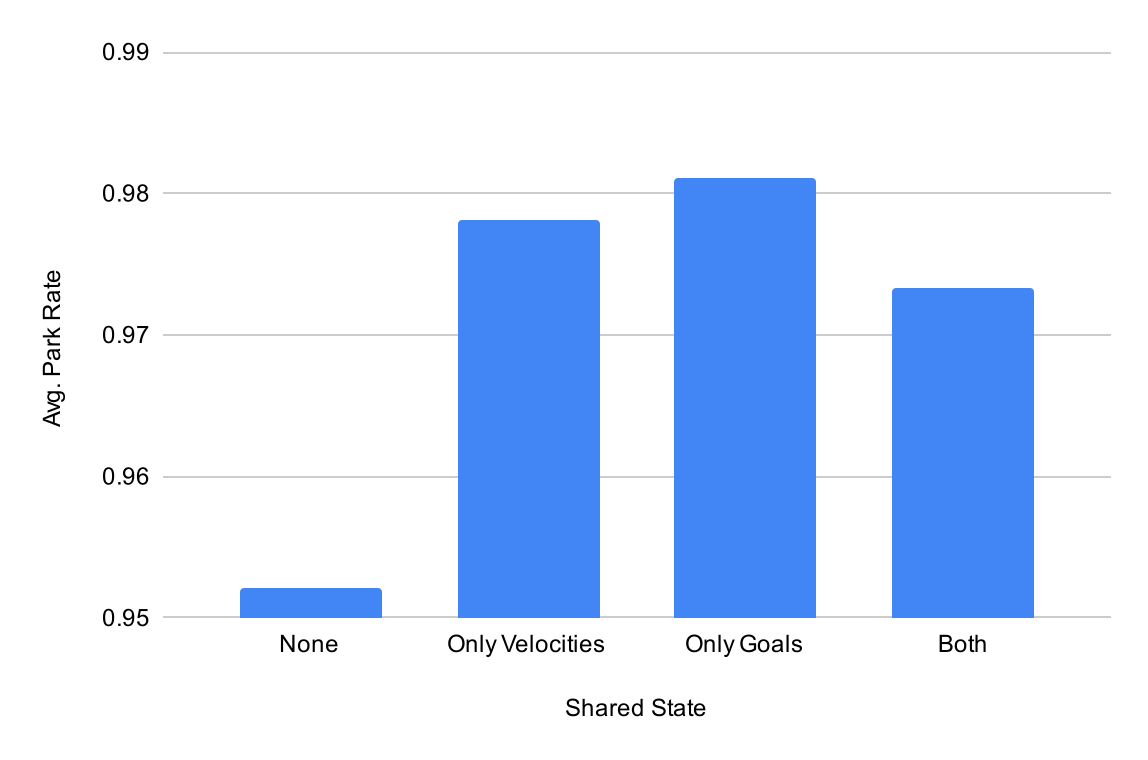}
\caption{Park rate when varying the shared information between the agents.}
\label{fig:vary-shared-info}
\end{figure}

It can be seen from the graph in \Cref{fig:vary-shared-info} that  the agents have a higher park rate with shared state, and the highest park rate occurs when the agents share only their goals, increasing the park rate from $95.2\%$ with no shared state to $98.1\%$ with only shared goals, a $2.9\%$ increase. Interestingly, when the agents share both their velocity and their goals, the park rate decreases from when only one of them are shared. This may be due to sharing both increasing the dimensionality of the MDP too much, with the combination of the two data being less useful than one datum alone. However, since sharing both doesn't decrease the park rate by much, and the agents have access to more information which may be useful as the complexity of the MDP grows, we share both as a baseline.

\subsection{Safety}

Having assessed the agents' park rate in our environment with varying density and dimensionality, we next seek to assess the safety of their driving, since it's also important \textit{how} the agents reach their goals. In this section, we attempt two techniques to improve the safety of the agents' driving. Our first technique is to punish the agents proportional to their forward and angular velocities to increase the smoothness of their driving, as explained in  \Cref{sec:mdp-rewards}. Secondly, we train the agents with larger hitboxes so they learn to keep a threshold distance away from the other cars, as explained in and \Cref{sec:mdp-optimisations}.

\subsubsection{Smoothness}

To measure the effect of constraining the agents' forward velocities ($v$) and angular velocities ($\omega$) such that they drive smoothly, we reward each agent $\frac{-k}{\tau}|v||\omega|$ at each time-step, varying the punishment sum $k \geq 0$, with larger values enforcing the punishment more heavily. We measure the agents' average forward and angular velocities, as well as their crash rates, to determine the effect of enforcing the punishment.

After training $7$ agents for $16$ million steps with varying values of $k$, we obtain the results shown in \Cref{fig:vary-vdtheta-rew-vel} and \Cref{fig:vary-vdtheta-rew-crash}. In \Cref{fig:vary-vdtheta-rew-vel}, we see that as the punishment is enforced, both the agents' forward and angular velocities decrease (we normalise their values in the graph). However, the angular velocities decrease to an asymptote much more quickly than the forward velocities. Hence, the agents conform to driving more smoothly when enforcing the punishment.

\begin{figure}[!h]
\centering
\includegraphics[width=.8\textwidth]{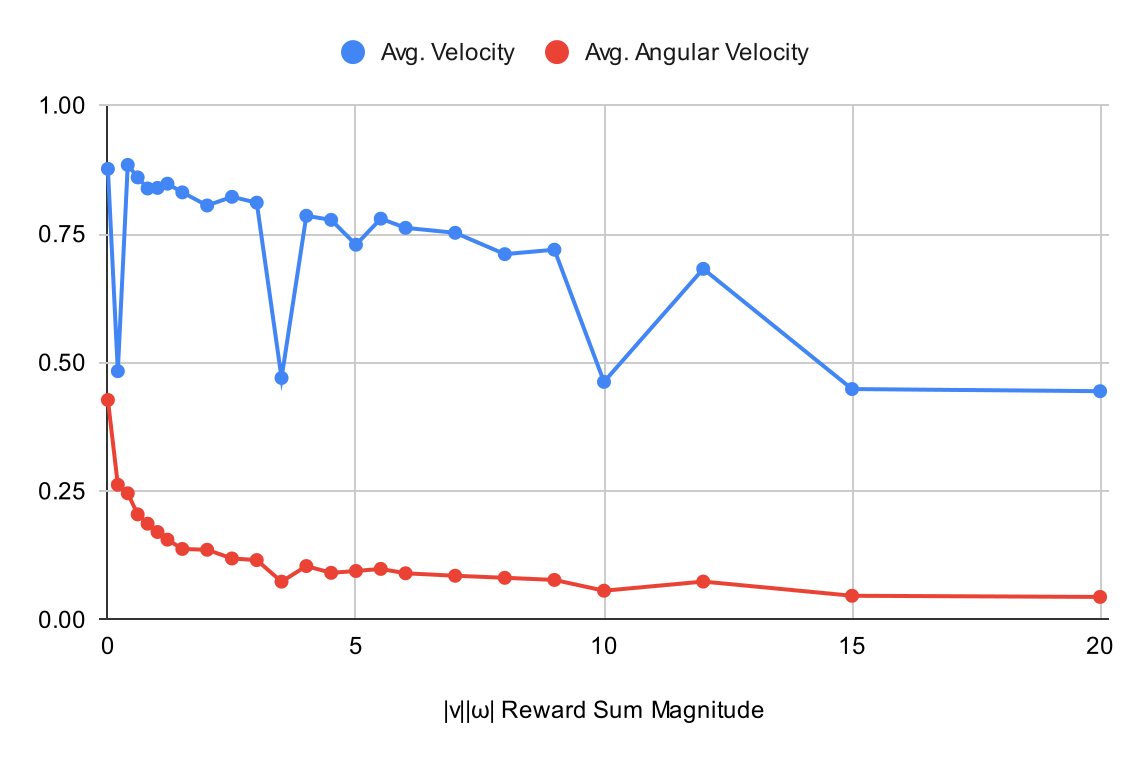}
\caption{Average forward and angular velocity when varying the magnitude of the smoothness punishment sum.}
\label{fig:vary-vdtheta-rew-vel}
\end{figure}

\begin{figure}[!h]
\centering
\includegraphics[width=.8\textwidth]{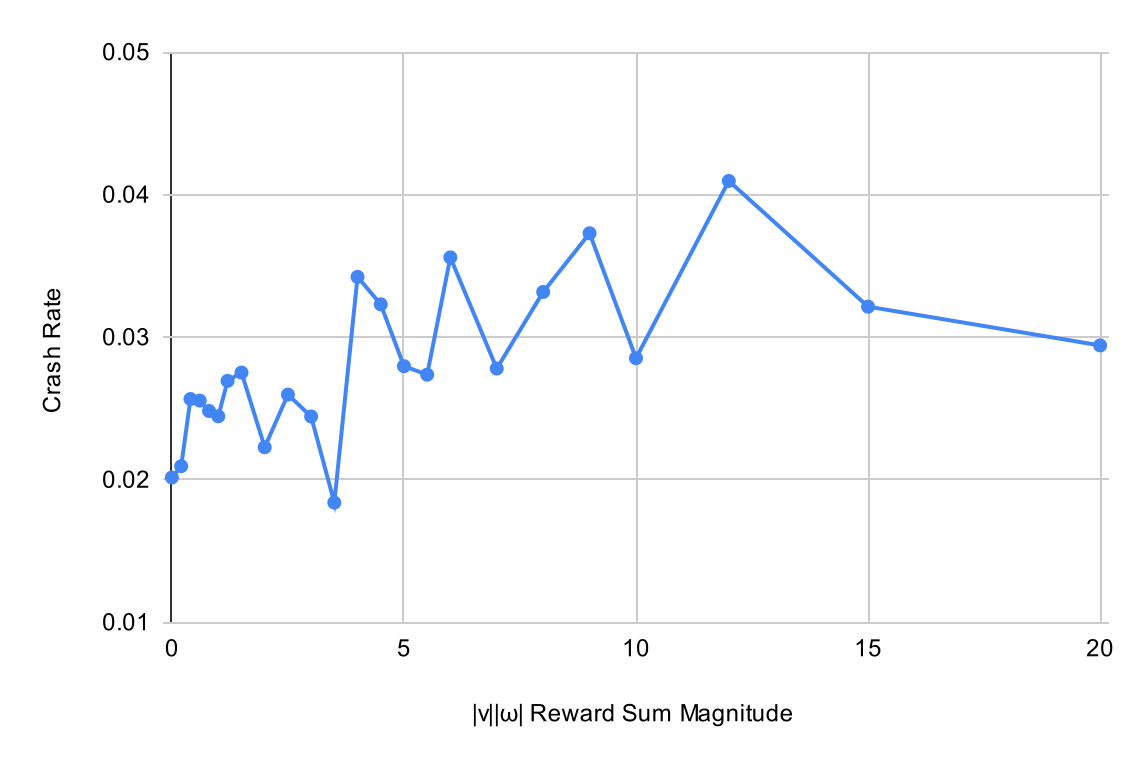}
\caption{Crash rate when varying the magnitude of the smoothness punishment sum.}
\label{fig:vary-vdtheta-rew-crash}
\end{figure}

However, as can be seen in \Cref{fig:vary-vdtheta-rew-crash}, the agents' crash rate increases when the punishment is enforced. This is expected, because the punishment limits the agents' choice of their rotations, making them avoid sharp turns, in-tern reducing their ability to escape dangerous scenarios. Hence, we do not get an increase in driving safety for free - it comes at the expense of an increased crash rate. Thus, a compromise must be made between driving smoothness and park rate in our models.

\subsubsection{Larger Hitboxes During Training}

To attempt to further increase the safety of the agents in our environment, we increase the cars' hitboxes during training, such that the agents learn to keep a threshold distance away from them. We measure the agents' average distance to their closest car and their crash rates to determine the effect of increasing the cars' hitboxes during training.

After training $7$ agents for $16$ million steps with varying scale factors of the cars' hitboxes in the environment (with double redundancy), we obtain the results shown in \Cref{fig:vary-car-scale-park-rate-nearbydist} and \Cref{fig:vary-car-scale-park-rate}.

\begin{figure}[!h]
\centering
\includegraphics[width=.8\textwidth]{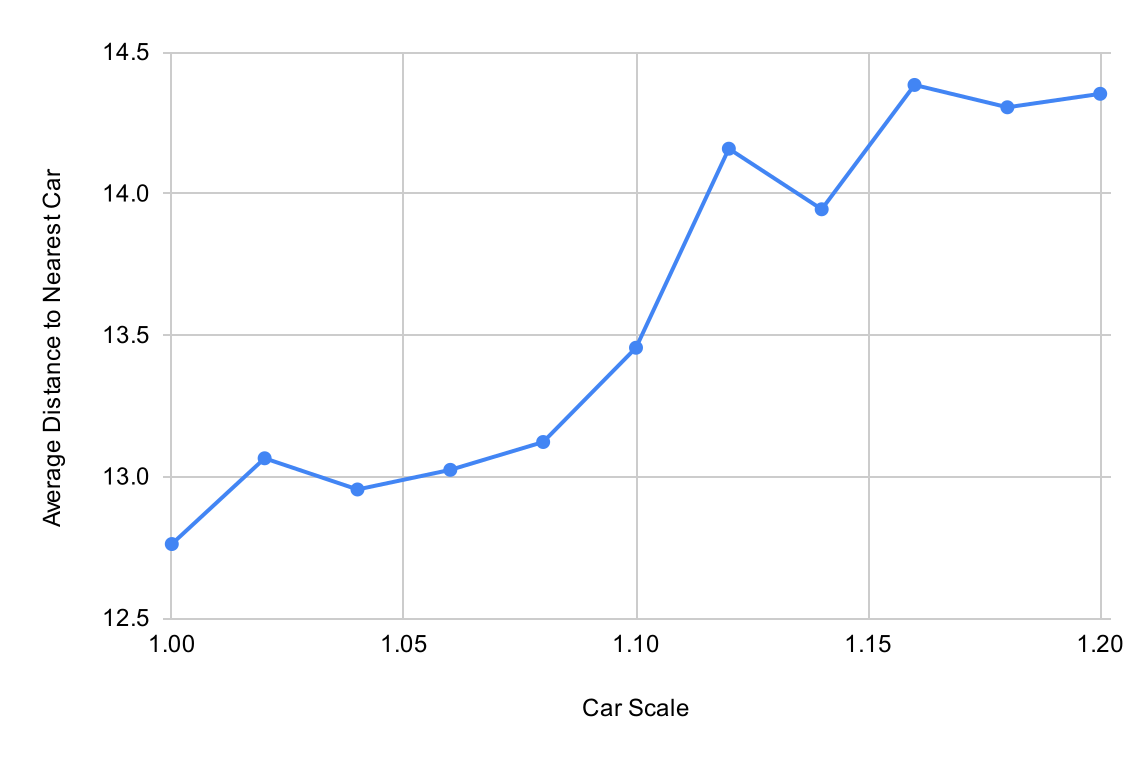}
\caption{Average distance to the agents' nearest car when varying the scale factor of the cars' hitboxes during training.}
\label{fig:vary-car-scale-park-rate-nearbydist}
\end{figure}

\begin{figure}[!h]
\centering
\includegraphics[width=.8\textwidth]{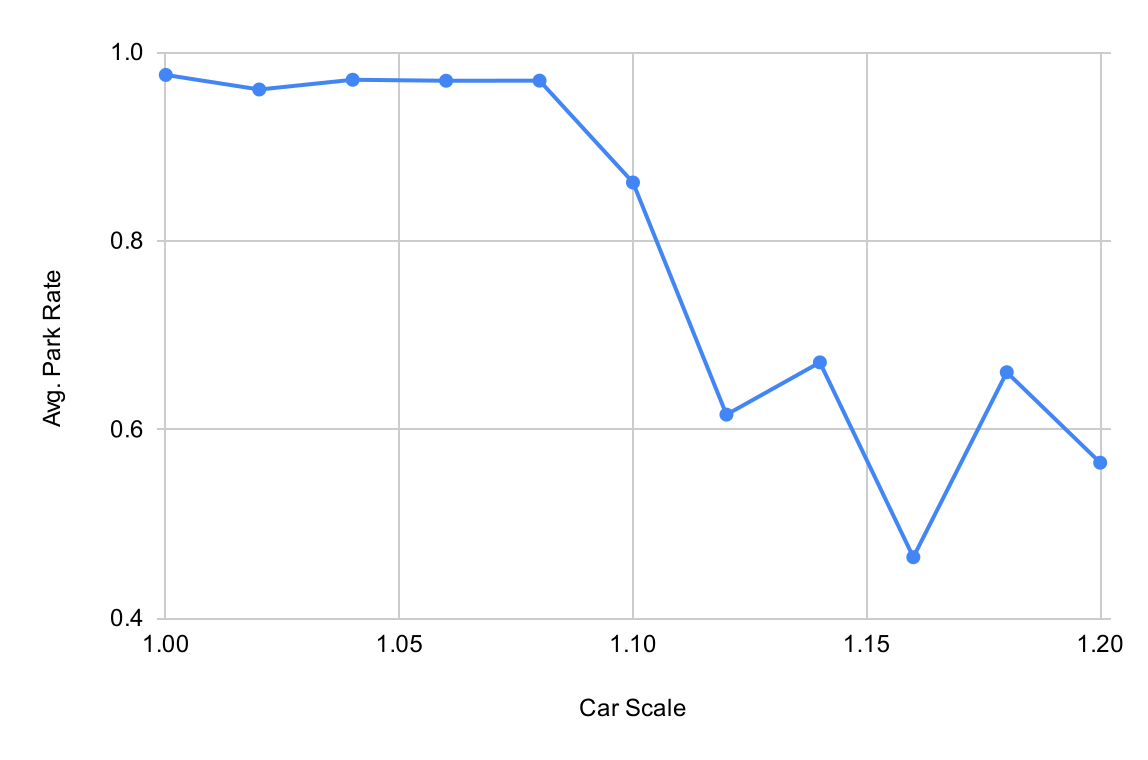}
\caption{Park rate when varying the scale factor of the cars' hitboxes during training.}
\label{fig:vary-car-scale-park-rate}
\end{figure}

We can see from \Cref{fig:vary-car-scale-park-rate-nearbydist} that increasing the scale of the cars' hitboxes during training increases the agents' average distance to their closest car, as desired. However, \Cref{fig:vary-car-scale-park-rate} shows that such an increase in safety doesn't come for free, with the park rate decreasing as the scales increase, due to the environment having a higher density during training with larger hitboxes. Hence, similar to the smoothness punishment, a compromise must be made between the distance the agents leave with the cars and the park rate in our models.

\section{PPO with Dynamic Goals}

Having assessed the behaviour of the agents in our environment with fixed goals under various configurations, we next assess their behaviours with dynamic goals. In this section, we describe the results of applying IPPO in our environment with dynamic goals, quantifying properties relating to its dimensionality as well as competitive and collaborative behaviours that arise in the environment.

\subsection{Environment Configuration}

As explained in \Cref{sec:env-configs}, our environment with dynamic goals uses the same environment parameters as our environment for IPPO with fixed goals in \Cref{sec:ppo-fixed-goals-results}, but with some additional modifications listed in \Cref{tab:ppo-dynamic-goals-env-config}. By default, the agents track only $1$ parking space (i.e. $n_{space} = 1$), $r_{\delta g}(e \rightarrow e) = -0.002$ to discourage exploration, and $r_{\delta g}(p \rightarrow p') = -0.05$ to make the agents commit to a goal. We also enforce $\Cref{eq:rew-change-goal-equiv-simplified}$ so $r_{\delta g}(p \rightarrow e)$ is always equal to $r_{\delta g}(p \rightarrow p')$, hence when we tune $r_{\delta g}(p \rightarrow p')$ we also tune $r_{\delta g}(p \rightarrow e)$.

\subsection{Density and Dimensionality}

As in our environment with fixed goals, we vary the environment's parameters relating to density and dimensionality, and describe the results of applying IPPO with the environment in its different configurations. We first vary the density of the environment by varying the number of agents, then we vary the MDP's dimensionality by varying the number of tracked cars ($n_{track}$) and parking spaces ($n_{space}$).

\subsubsection{Number of Agents}

When varying the number of agents from $1$ to $7$ and training for $14$ million steps among the agents (and averaging the results over $3$ repetitions), the agents' park rate decreases from $99.9\%$ with $1$ agent to $99.5\%$ with $7$ agents, as shown in \Cref{fig:vary-num-agents-dynamicgoals}. Hence, we have much higher park rates in our environment with dynamic goals than with fixed goals, and this is likely due to the fact that the agent can always choose the closest parking space to them as their goal, which reduces the distance the agents have to travel and thus reduces the chance of a collision. In addition, now the optimal density of the environment contains $1$ agent rather than $2$ in the environment with fixed goals. As with fixed goals, our other experiments in this section use $7$ agents due to the attractive density they provide to the environment.

\begin{figure}[!h]
\centering
\includegraphics[width=.8\textwidth]{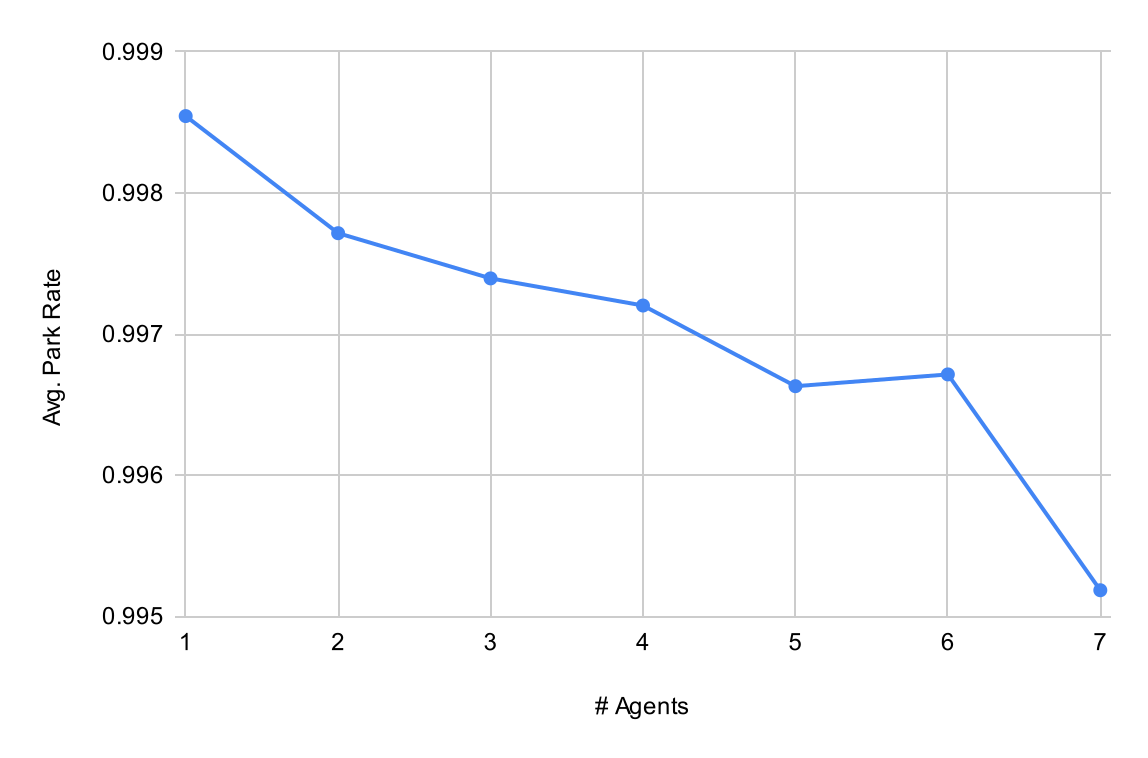}
\caption{Park rate when varying the number of agents in the environment with dynamic goals.}
\label{fig:vary-num-agents-dynamicgoals}
\end{figure}

\subsubsection{Number of Tracked Cars and Parking Spaces}

Since $n_{space}$ is now also a parameter relating to the dimensionality of our MDP (along with $n_{track}$), we vary both $n_{space}$ and $n_{track}$ to identify the optimal values of both. We train $7$ agents for $16$ million steps, and vary both $n_{space}$ and $n_{track}$ between $1$ and $4$ respectively (obtaining $4 * 4 = 16$ total combinations), assessing the agents' park rate for each combination (with triple redundancy).

\begin{figure}[!h]
\centering
\includegraphics[width=.8\textwidth]{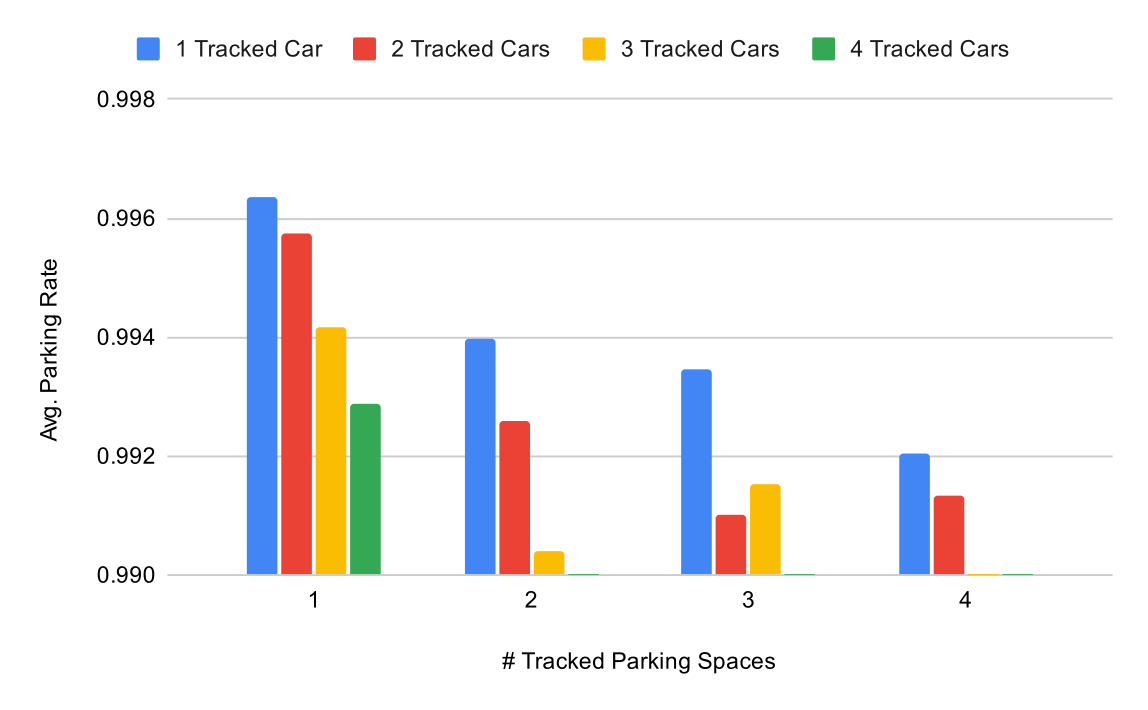}
\caption{Park rate when varying the number of tracked cars and parking spaces.}
\label{fig:vary-num-tracked-cars-parkingspaces}
\end{figure}

\Cref{fig:vary-num-tracked-cars-parkingspaces} shows our results, showing that the agents' park rate is highest when both $n_{track} = n_{space} = 1$, and the park rate decreases when either $n_{track}$ or $n_{space}$ are increased. Hence, similar to the environment with fixed goals, the agents do not benefit from the extra dimensionality of tracking more than one car or parking space, and have the highest park rate when acting greedily and only observing their closest car and parking space. Hence, we use $n_{track} = n_{space} = 1$ as a benchmark in our other experiments. It should be noticed that $n_{space} = 1$ does not prevent the agents from changing their goal parking space in our MDP, since they're still able to explore and find a new one.

\subsection{Competition}
\label{sec:results-competition}

Since competition is an important aspect of the agents' behaviour, in this section we assess competitive behaviours exhibited by the agents in our environment with dynamic goals.

\subsubsection{Method}

One parameter controlling the amount of competition exhibited by the agents in our environment is the magnitude of the change goal punishment, $r_{\delta g}(p \rightarrow p')$, since larger punishments cause the agents to be more reluctant to change their goal parking spaces, causing them to pursue their goal parking space in a more competitive manner. Hence, we vary $r_{\delta g}(p \rightarrow p')$ and measure various metrics relating to the agents' goal seeking behaviour. We measure the following metrics:
\begin{itemize}
    \item The ratio of the time-steps in which the agents are exploring. This metric enables us to quantify the `decisiveness' of the agents, since a high exploration ratio means the agents are less willing to dedicate to a goal parking space, exhibiting indecisive behaviour.
    \item The ratio of the time-steps in which the agents move towards their goal parking space when they have a goal parking space, and the ratio of time-steps in which they move towards their tracked parking space when they're exploring. These metrics enable us to quantify the agents' accuracy towards their goal parking space, both when they have a goal parking space and also when they do not. In the case that they do not have a goal parking space but have have a high accuracy towards their tracked parking space, the agent is moving towards an uncommunicated goal (since to the other agents it appears the agent is exploring) which can considered selfish as the agent is pursuing its goal without telling the other agents.
    \item The average number of stopped goals ($p \rightarrow e$ goal transitions) per episode. This metric also enables us to measure the decisiveness of the agents, since a high number of stopped goals per episode means the agents are willing to change their mind quickly and stop pursuing a goal, which can be considered indecisive (and also uncompetitive) behaviour.
    \item The average number of lost goals (times when the agent is no longer able to track their goal parking space) per episode. This metric enables us to measure the accuracy of their agents towards their goal, as well as the likelihood that their chosen goals are likely to not be taken by other agents, since the agent may lose its goal if it moves away from it or if another agent takes it. Hence, a low number of lost goals indicates the agent is picking good goals and that they travel accurately towards them.
\end{itemize}

\subsubsection{Results}

With our measured metrics defined, we train $7$ agents for $16$ million episodes with $r_{\delta g}(p \rightarrow p')$ varying between $0$ and $-0.3$. \Cref{fig:vary-changegoalrew-exploremovetowards} shows the results of the ratio of the agents' exploration and the ratio of their movement towards their goal while both having a goal and exploring, and \Cref{fig:vary-changegoalrew-loststoppedgoals} shows the results of the average number of stopped and lost goals.

\begin{figure}[!h]
\centering
\includegraphics[width=.8\textwidth]{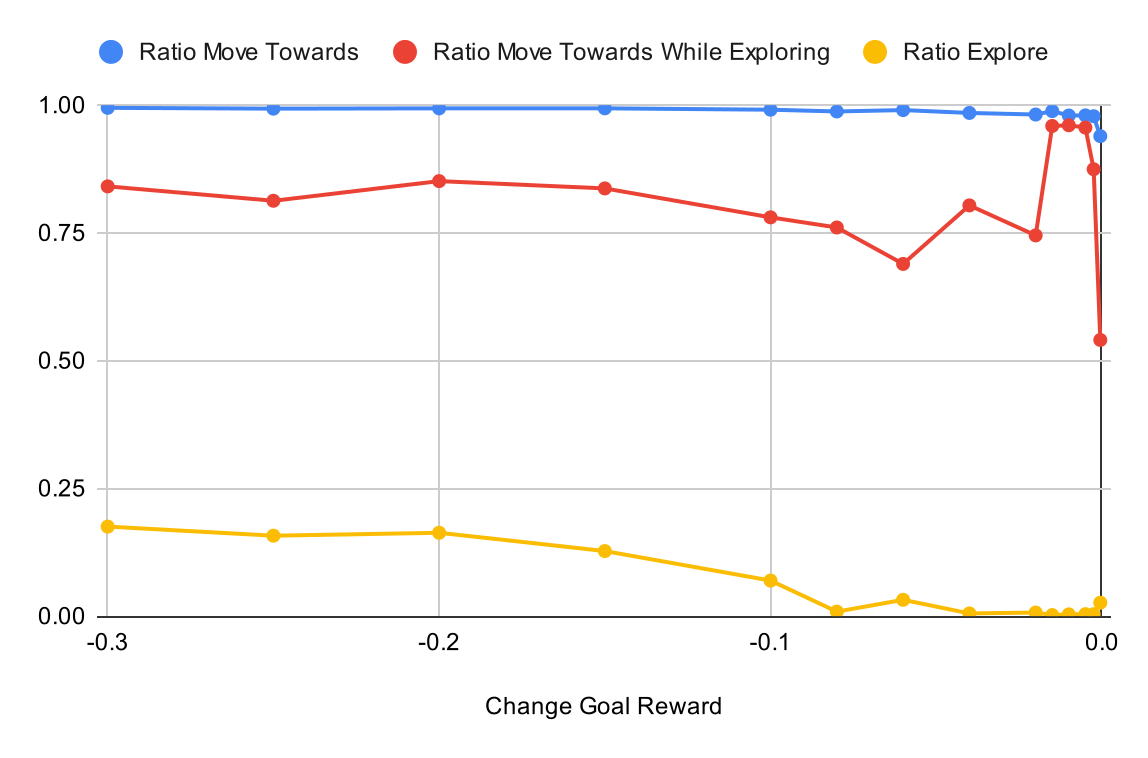}
\caption{Ratio of exploration and movement towards parking spaces when varying the change goal punishment.}
\label{fig:vary-changegoalrew-exploremovetowards}
\end{figure}

\begin{figure}[!h]
\centering
\includegraphics[width=.8\textwidth]{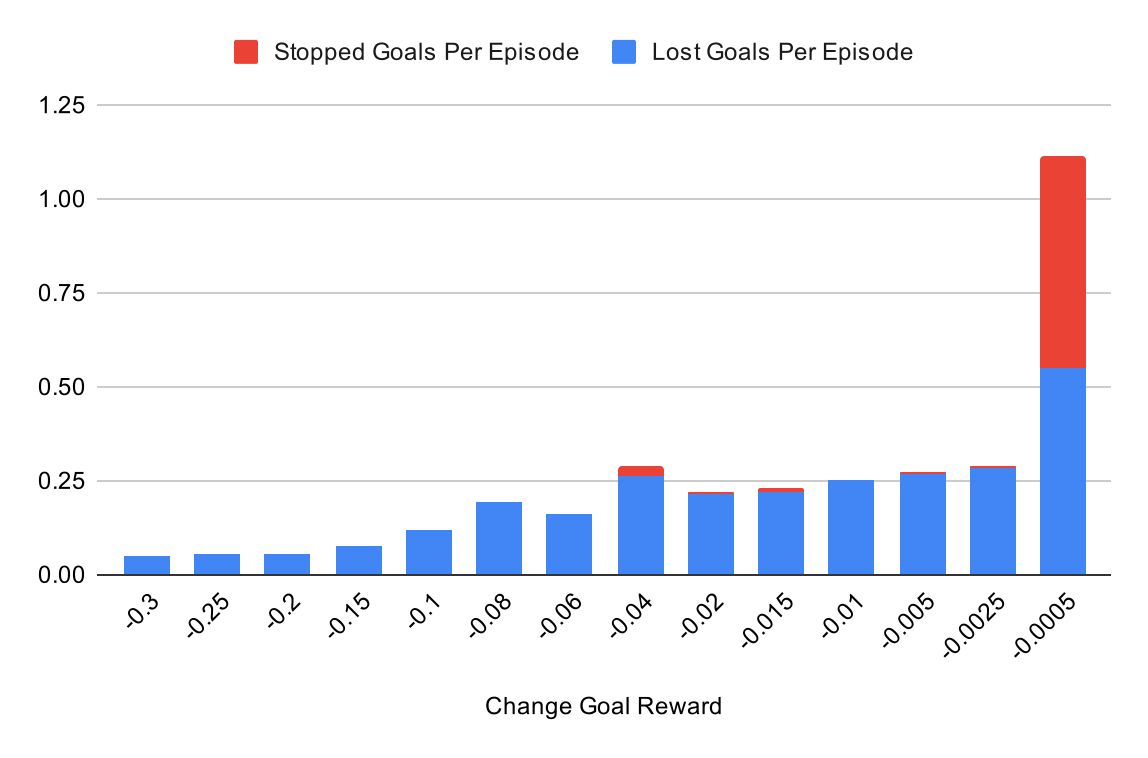}
\caption{Number of lost and stopped goals when varying the change goal punishment.}
\label{fig:vary-changegoalrew-loststoppedgoals}
\end{figure}

Reading the graphs from right to left, it can be seen from \Cref{fig:vary-changegoalrew-exploremovetowards} that as the change goal punishment increases beyond approximately $-0.8$, the agents explore more and travel towards an uncommunicated goal more often. In addition, from \Cref{fig:vary-changegoalrew-loststoppedgoals} it can be seen that the agents lose and change their goals less often. As a result, their exhibited behaviour becomes increasingly selfish, since they spend more time exploring and travelling towards an uncommunicated goal. The agents' goal parking space selections also become better because as $|r_{\delta g}(p \rightarrow p')|$ increases, we see the number of lost goals decrease significantly but the ratio of movement towards chosen goals remain very close to $1$, so the reason for the reduction must be due to their goals being taken by other agents less often.

This is an interesting result, because the learned behaviour may have instead been that the agents explore less and pursue their goal more competitively, however our models exhibit another kind of competitive behaviour whereby the agents are unlikely to choose their goal until they're certain it's a good one.

In our experiments we also find that a too small value of $|r_{\delta g}(p \rightarrow p')|$ causes the agents to get stuck in a halting local optima, with a very low park rate, and that $r_{\delta g}(p \rightarrow p') \leq -0.005$ is required for convergence to a model with a high park rate. However, within that threshold, increases in the punishment do not cause a clear trend to the park rate. As a result, in light of our results, it's important that a sufficiently large value of $|r_{\delta g}(p \rightarrow p')|$ is chosen so the agents learn to park successfully, but the value should not be too large that the agents become too selfish.

\subsection{Collaboration}
\label{sec:results-collaboration}

In addition to competition being an important aspect of the agents' behaviour, collaboration is a dually important aspect. Hence, in this section, we assess the collaborative behaviours exhibited by the agents in our environment with dynamic goals, classifying their collaborative behaviour with our giving way framework defined in \Cref{sec:give-way-schemes}.

\subsubsection{Method}

To assess collaborative behaviour among the agents, we measure their conformity to the way give-way contexts, where conformity to a context is defined in \Cref{eq:context-conformance}. Likewise, we enforce collaborative behaviour among the agents by enforcing different give-way contexts with different strengths, specifically by punishing the agent on a $p \rightarrow p$ goal transition when $G_{\downarrow}(s^*)$ is true (i.e. the agent is pursuing a bad goal, as determined by $\Delta_C$ for the context $C$ being enforced). 

Hence, we vary the bad goal punishment across all of the give-way schemes as defined in \Cref{sec:give-way-schemes}, and measure the agents' conformity to to each context when each give-way scheme is used. Specifically, we use the $(L,L,S)$, $(L,L,A)$, $(G,L,S)$, $(G,L,A)$, $(G,G,S)$ and $(G,G,A)$ give-way schemes, and measure the agents' conformity to the $(L,S)$, $(L,A)$, $(G^+,S)$ and $(G^+,A)$ give-way contexts. We enforce every scheme for completion, and we measure the $G^+$ contexts instead of $G$ contexts because $G^+$ contexts are fully disjoint from the $L$ contexts, preventing the obtained conformities from potentially being misleading (conformity to $G$ contexts may occur as a result of conformity to $L$ contexts, but the agents may not actually be conforming to the $G$ contexts, but rather only their $L$ subset).

In addition to measuring the agents' conformity to the give-way contexts, we also measure the same metrics as in our competition experiments (\Cref{sec:give-way-schemes}) to measure competitive behaviour that may also arise.

\subsubsection{Results}

With our method defined, we train $7$ agents for $16$ million episodes under each give-way scheme, where in each we vary the bad goal punishment between $0$ and $-0.2$, and we measure the conformity to the give-way schemes in each configuration. \Cref{fig:givewaycontext-conformance} shows the results of the agents' conformity to each give-way context (y axis) under each give-way scheme (each individual graph) as the bad goal reward varies (x axis). Hence, when reading the graphs from right to left, the give-way schemes are more greatly enforced. For example, the top-left graph shows the conformities to the give-way contexts as the $(L,L,S)$ give-way scheme is enforced (from right to left), and the bottom-right graph shows their conformities as the $(G,G,A)$ give-way scheme is enforced.

\begin{figure}[!h]
\centering
\includegraphics[width=\textwidth]{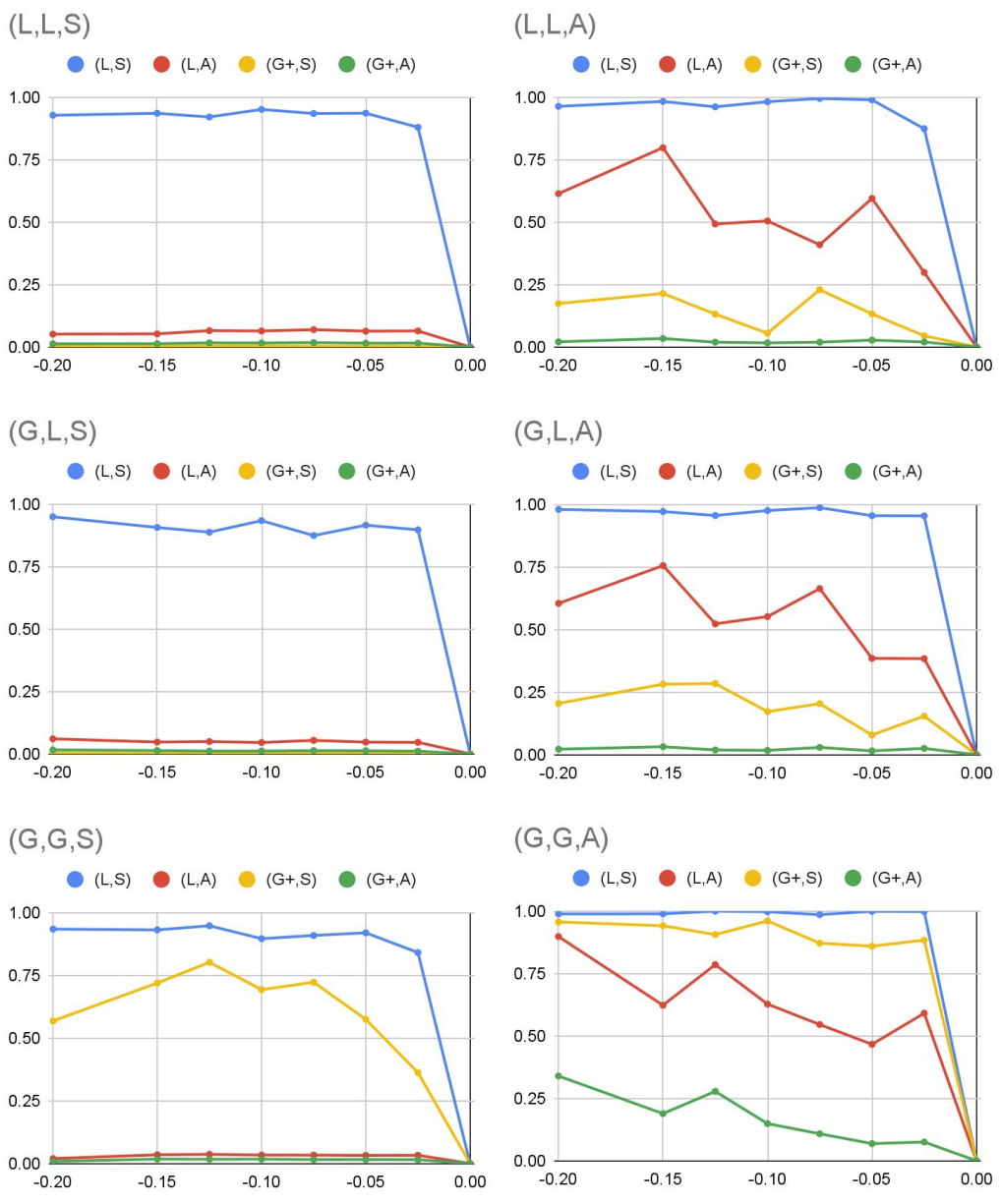}
\caption{Conformity to each give-way context under each give-way scheme when varying the bad goal reward.}
\label{fig:givewaycontext-conformance}
\end{figure}

While we obtain the expected result of the agents conforming to all of the contexts that are a subset of each give-way scheme being enforced, \Cref{fig:givewaycontext-conformance} entails other interesting results.

One mysterious result is that the $(G^+,S)$ context is being conformed to in the $(L,L,A)$ scheme (since its conformity is noticeably larger than in the $(L,L,S)$ scheme). However, the agents do not have access to global information in the $(L,L,A)$ scheme, making the conformity to the non-local context non-trivial, since the agents do not have access to the global information required to conform to it. The author believes such a result is due to to the collective behaviour of the agents under the $(L,L,A)$ scheme, and the author conjectures that such `leaking' conformity may be due to the fact that $(L,A) \not \subseteq (G,S)$ and $(G,S) \not \subseteq (L,A)$ in \Cref{fig:give-way-context-subsets}. However, the reason for such leaking conformity is an open question, and is future work (\Cref{sec:future-work}). 

In addition, the addition of global information to the $(L,L,A)$ scheme does not reduce the conformity to  $(G^+,S)$, despite the agents having sufficient information to determine the scenarios under the $(G^+,S)$ context in the $(G,L,A)$ scheme. Hence, it appears that schemes enforcing giving way to other agents with any goal causes more general behaviour than just conforming to the contexts being enforced. This is further supported by the increase in conformity to $(L,S)$ contexts in the $(L,L,A)$ scheme rather than the $(L,L,S)$ scheme, despite the $(L,L,S)$ scheme specifically enforcing giving way in the context. This may be due to there being more situations to give way under the any goal contexts than the same goal contexts, hence when being enforced the agent receives more punishments and thus has more experiences to reinforce the giving way behaviour, resulting in generally more collaborative behaviour.

Since the $(G,G,A)$ scheme enforces giving way in the $(G,A)$ context which is a superset of all of the other contexts, enforcing the $(G,G,A)$ enables us to compare the conformity of every context when they are all simultaneously enforced. As mentioned in \Cref{sec:context-conformance}, we can thus devise an alternate hierarchy of strength among the contexts, where those that are less likely to be conformed to can be considered `stronger' than those that are not. Hence, from the $(G,G,A)$ scheme in \Cref{fig:givewaycontext-conformance} we see that $(G^+,A) \succ (L,A) \succ (G^+,S) \succ (L,S)$ (where $C_1 \succ C_2$ denotes that $C_1$ is `stronger' than $C_2$). It's important to note here that $(L,A) \succ (G^+,S)$, indicating that $(L,A) \succ (G,S)$ and providing one potential answer to the question of which context between $(L,A)$ and $(G,S)$ is stronger in \Cref{sec:context-heirarchy}. Hence, in terms of conformity, the agents are more likely to conform to same goal contexts than any goal contexts, irrespective of whether the context is non-local or not.

Finally, in all of the graphs one can observe that the agents do not conform any give-way context when the bad goal punishment is equal to $0$. Hence, without enforcing giving way in particular contexts, the agents do not learn to give way to eachother in our environment. While interesting, this result isn't very surprising, because in IPPO each agent is independent and thus self-interested.

\subsubsection{Competition}

In addition to collaborative behaviours being exhibited by the agents under the give-way schemes, they also exhibit competitive behaviours. Specifically, under the $(G,G,A)$ scheme, we observe that the agents learn to exhibit increasingly selfish behaviour as the bad goal punishment increases, as shown by the graphs in \Cref{fig:vary-givewayrew-exploremovetowards} and \Cref{fig:vary-givewayrew-loststoppedgoals}, using the same reasoning as in \Cref{sec:results-competition}. Hence, aspects of both collaborative and competitive behaviours are exhibited when we enforce the give-way schemes, but unfortunately collaboration does not arise in our models without selfishness.

\begin{figure}[!h]
\centering
\includegraphics[width=.8\textwidth]{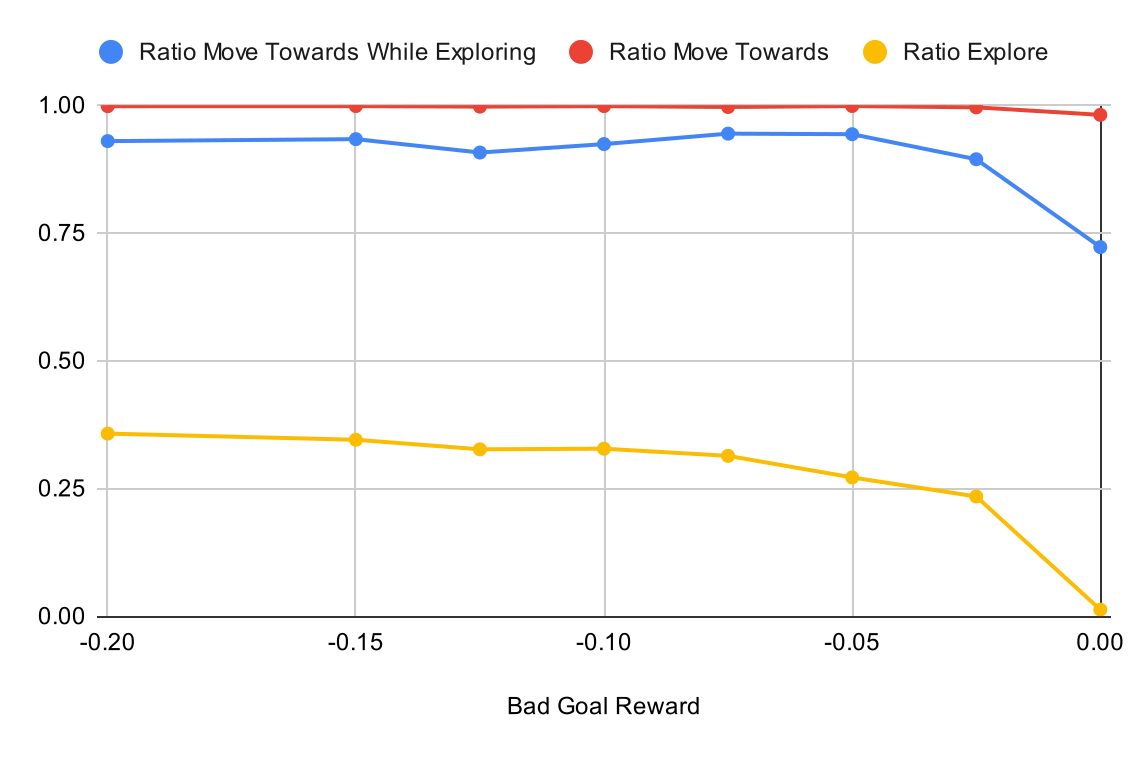}
\caption{Ratio of exploration and movement towards parking spaces when varying the bad goal punishment in the $(G,G,A)$ give-way scheme.}
\label{fig:vary-givewayrew-exploremovetowards}
\end{figure}

\begin{figure}[!h]
\centering
\includegraphics[width=.8\textwidth]{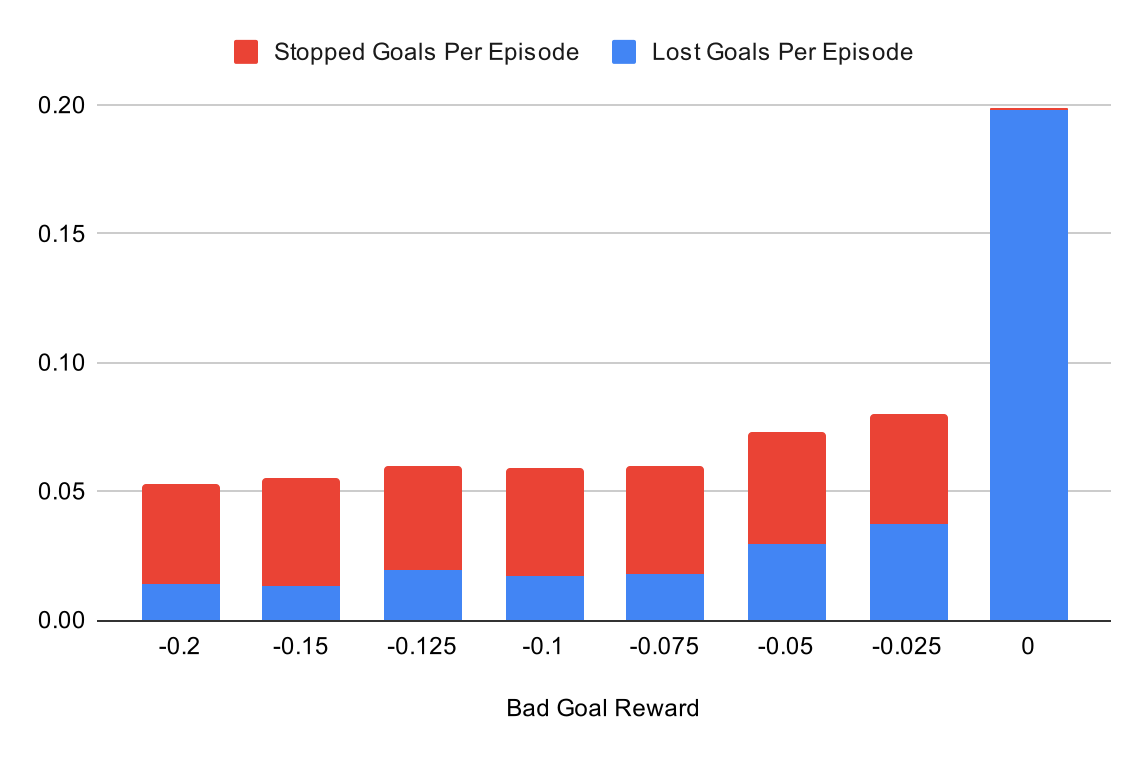}
\caption{Number of lost and stopped goals when varying the bad goal punishment in the $(G,G,A)$ give-way scheme.}
\label{fig:vary-givewayrew-loststoppedgoals}
\end{figure}
\chapter{Conclusions}
\label{ch:conclusions}

In conclusion, our project is a success, as it fully achieves its objectives defined in \Cref{sec:objectives}. In this chapter, we summarise our project's achievements in relation to its objectives, discuss some difficulties and lessons learned, and discuss any of the project's limitations and future work that may follow from it. In what follows, we refer to the objectives by their indexes in \Cref{sec:objectives}.

\section{Summary}

To summarise our project, we designed and and implemented flexible abstractions of a car parking environment suitable for the application of Q-Learning and IPPO, achieving Objective \ref{obj-1}. In our abstraction, the agents may have fixed or dynamic goals, and several properties of the environment can be varied such as its density and dimensionality.

With the environment implemented, we implement and apply Q-Learning to our environment, discovering several limitations in the obstacle avoidance behaviours that can be learned, achieving Objective \ref{obj-2}.

With the limitations of Q-Learning in our environment known, we apply IPPO to our environment, harnessing the superior scalability of deep RL methods. We implement an automated grid-searching and model analysis suite to train and evaluate models at scale, enabling the achievement of Objectives \ref{obj-2}, \ref{obj-3} and \ref{obj-4}.

After tuning PPO's hyperparameters in our environment with fixed goals and obstacles present, we learn models using IPPO parking up to $7$ agents with a very high success rate of at least $98.1\%$, and $2$ agents with a success rate of at least $99.3\%$, achieving Objective \ref{obj-2}. Such an achievement significantly outperforms \citet{ppo-self-parking}'s maximal $96.15\%$ success rate with single agents in their comparable environment, even with the non-stationarity present from multiple agents in our environment. The main difference between our studies is that we share the localised pose information between the cars for collision avoidance, whereas they use several lasers around their agent to detect nearby obstacles, hence we see a significant advantage with our information-sharing approach. Various parameters of the environment are also varied to obtain results relating to its density and dimensionality, shared state (communication) among the agents, as well as the agents' safety within, achieving Objectives \ref{obj-2} and \ref{obj-3}.

Having successfully applied IPPO to our environment with fixed goals, we apply IPPO to our environment with dynamic goals, obtaining a very high success rate of at least $99.5\%$ with 7 agents and obstacles present, achieving Objective \ref{obj-2}. By varying the environment's parameters, we obtain several results relating to competitive and collaborative behaviours exhibited the agents in our environment, achieving Objective \ref{obj-4}. We find that competition arises in the form of selfishness, and that collaboration only arises when explicitly enforced, in the form of giving way to other agents. In addition, we find that collaboration does not arise in our models without selfishness, requiring a trade-off between the two behaviours, and that enforcing giving way in certain contexts causes the agents to give way in other unexpected contexts.

\section{Limitations and Future Work}
\label{sec:future-work}

Our project has several limitations, leading to a variety of potential future work.

\subsubsection{Realism}

Several of the limitations of our project relate to the design of the MDP and its realism. First, the basic motion model employed by the environment's dynamics may not be fully realistic, since there exist more realistic models, such as Ackermann-type steering \cite{ackermann-steering} and the physics implemented in the Gazebo simulator \cite{gazebo}. Thus, future work could potentially enhance the motion models of the agents. In addition, we find from our experiments that the agents do not utilise their full velocity range much, often travelling forwards with full velocity. Such behaviour doesn't align with realistic driving, which often utilises a range of velocities and alternates between forwards and reversing motion while parking for fine-grained motion. Hence, future work may also investigate improving the velocity ranges utilised by the agents. Despite the increase in realism such an extension would provide, the specifics of \textit{how} the agents move themselves from position A to position B in our environment may not be crucial to the investigation of the agents' higher level behaviours relating to goal selection and planning, thus such an extension may not change our higher-level results much.

Another limitation with respect to the realism of our environment is that it lacks road rules and the mechanisms to enforce them. In realistic car parking scenarios, the cars would have to conform to road rules, which may potentially modify the agents' selection of their goals, potentially yielding different results with respect to goal selection and path planning. Thus, extending the MDP to support dynamic road rules and investigating the effect of such road rules on the agents' goal planning is another area of future work of the project. However, the pursuit of such future work may be negated if one assumes that road rules may be relaxed when implementing our model in practice, hence the pursuit of such work again depends on the objectives. 

On a related vein of constraining the agents, additional future work may also entail varying the field of view of the agents in our environment, reducing the range in which they can communicate. Such reductions may entail interesting results with respect to the horizons of the agents' goal planning behaviour. Since our environment supports a dynamic field of view via the \texttt{\_obsNearbyCarsDiameter} parameter, such work wouldn't require much additional implementation.

Finally, our MDP is fully deterministic. Future work may entail relaxing that assumption by introducing stochasticity into the environment's dynamics, and investigating its effect.

\subsubsection{Safety}

There are several limitations relating to the safety of the agents' learned models. First, we assume that each car has the same size hitbox, which may be dangerous for cars with larger hitboxes. Hence, future work may entail making the hitboxes of the cars dynamic and encoding hitbox information into the agents' state. In addition, the agents' obstacle avoidance behaviour is extremely limited with respect to non-car obstacles. Thus, if non-car obstacles are deemed important in our environment, then future work may entail additionally improving obstacle avoidance for non-car obstacles.

Since the agents may make mistakes, additional future work may entail enabling real drivers to override the behaviour, for example if they get dangerously close to a car. So far we have investigated techniques to improve the agents' driving safety by making them keep a threshold distance from other cars, however our MDP does not provide the tools for manual intervention of the agents' behaviours, which may be a useful extension.


\subsubsection{Infrastructure}

Our suite of tools for training models may also be improved. Currently, models can only be trained for a maximum duration as determined by the maximum job length of the department's compute cluster (two days), limiting the training of larger models that take longer to train. Training time may also be reduced by splitting the models into different re-usable parts, so re-learning e.g. locomotion and basic collision avoidance behaviour isn't necessary every time a model is trained. Such re-use may come with the additional benefit of successfully training in environments with many parked cars present, increasing the difficulty of the scenarios the agents are exposed to during training, and potentially increasing the safety of the learned models.

Since our environment has became fairly complex with lots of different parameters, future work may also wish to use more sophisticated techniques to optimise the environment's parameters (rather than the basic technique of grid-searching used in our study). In particular, with respect to optimising rewards, several techniques in the literature may be utilised. One such technique is the use of genetic algorithms, as proposed by \citet{genetic-algos} (2019). Another is the optimisation of dynamic rewards which change as training progresses, as proposed by \citet{dynamic-rewards} (2020). Finally, \citet{multi-reward-opt} (2018) propose a novel technique of splitting the loss function optimised in deep RL into several individual ones for each reward, and optimising each loss function simultaneously.


\subsubsection{Collaboration}

Finally, there is a variety of potential future work in resolving the questions that arised from our results relating to the agents' collaborative behaviours in \Cref{sec:results-collaboration}. In particular, we desire a rigorous explanation of the `leaking' conformity to the $(G^+,S)$ give-way context under the $(L,L,A)$ scheme, providing an answer to our conjecture proposed in \Cref{sec:results-collaboration}. One may also wish to extend the give-way contexts to harness additional information available in our MDP, such as angular information, to increase the use of our MDP's information. Alternatively, deep learning may be exploited to identify new give-way contexts without them being user-defined.

As a large extension, it may be interesting to investigate whether other MARL algorithms learn to conform to give-way contexts without them being explicitly enforced, unlike IPPO as discovered in our study. Some candidate algorithms for such investigations may be MAPPO \cite{ppo-in-ma-games} due to its close ties to IPPO but the agents are not fully independent, and MA-POCA \cite{ml-agents}: a true multi-agent algorithm invented and implemented in ML-Agents, which may be harnessed without significant additional implementation.



\bibliographystyle{common/plainnat}
\bibliography{dissertation}

\begin{thebibliography}{31}
\providecommand{\natexlab}[1]{#1}
\providecommand{\url}[1]{\texttt{#1}}
\expandafter\ifx\csname urlstyle\endcsname\relax
  \providecommand{\doi}[1]{doi: #1}\else
  \providecommand{\doi}{doi: \begingroup \urlstyle{rm}\Url}\fi

\bibitem[Abadi et~al.(2015)Abadi, Agarwal, Barham, Brevdo, Chen, Citro,
  Corrado, Davis, Dean, Devin, Ghemawat, Goodfellow, Harp, Irving, Isard, Jia,
  Jozefowicz, Kaiser, Kudlur, Levenberg, Man\'{e}, Monga, Moore, Murray, Olah,
  Schuster, Shlens, Steiner, Sutskever, Talwar, Tucker, Vanhoucke, Vasudevan,
  Vi\'{e}gas, Vinyals, Warden, Wattenberg, Wicke, Yu, and
  Zheng]{tensorflow2015-whitepaper}
Abadi, Mart\'{i}n \& Agarwal, Ashish \& Barham, Paul \& Brevdo, Eugene \& Chen,
  Zhifeng \& Citro, Craig \& Corrado, Greg~S. \& Davis, Andy \& Dean, Jeffrey
  \& Devin, Matthieu \& Ghemawat, Sanjay \& Goodfellow, Ian \& Harp, Andrew \&
  Irving, Geoffrey \& Isard, Michael \& Jia, Yangqing \& Jozefowicz, Rafal \&
  Kaiser, Lukasz \& Kudlur, Manjunath \& Levenberg, Josh \& Man\'{e}, Dandelion
  \& Monga, Rajat \& Moore, Sherry \& Murray, Derek \& Olah, Chris \& Schuster,
  Mike \& Shlens, Jonathon \& Steiner, Benoit \& Sutskever, Ilya \& Talwar,
  Kunal \& Tucker, Paul \& Vanhoucke, Vincent \& Vasudevan, Vijay \&
  Vi\'{e}gas, Fernanda \& Vinyals, Oriol \& Warden, Pete \& Wattenberg, Martin
  \& Wicke, Martin \& Yu, Yuan \& Zheng, Xiaoqiang.
\newblock {TensorFlow}: Large-scale machine learning on heterogeneous systems,
  2015.
\newblock URL \url{https://www.tensorflow.org/}.
\newblock Software available from tensorflow.org.

\bibitem[Busoniu et~al.(2008)Busoniu, Babuska, and
  De~Schutter]{comprehensive-survey-marl}
Busoniu, Lucian \& Babuska, Robert \& De~Schutter, Bart.
\newblock A comprehensive survey of multiagent reinforcement learning.
\newblock \emph{IEEE Transactions on Systems, Man, and Cybernetics, Part C
  (Applications and Reviews)}, 38\penalty0 (2):\penalty0 156--172, 2008.
\newblock \doi{10.1109/TSMCC.2007.913919}.

\bibitem[Choi et~al.(2021)Choi, Lee, and Lee]{rl-collision-avoidance}
Choi, Jaewan \& Lee, Geonhee \& Lee, Chibum.
\newblock Reinforcement learning-based dynamic obstacle avoidance and
  integration of path planning.
\newblock \emph{Intelligent Service Robotics}, 14:\penalty0 1--15, 11 2021.
\newblock \doi{10.1007/s11370-021-00387-2}.

\bibitem[Chu et~al.(2019)Chu, Wang, Codec{\`{a}}, and
  Li]{marl-for-traffic-control}
Chu, Tianshu \& Wang, Jie \& Codec{\`{a}}, Lara \& Li, Zhaojian.
\newblock Multi-agent deep reinforcement learning for large-scale traffic
  signal control.
\newblock \emph{CoRR}, abs/1903.04527, 2019.
\newblock URL \url{http://arxiv.org/abs/1903.04527}.

\bibitem[de~Witt et~al.(2020)de~Witt, Gupta, Makoviichuk, Makoviychuk, Torr,
  Sun, and Whiteson]{ippo}
de~Witt, Christian~Schroeder \& Gupta, Tarun \& Makoviichuk, Denys \&
  Makoviychuk, Viktor \& Torr, Philip H.~S. \& Sun, Mingfei \& Whiteson,
  Shimon.
\newblock Is independent learning all you need in the starcraft multi-agent
  challenge?, 2020.
\newblock URL \url{https://arxiv.org/abs/2011.09533}.

\bibitem[Fujimoto et~al.(2018)Fujimoto, Meger, and Precup]{off-policy-rl}
Fujimoto, Scott \& Meger, David \& Precup, Doina.
\newblock Off-policy deep reinforcement learning without exploration, 2018.
\newblock URL \url{https://arxiv.org/abs/1812.02900}.

\bibitem[Garg et~al.(2018)Garg, Chli, and
  Vogiatzis]{deep-rl-for-traffic-control}
Garg, Deepeka \& Chli, Maria \& Vogiatzis, George.
\newblock Deep reinforcement learning for autonomous traffic light control.
\newblock In \emph{2018 3rd IEEE International Conference on Intelligent
  Transportation Engineering (ICITE)}, pages 214--218, 2018.
\newblock \doi{10.1109/ICITE.2018.8492537}.

\bibitem[Gronauer and Dieopold(2021)]{marl-survey}
Gronauer, Sven \& Dieopold, Klaus.
\newblock Multi-agent deep reinforcement learning: a survey.
\newblock \emph{Artificial Intelligence Review}, pages 1--49, 04 2021.
\newblock \doi{10.1007/s10462-021-09996-w}.

\bibitem[Harris et~al.(2020)Harris, Millman, van~der Walt, Gommers, Virtanen,
  Cournapeau, Wieser, Taylor, Berg, Smith, Kern, Picus, Hoyer, van Kerkwijk,
  Brett, Haldane, del R{\'{i}}o, Wiebe, Peterson, G{\'{e}}rard-Marchant,
  Sheppard, Reddy, Weckesser, Abbasi, Gohlke, and Oliphant]{numpy}
Harris, Charles~R. \& Millman, K.~Jarrod \& van~der Walt, St{\'{e}}fan~J. \&
  Gommers, Ralf \& Virtanen, Pauli \& Cournapeau, David \& Wieser, Eric \&
  Taylor, Julian \& Berg, Sebastian \& Smith, Nathaniel~J. \& Kern, Robert \&
  Picus, Matti \& Hoyer, Stephan \& van Kerkwijk, Marten~H. \& Brett, Matthew
  \& Haldane, Allan \& del R{\'{i}}o, Jaime~Fern{\'{a}}ndez \& Wiebe, Mark \&
  Peterson, Pearu \& G{\'{e}}rard-Marchant, Pierre \& Sheppard, Kevin \& Reddy,
  Tyler \& Weckesser, Warren \& Abbasi, Hameer \& Gohlke, Christoph \&
  Oliphant, Travis~E.
\newblock Array programming with {NumPy}.
\newblock \emph{Nature}, 585\penalty0 (7825):\penalty0 357--362, September
  2020.
\newblock \doi{10.1038/s41586-020-2649-2}.
\newblock URL \url{https://doi.org/10.1038/s41586-020-2649-2}.

\bibitem[Hunter(2007)]{matplotlib}
Hunter, J.~D.
\newblock Matplotlib: A 2d graphics environment.
\newblock \emph{Computing in Science \& Engineering}, 9\penalty0 (3):\penalty0
  90--95, 2007.
\newblock \doi{10.1109/MCSE.2007.55}.

\bibitem[Juliani et~al.(2018)Juliani, Berges, Teng, Cohen, Harper, Elion, Goy,
  Gao, Henry, Mattar, and Lange]{ml-agents}
Juliani, Arthur \& Berges, Vincent-Pierre \& Teng, Ervin \& Cohen, Andrew \&
  Harper, Jonathan \& Elion, Chris \& Goy, Chris \& Gao, Yuan \& Henry, Hunter
  \& Mattar, Marwan \& Lange, Danny.
\newblock Unity: A general platform for intelligent agents, 2018.
\newblock URL \url{https://arxiv.org/abs/1809.02627}.

\bibitem[Kingma and Ba(2014)]{adam}
Kingma, Diederik~P. \& Ba, Jimmy.
\newblock Adam: A method for stochastic optimization, 2014.
\newblock URL \url{https://arxiv.org/abs/1412.6980}.

\bibitem[Koenig and Howard(2004)]{gazebo}
Koenig, N. \& Howard, A.
\newblock Design and use paradigms for gazebo, an open-source multi-robot
  simulator.
\newblock In \emph{2004 IEEE/RSJ International Conference on Intelligent Robots
  and Systems (IROS) (IEEE Cat. No.04CH37566)}, volume~3, pages 2149--2154
  vol.3, 2004.
\newblock \doi{10.1109/IROS.2004.1389727}.

\bibitem[Krajzewicz et~al.(2002)Krajzewicz, Hertkorn, Feld, and Wagner]{sumo}
Krajzewicz, Daniel \& Hertkorn, Georg \& Feld, Christian \& Wagner, Peter.
\newblock Sumo (simulation of urban mobility); an open-source traffic
  simulation.
\newblock pages 183--187, 01 2002.
\newblock ISBN 90-77039-09-0.

\bibitem[Kurniawati(2021)]{pomdps-and-robotics}
Kurniawati, Hanna.
\newblock Partially observable markov decision processes (pomdps) and robotics.
\newblock \emph{CoRR}, abs/2107.07599, 2021.
\newblock URL \url{https://arxiv.org/abs/2107.07599}.

\bibitem[Lai(2018)]{autoparking-qlearning}
Lai, Leonardo.
\newblock Autoparking.
\newblock \url{https://github.com/leoll2/Autoparking}, 2018.

\bibitem[Mnih et~al.(2013)Mnih, Kavukcuoglu, Silver, Graves, Antonoglou,
  Wierstra, and Riedmiller]{dqn}
Mnih, Volodymyr \& Kavukcuoglu, Koray \& Silver, David \& Graves, Alex \&
  Antonoglou, Ioannis \& Wierstra, Daan \& Riedmiller, Martin~A.
\newblock Playing atari with deep reinforcement learning.
\newblock \emph{CoRR}, abs/1312.5602, 2013.
\newblock URL \url{http://arxiv.org/abs/1312.5602}.

\bibitem[Pasunuru and Bansal(2018)]{multi-reward-opt}
Pasunuru, Ramakanth \& Bansal, Mohit.
\newblock Multi-reward reinforced summarization with saliency and entailment,
  2018.
\newblock URL \url{https://arxiv.org/abs/1804.06451}.

\bibitem[Pasunuru et~al.(2020)Pasunuru, Guo, and Bansal]{dynamic-rewards}
Pasunuru, Ramakanth \& Guo, Han \& Bansal, Mohit.
\newblock Dorb: Dynamically optimizing multiple rewards with bandits, 2020.
\newblock URL \url{https://arxiv.org/abs/2011.07635}.

\bibitem[Paszke et~al.(2019)Paszke, Gross, Massa, Lerer, Bradbury, Chanan,
  Killeen, Lin, Gimelshein, Antiga, Desmaison, Kopf, Yang, DeVito, Raison,
  Tejani, Chilamkurthy, Steiner, Fang, Bai, and Chintala]{pytorch}
Paszke, Adam \& Gross, Sam \& Massa, Francisco \& Lerer, Adam \& Bradbury,
  James \& Chanan, Gregory \& Killeen, Trevor \& Lin, Zeming \& Gimelshein,
  Natalia \& Antiga, Luca \& Desmaison, Alban \& Kopf, Andreas \& Yang, Edward
  \& DeVito, Zachary \& Raison, Martin \& Tejani, Alykhan \& Chilamkurthy,
  Sasank \& Steiner, Benoit \& Fang, Lu \& Bai, Junjie \& Chintala, Soumith.
\newblock Pytorch: An imperative style, high-performance deep learning library.
\newblock In Wallach, H. \& Larochelle, H. \& Beygelzimer, A. \&
  d\textquotesingle Alch\'{e}-Buc, F. \& Fox, E. \& Garnett, R., editors,
  \emph{Advances in Neural Information Processing Systems 32}, pages
  8024--8035. Curran Associates, Inc., 2019.
\newblock URL
  \url{http://papers.neurips.cc/paper/9015-pytorch-an-imperative-style-high-performance-deep-learning-library.pdf}.

\bibitem[Schulman et~al.(2015)Schulman, Levine, Moritz, Jordan, and
  Abbeel]{trpo}
Schulman, John \& Levine, Sergey \& Moritz, Philipp \& Jordan, Michael~I. \&
  Abbeel, Pieter.
\newblock Trust region policy optimization, 2015.
\newblock URL \url{https://arxiv.org/abs/1502.05477}.

\bibitem[Schulman et~al.(2017)Schulman, Wolski, Dhariwal, Radford, and
  Klimov]{ppo}
Schulman, John \& Wolski, Filip \& Dhariwal, Prafulla \& Radford, Alec \&
  Klimov, Oleg.
\newblock Proximal policy optimization algorithms.
\newblock \emph{CoRR}, abs/1707.06347, 2017.
\newblock URL \url{http://arxiv.org/abs/1707.06347}.

\bibitem[Sehgal et~al.(2019)Sehgal, La, Louis, and Nguyen]{genetic-algos}
Sehgal, Adarsh \& La, Hung~Manh \& Louis, Sushil~J. \& Nguyen, Hai.
\newblock Deep reinforcement learning using genetic algorithm for parameter
  optimization, 2019.
\newblock URL \url{https://arxiv.org/abs/1905.04100}.

\bibitem[Sutton and Barto(2018)]{rl-book}
Sutton, Richard~S. \& Barto, Andrew~G.
\newblock \emph{Reinforcement Learning: An Introduction}.
\newblock A Bradford Book, Cambridge, MA, USA, 2018.
\newblock ISBN 0262039249.

\bibitem[Thrun et~al.(2005)Thrun, Burgard, and Fox]{probablistic-robotics}
Thrun, Sebastian \& Burgard, Wolfram \& Fox, Dieter.
\newblock \emph{Probabilistic Robotics (Intelligent Robotics and Autonomous
  Agents)}.
\newblock The MIT Press, 2005.
\newblock ISBN 0262201623.

\bibitem[Thunyapoo et~al.(2020)Thunyapoo, Ratchadakorntham, Siricharoen, and
  Susutti]{ppo-self-parking}
Thunyapoo, Baramee \& Ratchadakorntham, Chatree \& Siricharoen, Punnarai \&
  Susutti, Wittawin.
\newblock Self-parking car simulation using reinforcement learning approach for
  moderate complexity parking scenario.
\newblock In \emph{2020 17th International Conference on Electrical
  Engineering/Electronics, Computer, Telecommunications and Information
  Technology (ECTI-CON)}, pages 576--579, 2020.
\newblock \doi{10.1109/ECTI-CON49241.2020.9158298}.

\bibitem[Watkins and Dayan(1992)]{qlearning}
Watkins, Christopher J. C.~H. \& Dayan, Peter.
\newblock Q-learning.
\newblock \emph{Machine Learning}, 8\penalty0 (3):\penalty0 279--292, May 1992.
\newblock ISSN 1573-0565.
\newblock \doi{10.1007/BF00992698}.
\newblock URL \url{https://doi.org/10.1007/BF00992698}.

\bibitem[Ying(2019)]{overfitting}
Ying, Xue.
\newblock An overview of overfitting and its solutions.
\newblock \emph{Journal of Physics: Conference Series}, 1168:\penalty0 022022,
  feb 2019.
\newblock \doi{10.1088/1742-6596/1168/2/022022}.
\newblock URL \url{https://doi.org/10.1088/1742-6596/1168/2/022022}.

\bibitem[Yoo et~al.(2003)Yoo, Jette, and Grondona]{slurm}
Yoo, Andy~B. \& Jette, Morris~A. \& Grondona, Mark.
\newblock Slurm: Simple linux utility for resource management.
\newblock In Feitelson, Dror \& Rudolph, Larry \& Schwiegelshohn, Uwe, editors,
  \emph{Job Scheduling Strategies for Parallel Processing}, pages 44--60,
  Berlin, Heidelberg, 2003. Springer Berlin Heidelberg.
\newblock ISBN 978-3-540-39727-4.

\bibitem[Yu et~al.(2021)Yu, Velu, Vinitsky, Wang, Bayen, and
  Wu]{ppo-in-ma-games}
Yu, Chao \& Velu, Akash \& Vinitsky, Eugene \& Wang, Yu \& Bayen, Alexandre~M.
  \& Wu, Yi.
\newblock The surprising effectiveness of {MAPPO} in cooperative, multi-agent
  games.
\newblock \emph{CoRR}, abs/2103.01955, 2021.
\newblock URL \url{https://arxiv.org/abs/2103.01955}.

\bibitem[Zhao et~al.(2013)Zhao, Liu, and Dai]{ackermann-steering}
Zhao, Jing-Shan \& Liu, Zhi-Jing \& Dai, Jian.
\newblock Design of an ackermann type steering mechanism.
\newblock \emph{Journal of Mechanical Engineering Science}, 227, 11 2013.
\newblock \doi{10.1177/0954406213475980}.

\end{thebibliography}

\appendix

\chapter{Spatial Symmetry of Local Object Pose Encoding}
\label{appendix:local-object-pose-encoding}

This appendix shows how our encoding of the pose of nearby objects as described in \Cref{sec:local-object-pose-encoding} preserves $d$ and $\theta$ after both cars change their position by the same constants $a$ and $b$ on either axis, and preserves $\delta \theta$ after both cars change their rotations by the same constant $c$.

$d$ is preserved since if the cars are at a different position only differing by a constant $a$ and $b$ on either axis, the new positions of the green and grey car are $(x+a,y+b)$ and $(x'+a,y'+b)$ respectively, thus the L2 distance between them is:
$$\sqrt{(x'+a-(x+a))^2+(y'+b - (y+b))^2)} = \sqrt{(x'-x)^2+(y'-y)^2}$$, which is the same as the L2 distance between the original positions of the cars, yielding the same value of $d$. $\theta$ is also preserved, since the global angle between them is $$\arctan(\frac{(y'+b)-(y+b)}{(x'+a)-(x+a)}) = \arctan(\frac{y'-y}{x'-x})$$, which is equal to $\theta$ after taking the angle relative to the green car's global rotation. 

Likewise, $\delta \theta$ is preserved, since if both cars both rotate by a constant $c$, and their global rotations are $\theta_1$ and $\theta_2$ respectively, then their new global rotations are $\theta_1 + c$ and $\theta_2 + c$ respectively, thus the difference in their global rotation is $(\theta_1 + c) - (\theta_2 + c) = \theta_1 - \theta_2 = \delta \theta$, preserving $\delta \theta$.

\chapter{Deadlock Free Evolution Proof}
\label{appendix:proof-no-deadlock}

This appendix informally proves \Cref{th:no-deadlock}.

\begin{proof}
First, observe that the position of each agent is randomly distributed, since they're spawned in random positions and travel towards their randomly chosen parking space to park in it.

Next, observe that every parked car in the environment is eventually moved, because we move the furthest parked car from all of the agents in the environment. The positions of the agents in our environment are randomly distributed, thus each parked car has a chance to be the furthest parked car from all of the agents.

When a parked car is moved, it is moved into an agent's parking space when the agent successfully parked in it. Thus, there is a chance a parked car is moved to a parking space that is outside the cluster, because the agent may have a goal parking space outside of the cluster, since we are guaranteed that such a space exists.

Thus, every parked car in the cluster has a chance to be moved to a parking space that is outside the cluster, eventually breaking it.
\end{proof}

\chapter{Implementation Tables}
\label{appendix:impl-tables}

This appendix contains the tables for \Cref{ch:implementation}.

\begin{table}[]
\resizebox{\linewidth}{!}{%
\begin{tabular}{|l|l|l|l|}
\hline
\textbf{Parameter}                                                                                    & \textbf{Type} & \textbf{\begin{tabular}[c]{@{}l@{}}Default \\ Value\end{tabular}} & \textbf{Description}                                                                                                                                                                                                                                   \\ \hline
\texttt{\_positionGranularity}                                                                        & int           & 1                                                                 & $G_p$                                                                                                                                                                                                                                                  \\ \hline
\texttt{\_velocityGranularity}                                                                        & int           & 1                                                                 & $G_v$                                                                                                                                                                                                                                                  \\ \hline
\texttt{\_thetaGranularity}                                                                           & int           & 8                                                                 & $G_{\theta}$                                                                                                                                                                                                                                           \\ \hline
\texttt{\_maxVelocityMagnitude}                                                                       & int           & 1                                                                 & ${v_{max}}^+$                                                                                                                                                                                                                                          \\ \hline
\texttt{\_minVelocityMagnitude}                                                                       & int           & 1                                                                 & ${v_{max}}^-$                                                                                                                                                                                                                                          \\ \hline
\texttt{\_maxDeltaVMagnitude}                                                                         & int           & 1                                                                 & $a_{max}$                                                                                                                                                                                                                                              \\ \hline
\texttt{\_maxDeltaThetaMagnitude}                                                                     & int           & 1                                                                 & $\omega_{max}$                                                                                                                                                                                                                                         \\ \hline
\texttt{\_numAgents}                                                                                  & int           & 1                                                                 & Number of agents in the environment.                                                                                                                                                                                                                   \\ \hline
\texttt{\_normalizeObs}                                                                               & bool          & false                                                             & Whether the state values are normalised to the range $[0,1]$.                                                                                                                                                                                          \\ \hline
\texttt{\_debugObs}                                                                                   & bool          & false                                                             & Whether to print the values of the observations to the console.                                                                                                                                                                                        \\ \hline
\texttt{\_visualiseNearbyCars}                                                                        & bool          & false                                                             & Show lines of the agents' tracked cars and parking spaces.                                                                                                                                                                                             \\ \hline
\texttt{\_numParkedCars}                                                                              & int           & 0                                                                 & Number of parked cars.                                                                                                                                                                                                                                 \\ \hline
\texttt{\_obsDist}                                                                                    & bool          & false                                                             & Whether to encode the agents' distance to their goal in the state.                                                                                                                                                                                     \\ \hline
\texttt{\_obsAngle}                                                                                   & bool          & true                                                              & Whether to encode the agents' angle to their goal in the state.                                                                                                                                                                                        \\ \hline
\texttt{\_obsRings}                                                                                   & bool          & false                                                             & Whether the agents have rings around them.                                                                                                                                                                                                             \\ \hline
\texttt{\_ringMaxNumObjTrack}                                                                         & int           & 0                                                                 & Maximum number of objects each ring can count.                                                                                                                                                                                                         \\ \hline
\texttt{ringDiams}                                                                                    & int[]         & []                                                                & \begin{tabular}[c]{@{}l@{}}Diameters of the rings (in world units). \\ Also can be populated via \texttt{\_rd<...>} below.\end{tabular}                                                                                                                \\ \hline
\texttt{\_rd<...>}                                                                                    & int           & 0                                                                 & \begin{tabular}[c]{@{}l@{}}Diameters of each ring in separate \texttt{\_rd<...>} parameters \\ (with different \texttt{<...>} suffixes). \\ \texttt{ringDiams} is populated with these values at run-time.\end{tabular}                                \\ \hline
\texttt{\_obsGoalDeltaPose}                                                                           & bool          & false                                                             & \begin{tabular}[c]{@{}l@{}}Whether to encode the agents' difference in global rotation \\ in the state.\end{tabular}                                                                                                                                   \\ \hline
\texttt{\_ringNumPrevObs}                                                                             & int           & 0                                                                 & Number of historical ring states to encode in the state.                                                                                                                                                                                               \\ \hline
\texttt{\_ringOnlyWall}                                                                               & bool          & false                                                             & Whether the rings should only detect walls.                                                                                                                                                                                                            \\ \hline
\texttt{\_obsNearbyCars}                                                                              & bool          & false                                                             & \begin{tabular}[c]{@{}l@{}}Whether to encode the state of tracked cars \\ (if false, $n_{track} = 0$).\end{tabular}                                                                                                                                     \\ \hline
\texttt{\_obsNearbyCarsCount}                                                                         & int           & 0                                                                 & $n_{track}$                                                                                                                                                                                                                                            \\ \hline
\texttt{\_obsNearbyCarsDiameter}                                                                      & int           & 0                                                                 & Maximum distance nearby cars can be tracked (in world units).                                                                                                                                                                                          \\ \hline
\texttt{spawnCloseRatio}                                                                              & float         & 0                                                                 & Ratio in which an agent is spawned near its goal parking space.                                                                                                                                                                                        \\ \hline
\texttt{\_spawnCloseDist}                                                                             & int           & 10                                                                & \begin{tabular}[c]{@{}l@{}}Maximum distance (in world units) to spawn the agent away \\ from its goal parking space when it's to be spawned near it.\end{tabular}                                                                                      \\ \hline
\texttt{carSpawnMinDistance}                                                                          & int           & 9                                                                 & \begin{tabular}[c]{@{}l@{}}Minimum distance (in MDP units) to an agent in which the \\ agent can be legally spawned.\end{tabular}                                                                                                                      \\ \hline
\texttt{\_maxSteps}                                                                                   & int           & 85                                                                & Maximum number of steps in an agent's episode.                                                                                                                                                                                                         \\ \hline
\texttt{\_obsNearbyCarsGoal}                                                                          & bool          & false                                                             & Whether to share the goals of tracked cars.                                                                                                                                                                                                            \\ \hline
\texttt{\_obsNearbyCarsVelocity}                                                                      & bool          & false                                                             & Whether to share the velocities of tracked cars.                                                                                                                                                                                                       \\ \hline
\texttt{spawnCrashRatio}                                                                              & float         & 0                                                                 & Ratio in which an agent is spawned to crash with another agent.                                                                                                                                                                                        \\ \hline
\texttt{spawnCrashTargetAgentMinDist}                                                                 & float         & 10                                                                & \begin{tabular}[c]{@{}l@{}}Minimum distance (in MDP units) an agent must be to its goal \\ for it to be picked as a candidate to try and spawn a crash with.\end{tabular}                                                                              \\ \hline
\end{tabular}
}%
\caption{Environment parameters.}
\label{tab:env-params}
\end{table}

\begin{table}[]
\resizebox{\linewidth}{!}{%
\begin{tabular}{|l|l|l|l|}
\hline
\textbf{Parameter}                                                                                    & \textbf{Type} & \textbf{\begin{tabular}[c]{@{}l@{}}Default \\ Value\end{tabular}} & \textbf{Description}                                                                                                                                                                              \\ \hline

\texttt{\_dynamicGoals}                                                                               & bool          & false                                                             & Whether the agents have dynamic goals.                                                                                                                                                                                                                 \\ \hline
\texttt{\_obsNearbyParkingSpotsCount}                                                                 & int           & 0                                                                 & $n_{park}$                                                                                                                                                                                                                                             \\ \hline
\texttt{rewDeltaGoalContinueExp}                                                                      & float         & 0                                                                 & $r_{\delta g}(e \rightarrow e)$                                                                                                                                                                                                                   \\ \hline
\texttt{rewDeltaGoalStopExp}                                                                          & float         & 0                                                                 & $r_{\delta g}(e \rightarrow p)$                                                                                                                                                                                                                   \\ \hline
\texttt{rewDeltaGoalContinueGoal}                                                                     & float         & 0                                                                 & $r_{\delta g}(p \rightarrow p)$                                                                                                                                                                                                                   \\ \hline
\texttt{rewDeltaGoalDiffGoal}                                                                         & float         & 0                                                                 & $r_{\delta g}(p \rightarrow p')$                                                                                                                                                                                                                  \\ \hline
\texttt{\_rewDeltaGoalStopGoal}                                                                       & float         & 0                                                                 & \begin{tabular}[c]{@{}l@{}}$r_{\delta g}(p \rightarrow e)$. \\ If equal to -1, we compute it via \cref{eq:rew-change-goal-equiv-simplified}.\end{tabular}                                                                                         \\ \hline
\begin{tabular}[c]{@{}l@{}}\texttt{rewDeltaGoal}\\ \texttt{ContinueGoalBetterOtherAgent}\end{tabular} & float         & 0                                                                 & \begin{tabular}[c]{@{}l@{}}$r_{\delta g}(p \rightarrow p)$ when the \\ current goal is a bad goal.\end{tabular}                                                                                                                                   \\ \hline
\texttt{rewReachGoal}                                                                                 & float         & 10                                                                & Reward for reaching the goal parking space.                                                                                                                                                                                                            \\ \hline
\texttt{rewCrash}                                                                                     & float         & 10                                                                & Magnitude of the punishment for crashing with an obstacle.                                                                                                                                                                                             \\ \hline
\texttt{rewTimeSum}                                                                                   & float         & 0.5                                                               & Magnitude of the sum of the time-based punishment.                                                                                                                                                                                                     \\ \hline
\texttt{rewReverseSum}                                                                                & float         & 0                                                                 & Sum of the reversing punishment.                                                                                                                                                                                                                       \\ \hline
\texttt{rewDistSum}                                                                                   & float         & 1                                                                 & Magnitude of the sum of the move towards goal reward.                                                                                                                                                                                                  \\ \hline
\texttt{\_obsParkingSpotClosestAgent}                                                                 & bool          & false                                                             & \begin{tabular}[c]{@{}l@{}}Whether to encode global information of the agent's \\ current goal with respect to all agents (minimum distance \\ across all agents to the goal).\end{tabular}                                                            \\ \hline
\texttt{\_obsParkingSpotClosestGoalAgent}                                                             & bool          & false                                                             & \begin{tabular}[c]{@{}l@{}}Whether to encode global information of the agent's \\ current goal with respect to only agents with the same goal \\ (minimum distance to the goal \\ across all agents with the same goal).\end{tabular}                     \\ \hline
\texttt{\_punishBetterOtherAgentLocal}                                                                & bool          & false                                                             & Whether to enforce the local give-way scheme (global if false).                                                                                                                                                                                        \\ \hline
\texttt{\_punishBetterOtherGoalAgent}                                                                 & bool          & false                                                             & \begin{tabular}[c]{@{}l@{}}Whether to enforce the same goal give-way scheme \\ (any goal scheme if false).\end{tabular}                                                                                                                                \\ \hline
\texttt{\_numStepsTrain}                                                                              & int           & 0                                                                 & Total number of steps in the training phase.                                                                                                                                                                                                           \\ \hline
\texttt{carScaleTrain}                                                                                & float         & 1                                                                 & Multiplier of the scale of the cars during training.                                                                                                                                                                                                   \\ \hline
\texttt{rewFinalVelocitySum}                                                                          & float         & 0                                                                 & \begin{tabular}[c]{@{}l@{}}Maximum punishment for the agent's velocity upon parking. \\ We multiply the fraction between the min and max of the \\ magnitude of the agent's velocity upon parking by this \\ value to get the punishment.\end{tabular} \\ \hline

\texttt{rewDeltaThetaSum}                                                                          & float         & 0                                                                 & \begin{tabular}[c]{@{}l@{}}Sum of the magnitude of the smoothness punishment \\ to apply to the agent at each time-step.\end{tabular} \\ \hline

\texttt{\_rewDeltaThetaVelMult}                                                                          & bool         & false                                                                 & \begin{tabular}[c]{@{}l@{}}Whether the magnitude of the agent's velocity should \\ be included in the smoothness punishment at each time-step. \\ If not, only the agent's angular velocity is considered. \end{tabular} \\ \hline

\end{tabular}
}%
\caption{Environment parameters (continued).}
\label{tab:env-params-2}
\end{table}

\begin{table}[]
\resizebox{\linewidth}{!}{%
\begin{tabular}{|l|l|l|l|}
\hline
\textbf{CSV Metric ID}                                               & \textbf{Description}                                                                                                                                                    & \textbf{\begin{tabular}[c]{@{}l@{}}TensorBoard\\ Metric Path\end{tabular}}            & \textbf{\begin{tabular}[c]{@{}l@{}}Computation \\Method\end{tabular}}                                                                   \\ \hline
Max Steps                                                            & \begin{tabular}[c]{@{}l@{}}Number of steps\\ the agents were\\ evaluated over.\end{tabular}                                                                             & Via metadata.                                                                                   & Metadata.                                                                                      \\ \hline
Final Mean Reward                                                    & \begin{tabular}[c]{@{}l@{}}Mean cumulative reward\\ received by an\\ agent in its episode.\end{tabular}                                                                 & \begin{tabular}[c]{@{}l@{}}Environment\\ /Cumulative Reward\end{tabular}              & \texttt{mean\_metric\_period}                                                                 \\ \hline
\begin{tabular}[c]{@{}l@{}}Final Mean \\ Episode Length\end{tabular} & \begin{tabular}[c]{@{}l@{}}Mean number of steps\\ in each of the\\ agents' episodes.\end{tabular}                                                                       & \begin{tabular}[c]{@{}l@{}}Environment\\ /Episode Length\end{tabular}                 & \texttt{mean\_metric\_period}                                                                 \\ \hline
c                                                                    & Crash rate.                                                                                                                                                             & \begin{tabular}[c]{@{}l@{}}Metrics\\ /Total Crashes\end{tabular}                      & \texttt{per\_eps}                                                                             \\ \hline
p                                                                    & Parking rate.                                                                                                                                                           & \begin{tabular}[c]{@{}l@{}}Metrics\\ /Total Reached Goal\end{tabular}                 & \texttt{per\_eps}                                                                             \\ \hline
h                                                                    & \begin{tabular}[c]{@{}l@{}}Halt rate\\ (halting is reaching\\ $\tau$ time-steps and\\ not crashing or parking).\end{tabular}                                            & \begin{tabular}[c]{@{}l@{}}Metrics\\ /Total Halted\end{tabular}                       & \texttt{per\_eps}                                                                             \\ \hline
Num Episodes                                                         & \begin{tabular}[c]{@{}l@{}}Number of episodes\\ the agents were\\ evaluated over.\end{tabular}                                                                          & \begin{tabular}[c]{@{}l@{}}Metrics\\ /Num Episodes\end{tabular}                       & \begin{tabular}[c]{@{}l@{}}\texttt{last\_minus\_start}\\ \texttt{\_metricperiod}\end{tabular} \\ \hline
Total Crashes Car                                                    & \begin{tabular}[c]{@{}l@{}}Number of crashes\\ with cars.\end{tabular}                                                                                                  & \begin{tabular}[c]{@{}l@{}}Metrics\\ /Total Crashes Car\end{tabular}                  & \begin{tabular}[c]{@{}l@{}}\texttt{last\_minus\_start}\\ \texttt{\_metricperiod}\end{tabular} \\ \hline
Total Crashes Wall                                                   & \begin{tabular}[c]{@{}l@{}}Number of crashes\\ with walls.\end{tabular}                                                                                                 & \begin{tabular}[c]{@{}l@{}}Metrics\\ /Total Crashes Wall\end{tabular}                 & \begin{tabular}[c]{@{}l@{}}\texttt{last\_minus\_start}\\ \texttt{\_metricperiod}\end{tabular} \\ \hline
Total Crashes StaticCar                                              & \begin{tabular}[c]{@{}l@{}}Number of crashes\\ with parked cars.\end{tabular}                                                                                           & \begin{tabular}[c]{@{}l@{}}Metrics\\ /Total Crashes StaticCar\end{tabular}            & \begin{tabular}[c]{@{}l@{}}\texttt{last\_minus\_start}\\ \texttt{\_metricperiod}\end{tabular} \\ \hline
Num Lost Goal PerEps                                                 & \begin{tabular}[c]{@{}l@{}}Average number of\\ lost goals per episode.\end{tabular}                                                                                     & \begin{tabular}[c]{@{}l@{}}Metrics\\ /Num Lost Goal\end{tabular}                      & \texttt{per\_eps}                                                                             \\ \hline
Num Stop Explore PerEps                                              & \begin{tabular}[c]{@{}l@{}}Average number of\\ stop explores per\\ episode.\end{tabular}                                                                                & \begin{tabular}[c]{@{}l@{}}Metrics\\ /Num Stop Explore\end{tabular}                   & \texttt{per\_eps}                                                                             \\ \hline
Num Stop Goal PerEps                                                 & \begin{tabular}[c]{@{}l@{}}Average number of\\ stopped goals per\\ episode.\end{tabular}                                                                                & \begin{tabular}[c]{@{}l@{}}Metrics\\ /Num Stop Goal\end{tabular}                      & \texttt{per\_eps}                                                                             \\ \hline
Num Change Goal PerEps                                               & \begin{tabular}[c]{@{}l@{}}Average number of\\ changed goals per\\ episode.\end{tabular}                                                                                & \begin{tabular}[c]{@{}l@{}}Metrics\\ /Num Change Goal\end{tabular}                    & \texttt{per\_eps}                                                                                                                                         \\ \hline
\end{tabular}
}
\caption{\texttt{analysis.py} metrics.}
\label{tab:analysis-metrics}
\end{table}

\begin{table}[]
\resizebox{\linewidth}{!}{%
\begin{tabular}{|l|l|l|l|}
\hline
\textbf{CSV Metric ID}                                               & \textbf{Description}                                                                                                                                                    & \textbf{\begin{tabular}[c]{@{}l@{}}TensorBoard\\ Metric Path\end{tabular}}            & \textbf{\begin{tabular}[c]{@{}l@{}}Computation \\Method\end{tabular}}     

\\ \hline
GaveWayLocalAnyGoal                                                  & \begin{tabular}[c]{@{}l@{}}Conformity to the\\ $(L,A)$ give-way context.\end{tabular}                                                                                  & \begin{tabular}[c]{@{}l@{}}Via\\ \texttt{context\_conformance}\end{tabular}           & \texttt{context\_conformance}                                                                 \\ \hline
GaveWayGlobalAnyGoal                                                 & \begin{tabular}[c]{@{}l@{}}Conformity to the\\ $(G,A)$ give-way context.\end{tabular}                                                                                  & \begin{tabular}[c]{@{}l@{}}Via\\ \texttt{context\_conformance}\end{tabular}           & \texttt{context\_conformance}                                                                 \\ \hline
GaveWayLocalSameGoal                                                 & \begin{tabular}[c]{@{}l@{}}Conformity to the\\ $(L,S)$ give-way context.\end{tabular}                                                                                  & \begin{tabular}[c]{@{}l@{}}Via\\ \texttt{context\_conformance}\end{tabular}           & \texttt{context\_conformance}                                                                 \\ \hline
GaveWayGlobalSameGoal                                                & \begin{tabular}[c]{@{}l@{}}Conformity to the\\ $(G,S)$ give-way context.\end{tabular}                                                                                  & \begin{tabular}[c]{@{}l@{}}Via\\ \texttt{context\_conformance}\end{tabular}           & \texttt{context\_conformance}                                                                 \\ \hline
GaveWayNonLocalSameGoal                                              & \begin{tabular}[c]{@{}l@{}}Conformity to the\\ $(G^+,S)$ give-way context.\end{tabular}                                                                                & \begin{tabular}[c]{@{}l@{}}Via\\ \texttt{context\_conformance}\end{tabular}           & \texttt{context\_conformance}                                                                 \\ \hline
GaveWayNonLocalAnyGoal                                               & \begin{tabular}[c]{@{}l@{}}Conformity to the\\ $(G^+,A)$ give-way context.\end{tabular}                                                                                & \begin{tabular}[c]{@{}l@{}}Via\\ \texttt{context\_conformance}\end{tabular}           & \texttt{context\_conformance}     

\\ \hline
DeltaThetaAvg                                                        & \begin{tabular}[c]{@{}l@{}}Average value of\\ $\omega$ (angular velocity).\end{tabular}                                                                                 & \begin{tabular}[c]{@{}l@{}}Metrics\\ /DeltaThetaAvg\end{tabular}                      & \texttt{mean\_metric\_period}                                                                 \\ \hline
VelocityAvg                                                          & \begin{tabular}[c]{@{}l@{}}Average value of\\ $v_c$ (velocity).\end{tabular}                                                                                            & \begin{tabular}[c]{@{}l@{}}Metrics\\ /VelocityAvg\end{tabular}                        & \texttt{mean\_metric\_period}                                                                 \\ \hline
NearestCarDistAvg                                                    & \begin{tabular}[c]{@{}l@{}}Average distance\\ of the nearest car\\ to an agent.\end{tabular}                                                                            & \begin{tabular}[c]{@{}l@{}}Metrics\\ /NearestCarDistAvg\end{tabular}                  & \texttt{mean\_metric\_period}                                                                 \\ \hline
RatioExplorePerEps                                                   & \begin{tabular}[c]{@{}l@{}}Average ratio of\\ time-steps in which\\ the agents explore\\ per episode.\end{tabular}                                                      & \begin{tabular}[c]{@{}l@{}}Metrics\\ /RatioExplorePerEps\end{tabular}                 & \texttt{mean\_metric\_period}                                                                 \\ \hline
RatioGoalPerEps                                                      & $1 - \texttt{RatioExplorePerEps}$                                                                                                                                       & \begin{tabular}[c]{@{}l@{}}Metrics\\ /RatioGoalPerEps\end{tabular}                    & \texttt{mean\_metric\_period}                                                                 \\ \hline
RatioMoveTowardsPerEps                                               & \begin{tabular}[c]{@{}l@{}}Average ratio of\\ time-steps in which\\ the agents move\\ towards their goal\\ parking space\\ per episode.\end{tabular}                    & \begin{tabular}[c]{@{}l@{}}Metrics\\ /RatioMoveTowardsPerEps\end{tabular}             & \texttt{mean\_metric\_period}                                                                 \\ \hline
RatioMoveTowardsExploringPerEps                                      & \begin{tabular}[c]{@{}l@{}}Average ratio of\\ time-steps in which\\ the agents move\\ towards their nearest\\ parking space while\\ exploring per episode.\end{tabular} & \begin{tabular}[c]{@{}l@{}}Metrics\\ /RatioMoveTowards\\ ExploringPerEps\end{tabular} & \texttt{mean\_metric\_period}                                                                 \\ \hline
Park Velocity                                                        & \begin{tabular}[c]{@{}l@{}}Average velocity of\\ the agents upon\\ parking.\end{tabular}                                                                                & \begin{tabular}[c]{@{}l@{}}Metrics\\ /Park Velocity\end{tabular}                      & \texttt{mean\_metric\_period}                                                                 \\ \hline
\end{tabular}
}
\caption{\texttt{analysis.py} metrics (continued).}
\label{tab:analysis-metrics-2}
\end{table}

\begin{table}[]
\centering
\begin{tabular}{|l|l|}
\hline
\textbf{\begin{tabular}[c]{@{}l@{}}Argument\\ Index\end{tabular}} & \textbf{\begin{tabular}[c]{@{}l@{}}Argument Description\end{tabular}}                                                                                                                                                                           \\ \hline
1                                                                 & Path to the directory of models.                                                                                                                                                                                                                  \\ \hline
2                                                                 & \begin{tabular}[c]{@{}l@{}}Number of steps in the training period. \\ Metrics aren't collected in this period - \\ we only collect metrics after this period \\ (since after we're in the metric period).\end{tabular}                            \\ \hline
3                                                                 & \begin{tabular}[c]{@{}l@{}}Path to the output CSV file \\ containing the metrics\end{tabular}                                                                                                                                                     \\ \hline
4                                                                 & \begin{tabular}[c]{@{}l@{}}Expected number of models in the \\ directory of models. If the number of \\ models doesn't equal this value then \\ the script prematurely terminates since \\ the models haven't finished learning yet.\end{tabular} \\ \hline
5                                                                 & \begin{tabular}[c]{@{}l@{}}Paths to parameters in the\\ base PPO YAML configuration file\\ we wish to also record as a metric.\end{tabular}                                                                                                       \\ \hline
\end{tabular}
\caption{\texttt{analysis.py} arguments.}
\label{tab:analysis-args}
\end{table}

\begin{table}[]
\centering
\begin{tabular}{|l|l|}
\hline
\textbf{Argument}         & \textbf{\begin{tabular}[c]{@{}l@{}}Argument Description\end{tabular}}                                                                                                                                                       \\ \hline
\texttt{-{}-alpha}        & Q-Learning learning rate.                                                                                                                                                                                                     \\ \hline
\texttt{-{}-gamma}        & Q-Learning discount factor.                                                                                                                                                                                                   \\ \hline
\texttt{-{}-epsilon}      & Q-Learning epsilon greedy factor.                                                                                                                                                                                             \\ \hline
\texttt{-{}-episodes}     & Number of episodes to train.                                                                                                                                                                                                  \\ \hline
\texttt{-{}-episodeseval} & \begin{tabular}[c]{@{}l@{}}Number of episodes to evaluate the agent \\ ($\epsilon = 0$).\end{tabular}                                                                                                                         \\ \hline
\texttt{-{}-dump}         & \begin{tabular}[c]{@{}l@{}}Interval of number of episodes to \\ print the current results to \\ standard output.\end{tabular}                                                                                                 \\ \hline
\texttt{-{}-build}        & Directory of the Unity environment build.                                                                                                                                                                                     \\ \hline
\texttt{-{}-output}       & \begin{tabular}[c]{@{}l@{}}Directory to output the learned model \\ and metrics.\end{tabular}                                                                                                                                 \\ \hline
\texttt{-{}-workerid}     & \begin{tabular}[c]{@{}l@{}}Unique training ID, if running the program \\ in parallel.\end{tabular}                                                                                                                            \\ \hline
\texttt{-{}-alphamin}     & \begin{tabular}[c]{@{}l@{}}Minimum value of the learning rate. \\ Decays linearly to the value originally \\ from \texttt{-{}-alpha} over the number of training \\ episodes specified by \texttt{-{}-episodes}.\end{tabular} \\ \hline
\texttt{-{}-epsmin}       & \begin{tabular}[c]{@{}l@{}}Minimum value of epsilon. \\ Decays linearly to the value originally \\ from \texttt{-{}-epsilon} over the number of training \\ episodes specified by \texttt{-{}-episodes}.\end{tabular}         \\ \hline
\texttt{-{}-envconfig}    & \begin{tabular}[c]{@{}l@{}}JSON string containing the \\ \texttt{EnvironmentConfig}.\end{tabular}                                                                                                                             \\ \hline
\texttt{-{}-numagents}    & Number of agents.                                                                                                                                                                                                             \\ \hline
\end{tabular}
\caption{\texttt{iql\_train.py} arguments.}
\label{tab:iql-train-args}
\end{table}

\begin{table}[]
\centering
\begin{tabular}{|l|l|}
\hline
\textbf{Argument}      & \textbf{\begin{tabular}[c]{@{}l@{}}Argument Description\end{tabular}}                            \\ \hline
\texttt{-{}-modelpath} & \begin{tabular}[c]{@{}l@{}}Path to the pickled model file \\ (Q-Table) from training.\end{tabular} \\ \hline
\texttt{-{}-episodes}  & Number of episodes to run.                                                                         \\ \hline
\texttt{-{}-envconfig} & \begin{tabular}[c]{@{}l@{}}JSON string containing the \\ \texttt{EnvironmentConfig}.\end{tabular}  \\ \hline
\texttt{-{}-numagents} & Number of agents.                                                                                  \\ \hline
\texttt{-{}-build}     & Directory of the Unity environment build.                                                          \\ \hline
\end{tabular}
\caption{\texttt{iql\_run.py} arguments.}
\label{tab:iql-run-args}
\end{table}

\begin{table}[]
\centering
\begin{tabular}{|l|l|}
\hline
\textbf{\begin{tabular}[c]{@{}l@{}}Argument\\ Index\end{tabular}} & \textbf{Argument Description}                                                                                                                                                                                                                                                    \\ \hline
First $n$                                                         & \begin{tabular}[c]{@{}l@{}}Paths to the $n$ pickled metrics files \\ (cumulative rewards for each episode for all agents) \\ for the $n$ runs.  \\ Each path is suffixed by \texttt{\&<legend>} where \\ \texttt{<legend>} is the legend for the line for the model.\end{tabular} \\ \hline
$n+1$                                                            & \begin{tabular}[c]{@{}l@{}}Size of the window to take the rolling average over \\ in the cumulative reward graph.\end{tabular}                                                                                                                                                   \\ \hline
$n+2$                                                             & \begin{tabular}[c]{@{}l@{}}Number of episodes used for training, used to plot the \\ red dashed line indicating when the agents entered \\ their evaluation zone.\end{tabular}                                                                                                   \\ \hline
\end{tabular}
\caption{\texttt{plot.py} arguments.}
\label{tab:plot-args}
\end{table}

\begin{table}[]
\centering
\begin{tabular}{|l|l|}
\hline
\textbf{Parameter}                    & \textbf{Value}                                                         \\ \hline
\texttt{\_positionGranularity}        & 4                                                                      \\ \hline
\texttt{\_velocityGranularity}        & 4                                                                      \\ \hline
\texttt{\_thetaGranularity}           & 24                                                                     \\ \hline
\texttt{\_maxVelocityMagnitude}       & 4                                                                      \\ \hline
\texttt{\_minVelocityMagnitude}       & 2                                                                      \\ \hline
\texttt{\_maxDeltaVMagnitude}         & 2                                                                      \\ \hline
\texttt{\_minDeltaVMagnitude}         & 2                                                                      \\ \hline
\texttt{\_maxDeltaThetaMagnitude}     & 3                                                                      \\ \hline
\texttt{\_normalizeObs}               & true                                                                   \\ \hline
\texttt{\_numParkedCars}              & 16                                                                     \\ \hline
\texttt{\_obsDist}                    & true                                                                   \\ \hline
\texttt{\_obsRings}                   & true                                                                   \\ \hline
\texttt{\_ringMaxNumObjTrack}         & 1                                                                      \\ \hline
\texttt{ringDiams}                    & [11]                                                                   \\ \hline
\texttt{\_rd0}                        & 11                                                                     \\ \hline
\texttt{\_obsGoalDeltaPose}           & true                                                                   \\ \hline
\texttt{\_ringOnlyWall}               & true                                                                   \\ \hline
\texttt{\_obsNearbyCars}              & true                                                                   \\ \hline
\texttt{\_obsNearbyCarsCount}         & 1                                                                      \\ \hline
\texttt{\_obsNearbyCarsDiameter}      & \begin{tabular}[c]{@{}l@{}}300 \\ (no constrained \\ FOV)\end{tabular} \\ \hline
\texttt{spawnCloseRatio}              & 0.2                                                                    \\ \hline
\texttt{carSpawnMinDistance}          & 210                                                                    \\ \hline
\texttt{\_maxSteps}                   & 200                                                                    \\ \hline
\texttt{\_obsNearbyCarsGoal}          & true                                                                   \\ \hline
\texttt{\_obsNearbyCarsVelocity}      & true                                                                   \\ \hline
\texttt{spawnCrashRatio}              & 0.2                                                                    \\ \hline
\texttt{spawnCrashTargetAgentMinDist} & 210                                                                    \\ \hline
\texttt{rewTimeSum}                   & 0.2                                                                    \\ \hline
\texttt{rewReachGoal}                 & 1                                                                      \\ \hline
\texttt{rewCrash}                     & 1                                                                      \\ \hline
\texttt{rewReverseSum}                & 0.1                                                                    \\ \hline
\texttt{rewDistSum}                   & 0.15                                                                   \\ \hline
\texttt{rewDeltaThetaSum}                   & 0.05                                                                   \\ \hline
\end{tabular}
\caption{Values in the \texttt{EnvironmentConfig} for PPO experiments with fixed goals.}
\label{tab:ppo-fixed-goals-env-config}
\end{table}

\begin{table}[]
\centering
\begin{tabular}{|l|l|}
\hline
\textbf{Parameter}                    & \textbf{Value}                                                                                           \\ \hline
\texttt{\_dynamicGoals}               & true                                                                                                     \\ \hline
\texttt{\_obsNearbyParkingSpotsCount} & 1                                                                                                        \\ \hline
\texttt{rewDeltaGoalContinueExp}      & -0.002                                                                                                   \\ \hline
\texttt{rewDeltaGoalDiffGoal}         & -0.05                                                                                                    \\ \hline
\texttt{\_rewDeltaGoalStopGoal}       & \begin{tabular}[c]{@{}l@{}}-1 \\ (compute via\\ \cref{eq:rew-change-goal-equiv-simplified})\end{tabular} \\ \hline
\end{tabular}
\caption{Values in the \texttt{EnvironmentConfig} for PPO experiments with dynamic goals.}
\label{tab:ppo-dynamic-goals-env-config}
\end{table}

\begin{table}[]
\centering
\begin{tabular}{|l|l|}
\hline
\textbf{Argument Index} & \textbf{Description}                                                                                                                                                                                                                                                                                                                                                                                                                                                                                                   \\ \hline
1                       & \texttt{mlagents-learn} command.                                                                                                                                                                                                                                                                                                                                                                                                                                                                                       \\ \hline
2                       & \begin{tabular}[c]{@{}l@{}}Base PPO YAML configuration file to base the \\ grid-search upon,  containing the base PPO \\ hyperparameters and environment parameters. \\ We inject the grid-searched parameters into \\ copies of it.\end{tabular}                                                                                                                                                                                                                                                                      \\ \hline
3                       & \begin{tabular}[c]{@{}l@{}}Base \texttt{.slurm} job file to base the grid-search upon. \\ We inject the different commands in each job into copies \\ of it corresponding to each parameter combination.\end{tabular}                                                                                                                                                                                                                                                                                                  \\ \hline
4                       & \begin{tabular}[c]{@{}l@{}}Base port. Each run uses the ports sequentially \\ greater than the base port.\end{tabular}                                                                                                                                                                                                                                                                                                                                                                                                 \\ \hline
5                       & \begin{tabular}[c]{@{}l@{}}Base run id, which is the prefix of the run ID \\ for each generated run.\end{tabular}                                                                                                                                                                                                                                                                                                                                                                                                      \\ \hline
6 to last               & \begin{tabular}[c]{@{}l@{}}The parameters in the PPO YAML configuration \\ to grid-search over,  and their respective ranges.\\ Each argument is of the form: \texttt{<yaml key>=<values>},\\ where \texttt{<yaml key>} is a key in the YAML file,\\ and \texttt{<values>} is of the form \texttt{<type>[$v_1, ... ,v_n$]}\\ where \texttt{type} is one of \texttt{f(loat)}, \texttt{i(nt)}, \texttt{s(tring)} or \texttt{b(ool)},\\ and $v_i$ are the values, all of type \texttt{<type>}.\end{tabular} \\ \hline
\end{tabular}
\caption{\texttt{gridsearch\_ppo.py} arguments.}
\label{tab:gridsearch-arguments}
\end{table}

\end{document}